\renewcommand*{\backref}[1]{}
\renewcommand*{\backrefalt}[4]{%
    \ifcase #1%
          \or [Cited on page~#2.]%
          \else [Cited on pages~#2.]%
    \fi%
    }
\crefname{figure}{Fig.}{Figs.}
\crefname{equation}{Eq.}{Eqs.}
\crefname{definition}{Defn.}{Defns.}
\crefname{corollary}{Corollary}{Corollaries}
\crefname{proposition}{Prop.}{Props.}
\crefname{theorem}{Thm.}{Thms.}
\crefname{remark}{Remark}{Remarks}
\crefname{principle}{Principle}{Principles}
\crefname{lemma}{Lemma}{Lemmas}
\crefname{claim}{Claim}{Claims}
\crefname{table}{Table}{Tabs.}
\crefname{section}{\S}{\S\S}
\crefname{subsection}{\S}{\S\S}
\crefname{subsubsection}{\S}{\S\S}
\crefname{assumption}{Assumption}{Assumptions}
\crefname{appendix}{App.}{Apps.}
\crefname{algorithm}{Alg.}{Algs.}
\setlist[itemize]{leftmargin=*,itemsep=0.25em}
\setlist[enumerate]{leftmargin=*,itemsep=0.25em}
\newcommand*{\boldcheckmark}{%
  \textpdfrender{
    TextRenderingMode=FillStroke,
    LineWidth=.5pt, %
  }{\ding{51}}%
}
\newcommand{\cmark}{\textcolor{red!40!green!100}{\boldcheckmark}} %
\newcommand{\ymark}{\textcolor{yellow}{\ding{51}}} %
\newcommand{\xmark}{\textcolor{red}{\ding{55}}} %
\def\checkmark{\tikz\fill[scale=0.4](0,.35) -- (.25,0) -- (1,.7) -- (.25,.15) -- cycle;} 
\newtheoremstyle{slplain}%
  {.4\baselineskip\@plus.1\baselineskip\@minus.1\baselineskip}%
  {.3\baselineskip\@plus.1\baselineskip\@minus.1\baselineskip}%
  {\itshape}%
  {}%
  {\bfseries}%
  {.}%
  { }%
  {}%
\theoremstyle{slplain} %
\newtheorem*{definition*}{Definition}
\newtheorem*{theorem*}{Theorem}
\newtheorem{theorem}{Theorem}[section]
\newtheorem{lemma}[theorem]{Lemma}
\newtheorem{proposition}[theorem]{Proposition}
\newtheorem{definition}[theorem]{Definition}
\newtheorem*{rep@theorem}{\rep@title}
\newcommand{\newreptheorem}[2]{%
\newenvironment{rep#1}[1]{%
 \def\rep@title{#2 \ref{##1}}%
 \begin{rep@theorem}}%
 {\end{rep@theorem}}}
\theoremstyle{definition}
\newtheorem{example}[theorem]{Example}
\theoremstyle{plain} %
\numberwithin{equation}{section}
\newtheoremstyle{etplain}%
  {.0\baselineskip\@plus.1\baselineskip\@minus.1\baselineskip}%
  {.0\baselineskip\@plus.1\baselineskip\@minus.1\baselineskip}%
  {\itshape}%
  {}%
  {\bfseries}%
  {.}%
  { }%
  {}%
\newcommand{\R}{\mathbb{R}}
\newcommand{\Prob}{\mathbb{P}}
\renewcommand\bar\overline
\newcommand\Etr{\mathcal{E}_\text{tr}}
\newcommand\Eall{{\mathcal{E}_\text{all}}}
\newcommand{\bm}{\mathbf} 
\newcommand{\logit}{\operatorname{logit}}
\DeclareMathOperator*{\argmax}{arg\,max}
\DeclareMathOperator*{\argmin}{arg\,min}
\newcolumntype{C}[1]{>{\centering\let\newline\\\arraybackslash\hspace{0pt}}m{#1}}
\DeclareMathOperator{\E}{\mathbb{E}}
\newcommand{\indep}{\perp \!\!\! \perp}
\newcommand{\notindep}{\centernot{\indep}}
\newcommand{\calL}{\ensuremath{\mathcal{L}}}
\newcommand{\calX}{\ensuremath{\mathcal{X}}}
\newcommand{\calY}{\ensuremath{\mathcal{Y}}}
\newcommand{\bbE}{\ensuremath{\mathbb{E}}}
\newcommand{\bbN}{\ensuremath{\mathbb{N}}}
\def\nd/{\textsuperscript{nd}}
\def\rd/{\textsuperscript{rd}}
\def\th/{\textsuperscript{th}}
\def\nnil{\nil}
\newcounter{prob}
\newcounter{dual}
\newenvironment{prob*}{%
	\csname equation*\endcsname%
	\aligned%
}{%
	\endaligned%
	\csname endequation*\endcsname%
}
\newenvironment{customthm}[1]
  {\innercustomthm}
  {\endinnercustomthm}
\newenvironment{customlemma}[1]
  {\innercustomlemma}
  {\endinnercustomlemma}
\renewcommand{\hat}{\widehat}
\renewcommand{\tilde}{\widetilde}
\newcommand{\changelinkcolor}[1]{\hypersetup{linkcolor=#1}}
\title{Spuriosity Didn't Kill the Classifier: Using Invariant Predictions to Harness Spurious Features}
\author{%
    \textbf{Cian Eastwood}\thanks{Equal contribution. Correspondence to \texttt{c.eastwood@ed.ac.uk} or \texttt{shashankssingh44@gmail.com}.}\, $^{1,2}$
  \qquad \textbf{Shashank Singh}$^{* 1}$ \\ \vspace{-3mm}
  \textbf{Andrei L. Nicolicioiu}$^{1}$
  \quad \textbf{Marin Vlastelica}$^{1}$
  \quad \textbf{Julius von K\"ugelgen}$^{1,3}$
  \quad \textbf{Bernhard Sch\"olkopf}$^{\, 1}$ \vspace{3mm}\\ 
 $^{1}$ Max Planck Institute for Intelligent Systems, T\"ubingen \\
 $^{2}$ University of Edinburgh \quad $^{3}$ University of Cambridge
}
\begin{document}

\doparttoc%
\faketableofcontents%

\maketitle

\begin{abstract}
    To avoid failures on out-of-distribution data, recent works have sought to extract features that have an invariant or \textit{stable} relationship with the label across domains, discarding ``spurious'' or \textit{unstable} features whose relationship with the label changes across domains. However, unstable features often carry \textit{complementary} information
    that could boost performance if used correctly in the test domain. 
    In this work, we show how this can be done \textit{without test-domain labels}.
    In particular, we prove that pseudo-labels based on stable features provide sufficient guidance for doing so, provided that stable and unstable features are conditionally independent given the label. Based on this theoretical insight, we propose Stable Feature Boosting (SFB), an algorithm for: (i)~learning a predictor that separates stable and conditionally-independent unstable features; and (ii) using the stable-feature predictions to adapt the unstable-feature predictions in the test domain. Theoretically, we prove that SFB can learn an asymptotically-optimal predictor without test-domain labels. Empirically, we demonstrate the effectiveness of SFB on real and synthetic data. 
\end{abstract}

\section{Introduction}
\label{sec:intro}

\looseness-1 Machine learning systems can be sensitive to distribution shift~\citep{hendrycks2019benchmarking}. Often, this sensitivity is due to a reliance on ``spurious'' features whose relationship with the label changes across domains, ultimately leading to degraded performance in the test domain of interest~\citep{geirhos2020shorcut}.
To avoid this pitfall, recent works on domain or out-of-distribution~(OOD) generalization have sought predictors which only make use of features that have a \textit{stable} or invariant relationship with the label across domains, discarding the spurious or \textit{unstable} features~\citep{peters2016causal,arjovsky2020invariant, krueger21rex, eastwood2022probable}.
However, despite their instability, spurious features can often provide additional or \textit{complementary} information about the target label.
Thus, if a predictor could be adjusted to use spurious features optimally in the test domain, it would boost performance substantially. That is, perhaps we don't need to discard spurious features at all but rather \textit{use them in the right way}.

\begin{figure}[tb]\vspace{-3mm}
    \centering
    \begin{subfigure}[b]{0.275\linewidth}
        \centering
        \includegraphics[width=0.9\linewidth]{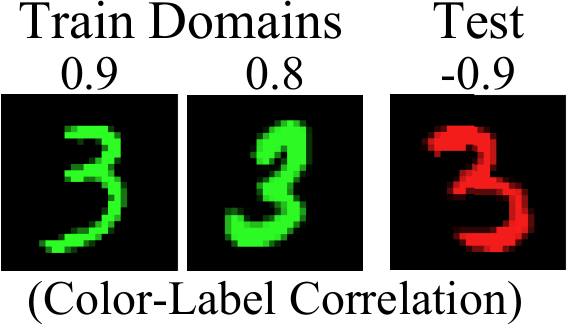}\vspace{4mm}
        \caption{}
        \label{fig:motiv-example:dataset}
    \end{subfigure}\hfill
    \begin{subfigure}[b]{0.4\linewidth}
        \centering
        \includegraphics[width=0.95\linewidth]{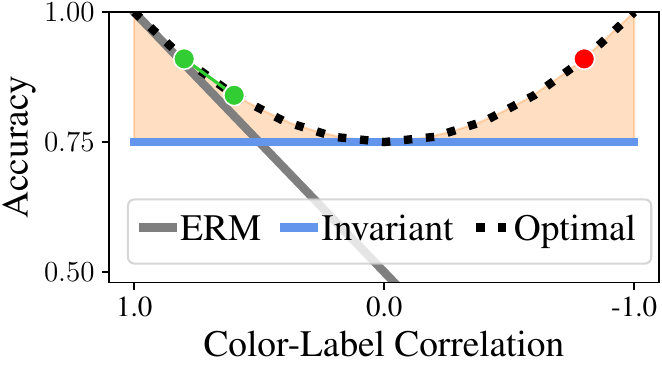}\vspace{-1mm}
        \caption{}
        \label{fig:motiv-example:ideal-results}
    \end{subfigure}\hfill
    \begin{subfigure}[b]{0.275\linewidth}
        \centering
        \includegraphics[width=0.6\linewidth]{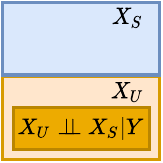}\vspace{4mm}
        \caption{}
        \label{fig:motiv-example:mutual_info}
    \end{subfigure}\vspace{-2mm}%
    \caption{\small \textbf{Invariant (stable) and spurious (unstable) features.} (a) Illustrative images from \texttt{CMNIST}~\citep{arjovsky2020invariant}.
    (b) \texttt{CMNIST} accuracies (y-axis) over test domains of decreasing color-label correlation (x-axis). The `Oracle' uses both invariant (shape) \textit{and} spurious (color) features optimally in the test domain, boosting performance over an invariant model (orange region).
    We show how this can be done \textit{without test-domain labels}. (c) Generally, invariant models use only the \textit{stable} component $X_S$ of $X$, discarding the spurious or \textit{unstable} component $X_U$. \looseness-1 We prove that predictions based on $X_S$ can be used to safely harness a sub-component of $X_U$ (dark-orange region).
    }\label{fig:motiv-example}
    \vspace{-1mm}
\end{figure}

\looseness-1 As a simple but illustrative example, consider the \texttt{ColorMNIST} or \texttt{CMNIST} dataset~\citep{arjovsky2020invariant}. This transforms the original \texttt{MNIST} dataset into a binary classification task (digit in 0--4 or 5--9) and then: (i) flips the label with probability 0.25, meaning that, across all 3 domains, digit shape correctly determines the label with probability 0.75; and (ii) colorizes the digit such that digit color (red or green) is a more informative but spurious feature (see \Cref{fig:motiv-example:dataset}).
Prior work focused on learning an invariant predictor that uses only shape and avoids using color---a spurious feature whose relationship with the label changes across domains.
However, as shown in \cref{fig:motiv-example:ideal-results}, the invariant predictor is suboptimal in test domains where color can be used in a domain-specific manner to improve performance.
We thus ask: when and how can such informative but spurious features be safely harnessed \textit{without labels}?

\looseness-1 Our main contribution lies in answering this question, showing when and how it is possible to safely harness
spurious or \textit{unstable}
features without test-domain labels. In particular, we prove that predictions based on stable features provide sufficient guidance for doing so, provided that stable and unstable features are conditionally independent given the label (see \cref{fig:motiv-example:mutual_info}). 
\textbf{Structure and contributions.} The rest of this paper is organized as follows. We first discuss related work in \cref{sec:related_work}, providing context and high-level motivation for our approach. In \cref{sec:inv_compl}, we then explain how stable and unstable features can be extracted, how unstable features can be harnessed \textit{with} test-domain labels, and the questions/challenges that arise when trying to harness unstable features \textit{without} test-domain labels. In \cref{sec:theoretical_analysis}, we present our main theoretical contributions which provide precise answers to these questions, before using 
these insights to propose the Stable Feature Boosting~(SFB) algorithm in \cref{sec:algorithm}.
In \cref{sec:experiments}, we present our experimental results, before ending with a discussion and concluding remarks in \cref{sec:discussion}.
Our contributions can be summarized as follows:
\begin{itemize}[]
    \item \textbf{Algorithmic:} We propose Stable Feature Boosting~(SFB), the first algorithm for using invariant predictions to safely harness spurious features \textit{without test-domain labels}.
    \item \looseness-1 \textbf{Theoretical:} SFB is grounded in a novel theoretical result (Thm~\ref{thm:OOV}) giving sufficient conditions for provable test-domain adaptation without labels. Under these conditions, Thm~\ref{thm:consistency} shows that, given enough unlabeled data, SFB learns the Bayes-optimal adapted classifier in the test domain.
    \item \textbf{Experimental:} Our experiments on synthetic and real-world data demonstrate the effectiveness of SFB---even in scenarios where it is unclear if its assumptions are fully satisfied.
\end{itemize}

\section{Related Work}
\label{sec:related_work}

\textbf{Domain generalization, robustness and invariant prediction.}
A fundamental starting point for work in domain generalization is the observation that certain ``stable'' features, often direct causes of the label, may have an invariant relationship with the label across domains~\citep{peters2016causal,arjovsky2020invariant,veitch2021counterfactual,scholkopf2022causality,makar2022causally,zheng2022causally, eastwood2022source}.
However, such stable or invariant predictors often discard highly informative but unstable information.
\citet{rothenhausler2021anchor} show that we may need to trade off stability and predictiveness, while \citet{eastwood2022probable} seek such a trade-off via an interpretable probability-of-generalization parameter. The current work is motivated by the idea that one might avoid such a trade-off by changing how unstable features are used at test time, rather than discarding them at training time.

\textbf{Test-domain adaptation \textit{without labels} (unsupervised domain adaptation).}
In the source-free and test-time domain adaptation literature, it is common to adapt to new domains using a model's own pseudo-labels~\citep{galstyan2008empirical,lee2013pseudo, liang2020shot, wang2021tent, iwasawa2021testtime}---see \citet{rusak2022if} for a recent review. In contrast, we: (i) use one (stable) model to provide reliable/robust pseudo-labels and another (unstable) model to adapt domain-specific information; and (ii) propose a bias correction step that provably ensures an accurate, well-calibrated unstable model ($\Pr[Y|X_U]$) as well as an optimal joint/combined model ($\Pr[Y|X_S, X_U]$). Beyond this literature, \citet{bui2021exploiting} propose a meta-learning approach for exploiting unstable/domain-specific features. However, they use unstable features \textit{in the same way} in the test domain, which, by definition, is not robust and can degrade performance.
\citet{Sun2022beyond} share the goal of exploiting unstable features to go ``beyond invariance''. However, in contrast to our approach, they require labels for the unstable features (rarely available) and only address label shifts.

\textbf{Test-domain adaptation \textit{with labels} (few-shot fine-tuning).} Fine-tuning part of a model using a small number of labeled test-domain examples is a common way to deal with distribution shift~\citep{fei2006one, finn2017model, eastwood2021unitlevel}. More recently, it has been shown that simply retraining the last layer of an ERM-trained model outperforms more robust feature-learning methods on spurious correlation benchmarks~\citep{rosenfeld2022domain,kirichenko2022last, zhang2022rich}. Similar to our approach, \citet{jiang2022invariant} separate stable and conditionally-independent unstable features and then adapt their use of the latter in the test domain. However, in contrast to our approach, theirs requires test-domain labels. In addition, they assume data is drawn from an anti-causal generative model, which is strictly stronger than our ``complementarity'' assumption (see \cref{sec:theoretical_analysis}).

\cref{tab:related_work} summarizes related work while \cref{sec:app:related_work} discusses further related work.

\begin{table}[t]
    \centering
    \caption{\small \textbf{Comparison with related work.}
    $^*$QRM~\citep{eastwood2022probable} uses an interpretable hyperparameter $\alpha \in [0, 1]$ to balance the probability of robust generalization and using more information from $X$.
    }
    \resizebox{0.825\linewidth}{!}{
    \begin{tabular}{lccc|cc}
        \toprule
     \multirow{2}{*}{\textbf{Method}} & \multicolumn{3}{c}{\textbf{Components of $X$ Used}} & \\
     \cmidrule(l){2-4}                                         & \textbf{Stable}  & \textbf{Complementary}  & \textbf{All}    & \textbf{Robust} & \textbf{No test-domain labels} \\
         \midrule
         ERM~\citep{vapnik1991principles}                & \cmark  & \cmark         & \cmark & \xmark & \cmark \\
         IRM~\citep{arjovsky2020invariant}               & \cmark  & \xmark         & \xmark & \cmark & \cmark \\
         QRM~\citep{eastwood2022probable}                & \cmark  & \ymark$^*$     & \ymark$^*$ & \ymark$^*$ & \cmark \\
         DARE~\citep{rosenfeld2022domain} & \cmark  & \cmark         & \cmark & \cmark & \xmark \\
         ACTIR~\citep{jiang2022invariant}                & \cmark  & \cmark         & \xmark & \cmark & \xmark \\
         \midrule
         SFB (\textbf{Ours})                & \cmark  & \cmark         & \xmark & \cmark & \cmark\\
         \bottomrule
    \end{tabular}
    }
    \label{tab:related_work}
\end{table}

\section{Problem Setup: Extracting and Harnessing Unstable Features}
\label{sec:inv_compl}

\textbf{Setup.} We consider the problem of domain generalization~(DG)~\citep{blanchard2011generalizing,muandet2013domain,gulrajani2020search}
where predictors are trained on data from multiple training domains and with the goal of performing well on data from unseen test domains. 
More formally, we consider datasets $D^e = \{(X^e_i, Y^e_i)\}_{i=1}^{n_e}$ collected from $m$ different training domains or \textit{environments} $\Etr:= \{E_1, \dots, E_m\}$, with each dataset $D^e$ containing data pairs $(X^e_i, Y^e_i)$ sampled i.i.d.\ from $\Prob(X^e,Y^e)$.\footnote{We drop the domain superscript $e$ when referring to random variables from any environment.} 
The goal is then to learn a predictor $f(X)$ that performs well on a larger set $\Eall \supset \Etr$ of possible domains.

\textbf{Average performance: use all features.} The first approaches to DG sought predictors that perform well \textit{on average} over domains~\citep{blanchard2011generalizing, muandet2013domain} using empirical risk minimization~(ERM, \citep{Vapnik98}). However, predictors that perform well on average provably lack robustness~\citep{nagarajan2021understanding,rojas2018invariant}, potentially performing quite poorly on large subsets of $\Eall$. In particular, minimizing the average error leads predictors to make use of any features that are informative about the label (on average), including ``spurious'' or ``shortcut''~\citep{geirhos2020shorcut} features whose relationship with the label is subject to change across domains. In test domains where these feature-label relationships change in new or more severe ways than observed during training, this usually leads to significant performance drops or even complete failure~\citep{zech2018variable, beery2018recognition}.

\textbf{Worst-case or robust performance: use only stable features.} To improve robustness, subsequent works sought predictors that only use \textit{stable or invariant} features, i.e., those that have a stable or invariant
relationship with the label across domains~\citep{peters2016causal,arjovsky2020invariant,puli2021out,wald2021calibration,shi2021gradient}. 
For example, \citet{arjovsky2020invariant} do so
by enforcing that the classifier on top of these features is optimal for all domains simultaneously.
We henceforth use \textit{stable features} and $X_S$ to refer to these features, and stable predictors to refer to predictors which use only these features. Analogously, we use \textit{unstable features} and $X_U$ to refer to features with an unstable or ``spurious'' relationship with the label across domains. 
Note that $X_S$ and $X_U$ partition the components of $X$ which are informative about $Y$, as depicted in \cref{fig:motiv-example:mutual_info}, and that formal definitions of $X_S$ and $X_U$ are provided in \cref{sec:theoretical_analysis}.

\subsection{Harnessing unstable features \textit{with labels}}
\label{sec:inv_compl:with_labels}
A stable predictor $f_S$ is unlikely to be the best predictor in any given domain. As illustrated in~\Cref{fig:motiv-example:ideal-results}, this is because it excludes unstable features $X_U$ which are informative about $Y$ and can boost performance \textit{if used in an appropriate, domain-specific manner}. \looseness-1 Assuming we can indeed learn a stable predictor with prior methods, 
we start by showing how $X_U$ can be harnessed \textit{with test-domain labels}.

\textbf{Boosting the stable predictor.}
We describe boosted joint predictions $f^e(X)$ in domain $e$ as some combination $C$ of stable predictions $f_S(X)$ and domain-specific unstable predictions $f^e_U(X)$, i.e., $f^e(X) = C(f_S(X), f_U^e(X))$. To allow us to adapt only the $X_U$-$Y$ relation, we decompose the stable $f_S(X) = h_S(\Phi_S(X))$ and unstable $f^e_U(X) = h^e_U(\Phi_U(X))$ predictions into feature extractors $\Phi$ and classifiers $h$. $\Phi_S$ extracts stable components $X_S = \Phi_S(X)$ of $X$, $\Phi_U$ extracts unstable components $X_U = \Phi_U(X)$ of $X$, $h_S$ is a classifier learned on top of $\Phi_S$ (shared across domains), and $h_U^e$ is a \textit{domain-specific} unstable classifier learned on top of $\Phi_U$ (one per domain). Putting these together,
\begin{align}
    f^e(X) = C(f_S(X), f_U^e(X))
    \label{eq:boosting:2}
    &= C(h_S(\Phi_S(X)), h_U^e(\Phi_U(X)))
    = C(h_S(X_S), h_U^e(X_U)), 
\end{align}
where $C:[0, 1] \times [0, 1] \to [0, 1]$ is a \textit{combination function} that combines the stable and unstable predictions. For example, \citet[Eq.~2.1]{jiang2022invariant} add stable $p_S$ and unstable $p_U$ predictions in logit space,
i.e., $C(p_S, p_U)\! =\! \sigma(\logit(p_S) + \logit(p_U))$.
Since it is unclear, \textit{a priori}, how to choose $C$, we will leave it unspecified until \cref{thm:OOV} in \cref{sec:theoretical_analysis}, where we derive a principled choice.

\textbf{Adapting with labels.} Given a new domain $e$ and labels $Y^e$, we can boost performance by adapting $h^e_U$. Specifically, letting $\ell : \calY \times \calY \to \R$ be a loss function (e.g., cross-entropy) and $R^e(f) = \E_{(X, Y)} \left[ \ell(Y, f(X)) \middle| E = e \right]$ the risk of predictor $f: \calX \to \calY$ in domain $e$, we can adapt $h^e_U$ to solve:
\begin{equation}\label{eq:boost_joint_loss}
    \min_{h_U} R^e(C(h_S \circ \Phi_S, h_U \circ \Phi_U))
\end{equation}\vspace{-2mm}

\subsection{Harnessing unstable features \textit{without labels}}
\label{sec:inv_compl:without_labels}
We now consider the main question of this work---can we reliably harness $X_U$ \textit{without} test-domain labels? 
We could, of course, simply select a \textit{fixed} unstable classifier $h^e_U$ 
by relying solely on
the training domains (e.g., 
by minimizing
average error), and hope that this works for the test-domain $X_U$-$Y$ relation. However, by definition of $X_U$ being unstable, this is clearly not a robust or reliable approach---the focus of our efforts in this work, as illustrated in \cref{tab:related_work}. As in~\cref{sec:inv_compl:with_labels}, we assume that we are able to learn a stable predictor $f_S$ using prior methods, e.g., IRM~\citep{arjovsky2020invariant} or QRM~\citep{eastwood2022probable}.

\textbf{From stable predictions to robust pseudo-labels.} \looseness-1 While we don't have labels in the test domain, we \textit{do} have stable predictions.
By definition, these are imperfect (i.e., \textit{noisy}) but robust, and can be used to form \textit{pseudo-labels} $\hat{Y}_i = \argmax_j (f_S(X_i))_j$, with $(f_S(X_i))_j \approx \Pr[Y_i = j|X_S]$ denoting the $j$\textsuperscript{th} entry of the stable prediction for $X_i$.  Can we somehow use these noisy but robust pseudo-labels
to guide our updating of $h^e_U$, and, ultimately, our use of $X_U$ in the test domain? 

\textbf{From joint to unstable-only risk.}
If we simply use our robust pseudo-labels as if they were true labels---updating $h^e_U$ to minimize the joint risk as in~\cref{eq:boost_joint_loss}---we arrive at trivial solutions since $f_S$ already predicts its own pseudo-labels with $100\%$ accuracy. For example, if we follow \cite[Eq.~2.1]{jiang2022invariant} and use the combination function $C(p_S, p_U) = \sigma(\logit(p_S) + \logit(p_U))$, then the trivial solution $\logit (h^e_U(\cdot))\! =\! 0$ achieves 100\% accuracy (and minimizes cross-entropy; see \Cref{proposition:trivial_cross_entropy_solution} of~\Cref{app:supplementary_results}). Thus, we cannot minimize a joint loss involving $f_S$'s predictions when using $f_S$'s pseudo-labels. A sensible alternative is to
update $h^e_U$ to minimize the \emph{unstable-only risk} $R^e(h^e_U \circ \Phi_U)$.

\textbf{More questions than answers.} While this new procedure \textit{could} work, it raises questions about \textit{when} it will work, or, more precisely, the conditions under which it can be used to safely harness $X_U$. We now summarise these questions before addressing them in \cref{sec:theoretical_analysis}:
\begin{enumerate}[topsep=0pt]
    \item \textbf{Does it make sense to minimize the unstable-only risk?} In particular, when can we minimize the unstable-predictor risk
    \textit{alone} or separately, and then arrive at the optimal joint predictor? This cannot always work; e.g., for independent $X_S,X_U \sim \operatorname{Bernoulli}(1/2)$ and $Y = X_S$ XOR $X_U$, $Y$ is independent of each of $X_S$ and $X_U$ and hence cannot be predicted from either alone. 
    \item \textbf{How should we combine predictions?}  Is there a principled choice for the combination function $C$ in \cref{eq:boosting:2}? In particular, is there a $C$ that correctly weights stable and unstable predictions in the test domain? As $X_U$ could be very strongly or very weakly predictive of $Y$ in the test domain, this seems a difficult task. Intuitively, correctly weighting stable and unstable predictions requires them to be properly calibrated: do we have any reason to believe that, after training on $f_S$'s pseudo-labels, $h_U^e$ will be properly calibrated in the test domain?
    \item \textbf{Can the student outperform the teacher?} Stable predictions likely make mistakes---indeed, this is the motivation for trying to improve them.
    Is it possible to correct these mistakes with $X_U$?
    Is it possible to learn an unstable ``student'' predictor that outperforms its own supervision signal or ``teacher''?
    Perhaps surprisingly, we show that, for certain types of features, the answer is yes. In fact, even a very weak stable predictor, with performance just above chance, can be used to learn an \emph{optimal} 
    unstable classifier in the test domain given enough unlabeled data.
\end{enumerate}

\section{Theory: When Can We Safely Harness Unstable Features Without Labels?}
\label{sec:theoretical_analysis}
Suppose we have already identified a stable feature $X_S$ and a potentially unstable feature $X_U$
(we will return to the question of how to learn/extract $X_S$ and $X_U$ themselves in \cref{sec:algorithm}).
In this section, we analyze the problem of using $X_S$ to leverage $X_U$ in the test domain without labels. We first reduce this to a special case of the so-called ``marginal problem'' in probability theory, i.e., the problem of identifying a joint distribution based on information about its marginals. In the special case where two variables are conditionally independent given a third, we show this problem can be solved exactly. This solution, which may be of interest beyond the context of domain generalization/adaptation, motivates our test-domain adaptation algorithm (\cref{alg:unsupervised_domain_adaptation}), and forms the basis of \Cref{thm:consistency} which shows that \cref{alg:unsupervised_domain_adaptation} converges to the best possible classifier given enough unlabeled data.

We first pose a population-level model of our domain generalization setup.
Let $E$ be a random variable denoting the \emph{environment}. Given an environment $E$, we have that the stable feature $X_S$, unstable feature $X_U$ and label $Y$ are distributed according to $P_{X_S,X_U,Y|E}$. We can now formalize the three key assumptions underlying our approach, starting with the notion of a stable feature, motivated in \cref{sec:inv_compl}:
\begin{definition}[Stable and Unstable Features]
    $X_S$ is a \emph{stable} feature with respect to $Y$ if $P_{Y|X_S}$ does not depend on $E$; equivalently, if $Y$ and $E$ are conditionally independent given $X_S$ ($Y \indep E | X_S$).
    \looseness-1 Conversely, 
    $X_U$ is an \emph{unstable} feature with respect to $Y$ if $P_{Y|X_U}$ depends on $E$; equivalently, if $Y$ and $E$ are conditionally dependent given $X_U$ ($Y \notindep E | X_U$).
    \label{def:stable_features}
\end{definition}
Next, we state our complementarity assumption, which we will show justifies the approach of separately learning the relationships $X_S$-$Y$ and $X_U$-$Y$ and then combining them:
\begin{definition}[Complementary Features]
    $X_S$ and $X_U$ are \emph{complementary} features with respect to $Y$ if $X_S \indep X_U | (Y, E)$; i.e., if $X_S$ and $X_U$ share no redundant information beyond $Y$ and $E$.
    \label{def:complementary_features}
\end{definition}
Finally, to provide a useful signal for test-domain adaptation, the stable feature needs to help predict the label in the test domain. Formally, we assume:
\begin{definition}[Informative Feature]
    $X_S$ is said to be \emph{informative} of $Y$ in environment $E$ if $X_S \notindep Y | E$; i.e., $X_S$ is predictive of $Y$ within the environment $E$.
    \label{def:informative_features}
\end{definition}

We will discuss the roles of these assumptions after stating our main result (\cref{thm:OOV}) that uses them. 
To keep our results as general as possible, we avoid assuming a particular causal generative model, but the above conditional (in)dependence assumptions can be interpreted as constraints on such a causal model. \Cref{sec:causal_DAGs} formally characterizes the set of causal models that are consistent with our assumptions and shows that our setting generalizes those of prior works~\citep{rojas2018invariant,von2019semi,jiang2022invariant,kugelgen2020semi}.

\textbf{Reduction to the marginal problem with complementary features.}
By \cref{def:stable_features}, we have the same stable relationship $P_{Y|X_S,E}\! =\! P_{Y|X_S}$ in training and test domains. Now, suppose we have used the training data to learn this stable relationship and thus know $P_{Y|X_S}$. Also suppose that we have enough unlabeled data from test domain $E$ to learn $P_{X_S,X_U|E}$, and
recall that our ultimate goal is to predict $Y$ from $(X_S, X_U)$ in test domain $E$. Since the rest of our discussion is conditioned on $E$ being the test domain, we omit $E$ from the notation.
Now note that, if we could express $P_{Y|X_S,X_U}$ in terms of $P_{Y|X_S}$ and $P_{X_S,X_U}$, we could then use $P_{Y|X_S,X_U}$ to optimally predict $Y$ from $(X_S, X_U)$. Thus, our task thus becomes to reconstruct $P_{Y|X_S,X_U}$ from $P_{Y|X_S}$ and $P_{X_S,X_U}$. This is an instance of the classical ``marginal problem'' from probability theory~\citep{hoeffding1940masstabinvariante,hoeffding1941masstabinvariante,frechet1951tableaux}, which asks under which conditions we can recover the joint distribution of a set of random variables given information about its marginals. In general, although one can place bounds on the conditional distributions $P_{Y|X_U}$ and $P_{Y|X_S,X_U}$, they cannot be completely inferred from $P_{Y|X_S}$ and $P_{X_S,X_U}$~\citep{frechet1951tableaux}. However, the following section demonstrates that, \emph{under the additional assumptions that $X_S$ and $X_U$ are complementary and $X_S$ is informative}, we can exactly recover $P_{Y|X_U}$ and $P_{Y|X_S,X_U}$ from $P_{Y|X_S}$ and $P_{X_S,X_U}$.
\vspace{-2mm}
\subsection{Solving the marginal problem with complementary features}
\label{sec:theoretical_analysis:marginal_problem}
\vspace{-2mm}
We now present our main result which shows how to reconstruct $P_{Y|X_S,X_U}$ from $P_{Y|X_S}$ and $P_{X_S,X_U}$ when $X_S$ and $X_U$ are complementary and $X_S$ is informative. To simplify notation, we assume the label $Y$ is binary and defer the multi-class extension to \Cref{app:multiclass}.

\begin{theorem}[Solution to the marginal problem with binary labels and complementary features]
    Consider three random variables $X_S$, $X_U$, and $Y$, where
    \begin{enumerate*}[label=(\roman*)]
        \item $Y$ is binary ($\{0,1\}$-valued),
        \item $X_S$ and $X_U$ are complementary features for $Y$ (i.e., $X_S \indep X_U | Y$), and
        \item $X_S$ is informative of $Y$ ($X_S \notindep Y$).
    \end{enumerate*}
    Then, the joint distribution of $(X_S, X_U, Y)$ can be written in terms of the joint distributions of $(X_S, Y)$ and $(X_S, X_U)$. Specifically, if $\hat Y|X_S \sim \operatorname{Bernoulli}(\Pr[Y = 1|X_S])$ is a pseudo-label\footnote{Our \emph{stochastic} pseudo-labels differ from hard ($\hat Y = 1\{\Pr[Y = 1|X_S] > 1/2\}$) pseudo-labels often used in practice~\citep{galstyan2008empirical,lee2013pseudo,rusak2022if}. By capturing irreducible error in $Y$, stochastic pseudo-labels ensure $\Pr[Y|X_U]$ is well-calibrated, allowing us to combine $\Pr[Y|X_S]$ and $\Pr[Y|X_U]$ in Eq.~\eqref{eq:pr_Y_given_X1_and_X2}.}
    and
    \begin{equation}
        \epsilon_0 := \Pr[\hat Y = 0|Y = 0]
        \quad \text{ and } \quad
        \epsilon_1 := \Pr[\hat Y = 1|Y = 1]
        \label{def:epsilon_0_epsilon_1_single_environment}
    \end{equation}
    are the accuracies of the pseudo-labels on classes $0$ and $1$, respectively. Then, we have:
    \begin{equation}
        \epsilon_0 + \epsilon_1 > 1,
        \label{ineq:eps_gtr_1}
    \end{equation}\vspace{-2mm}
    \begin{equation}
        \Pr[Y = 1|X_U] = \frac{\Pr[\hat Y = 1|X_U] + \epsilon_0 - 1}{\epsilon_0 + \epsilon_1 - 1},
        \qquad \text{ and }
        \label{eq:pr_Y_given_X2}
        \vspace{-1mm}
    \end{equation}
    \begin{equation}
        \Pr[Y = 1|X_S,X_U]\! = \!\sigma \left( \logit(\Pr[Y\! =\! 1|X_S]) + \logit(\Pr[Y\! =\! 1|X_U]) - \logit(\Pr[Y\! =\! 1]) \right).
        \label{eq:pr_Y_given_X1_and_X2}
    \end{equation}%
    \label{thm:OOV}
    \vspace{-5mm}
\end{theorem}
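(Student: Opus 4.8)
The plan is to exploit the generative structure implied by the three assumptions and then read off each conclusion via elementary Bayes manipulations. The linchpin is a single conditional-independence fact: since the pseudo-label $\hat Y$ is drawn as a (stochastic) function of $X_S$ alone, we have the Markov chain $X_U - Y - X_S - \hat Y$, and in particular $\hat Y \indep X_U \mid Y$. Concretely, the construction of $\hat Y$ gives $\hat Y \indep (X_U, Y) \mid X_S$, which combined with complementarity $X_S \indep X_U \mid Y$ yields $\hat Y \indep X_U \mid Y$ by d-separation (conditioning on $Y$ blocks the only path from $\hat Y$ to $X_U$). I would establish this first, since both \eqref{ineq:eps_gtr_1} and \eqref{eq:pr_Y_given_X2} rest on it.

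For the inequality $\epsilon_0 + \epsilon_1 > 1$, write $\eta(X_S) := \Pr[Y=1\mid X_S]$ and $\pi := \Pr[Y=1]$. The pseudo-label construction gives $\Pr[\hat Y = 1 \mid Y = y] = \E[\eta(X_S) \mid Y = y]$, so $\epsilon_1 = \E[\eta \mid Y=1]$ and $1-\epsilon_0 = \E[\eta \mid Y=0]$. Applying the tower property to $\E[\eta(X_S)\,\mathbbm{1}\{Y=1\}] = \E[\eta(X_S)^2]$ (and the analogous identity for $Y=0$) lets me solve for both conditional means explicitly, and after simplification I expect the clean identity
\[
\epsilon_0 + \epsilon_1 - 1 = \frac{\var(\eta(X_S))}{\pi(1-\pi)}.
\]
Informativeness ($X_S \notindep Y$) forces $\eta(X_S)$ to be non-constant, so the numerator is strictly positive, and $0 < \pi < 1$ (as $Y$ cannot be degenerate if it depends on $X_S$), giving the strict inequality. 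This variance identity is where informativeness enters essentially, and is the step I expect to require the most care.

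For \eqref{eq:pr_Y_given_X2}, I would condition on $X_U$ and use the law of total probability over $Y$, then replace $\Pr[\hat Y = 1 \mid Y=1, X_U]$ and $\Pr[\hat Y = 1 \mid Y=0, X_U]$ by $\epsilon_1$ and $1-\epsilon_0$ (this is exactly where $\hat Y \indep X_U \mid Y$ is used). This yields the affine relation $\Pr[\hat Y = 1 \mid X_U] = (1-\epsilon_0) + (\epsilon_0 + \epsilon_1 - 1)\,\Pr[Y=1\mid X_U]$, which I solve for $\Pr[Y=1\mid X_U]$; the solution is well-defined precisely because the denominator is positive by the previous step.

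Finally, for the logit-additive formula \eqref{eq:pr_Y_given_X1_and_X2}, I would work with log-odds. By Bayes' rule and complementarity $X_S \indep X_U \mid Y$, the likelihood factorizes as $\Pr[X_S, X_U \mid Y] = \Pr[X_S\mid Y]\,\Pr[X_U \mid Y]$, so the posterior odds of $Y$ given $(X_S, X_U)$ factor into a product of per-feature likelihood ratios times the prior odds. Rewriting each likelihood ratio $\Pr[X_S \mid Y=1]/\Pr[X_S \mid Y=0]$ back in terms of the posterior and prior via Bayes (and similarly for $X_U$), the surplus prior-odds terms combine to leave a single $-\logit(\Pr[Y=1])$ correction; taking logarithms gives the additive identity, and applying $\sigma = \logit^{-1}$ completes the proof. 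This last step is a routine naive-Bayes computation, so the conceptual weight of the theorem sits in establishing $\hat Y \indep X_U \mid Y$ together with the variance identity.
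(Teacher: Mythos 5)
Your proof is correct, and for the bias-correction formula \eqref{eq:pr_Y_given_X2} and the logit-additive combination \eqref{eq:pr_Y_given_X1_and_X2} it follows the paper's argument essentially verbatim: the same law-of-total-probability step with $\hat Y \indep X_U \mid Y$ substituted in, and the same naive-Bayes factorization with cancellation of the proportionality constants in logit space. Two points where you genuinely diverge are worth noting. First, you establish $\hat Y \indep X_U \mid Y$ explicitly up front via contraction from $\hat Y \indep (X_U, Y) \mid X_S$ and complementarity; the paper asserts this in passing (``$X_S$ (and hence $\hat Y$) is conditionally independent of $X_U$ given $Y$''), so your version is more careful on a step that is genuinely the linchpin. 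Second, and more substantively, your proof of \eqref{ineq:eps_gtr_1} is different from the paper's and arguably cleaner. The paper proves Lemma~\ref{lemma:epsilon_0_plus_epsilon_1_equals_1} by contraposition: assuming $\epsilon_0+\epsilon_1=1$, it derives $\hat Y \indep Y$ and then uses a covariance computation $0=\E[(\E[Y\mid X_S]-\E[Y])^2]$ to conclude $X_S \indep Y$. As literally stated, that lemma only yields $\epsilon_0+\epsilon_1 \neq 1$ under informativeness (which suffices for the division in \eqref{eq:pr_Y_given_X2} but not, on its own, for the strict inequality claimed in the theorem); the missing ingredient $\epsilon_0+\epsilon_1 \geq 1$ is implicit in the paper's expressions $\epsilon_1 = \E[\eta^2]/\E[\eta]$ and $\epsilon_0=\E[(1-\eta)^2]/\E[1-\eta]$ but never spelled out. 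Your identity $\epsilon_0+\epsilon_1-1 = \var(\eta(X_S))/(\pi(1-\pi))$ is correct (it follows directly from those same two expressions after writing $\E[\eta^2]=\var(\eta)+\pi^2$) and delivers the strict inequality in one stroke, quantifying exactly how informativeness translates into signal strength; it also makes transparent the paper's later remark that the relative statistical efficiency of the method is $\epsilon_0+\epsilon_1-1$. What you lose relative to the paper's lemma is the converse direction ($\epsilon_0+\epsilon_1=1 \Rightarrow X_S\indep Y$), but that direction is not needed for the theorem.
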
%
Intuitively, suppose we generate pseudo-labels $\hat Y$ based on feature $X_S$ and train a model to predict $\hat Y$ using feature $X_U$. For complementary $X_S$ and $X_U$, \Cref{eq:pr_Y_given_X2} shows how to transform this into a prediction of the \emph{true} label $Y$, correcting for differences between $\hat Y$ and $Y$.
Crucially, given the conditional distribution $P_{Y|X_S}$ and observations of $X_S$, we can estimate class-wise pseudo-label accuracies $\epsilon_0$ and $\epsilon_1$ in \cref{eq:pr_Y_given_X2} even without new labels $Y$ (see \cref{app:proof_of_OOV_theorem}, \cref{eq:pseudolabel_has_same_marginals}).
Finally, \Cref{eq:pr_Y_given_X1_and_X2} shows how to weight predictions based on $X_S$ and $X_U$, justifying the combination function\vspace{1.25mm}
\begin{equation}
    C_p(p_S, p_U) = \sigma(\logit(p_S) + \logit(p_U) - \logit(p))
    \label{eq:combination_function}
\end{equation}\vspace{1.25mm}%
in \cref{eq:boosting:2}, where $p = \Pr[Y = 1]$ is a constant independent of $x_S$ and $x_U$.
We now sketch the proof of \cref{thm:OOV}, elucidating the roles of informativeness and complementarity
(full proof in App.~\ref{app:proof_of_OOV_theorem}).
\vspace{-2mm}
\begin{proof}[Proof Sketch of \Cref{thm:OOV}]
We prove~\cref{ineq:eps_gtr_1}, \cref{eq:pr_Y_given_X2}, and \cref{eq:pr_Y_given_X1_and_X2} in order.\vspace{-2mm}
\paragraph{Proof of~\cref{ineq:eps_gtr_1}:}
The informativeness condition (iii) is equivalent to the pseudo-labels having predictive accuracy above random chance; formally, \cref{app:proof_of_OOV_theorem} shows:
\begin{lemma}
    $\epsilon_0 + \epsilon_1 > 1$ if and only if $X_S$ is informative of $Y$ (i.e., $X_S \notindep Y$).
    \label{lemma:epsilon_0_plus_epsilon_1_equals_1}
\end{lemma}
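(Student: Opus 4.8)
The plan is to compute $\epsilon_0$ and $\epsilon_1$ in closed form by conditioning on $X_S$, and then to show that the combination $\epsilon_0 + \epsilon_1 - 1$ equals a strictly positive multiple of the variance of $\Pr[Y = 1 \mid X_S]$. The equivalence then reduces to the elementary fact that, for binary $Y$, this variance vanishes precisely when $X_S$ and $Y$ are independent.

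First I would record the structural fact that drives everything: the pseudo-label $\hat Y$ is generated from $X_S$ alone, so $\hat Y \indep Y \mid X_S$ with $\Pr[\hat Y = 1 \mid X_S] = \Pr[Y = 1 \mid X_S] =: \eta(X_S)$. Writing $p := \Pr[Y = 1]$ and applying the law of total probability together with this conditional independence (so that $\Pr[\hat Y = 1, Y = 1 \mid X_S] = \eta(X_S)^2$ and $\Pr[\hat Y = 0, Y = 0 \mid X_S] = (1 - \eta(X_S))^2$), I obtain
\[
\epsilon_1 = \Pr[\hat Y = 1 \mid Y = 1] = \frac{\E[\eta(X_S)^2]}{p}, \qquad \epsilon_0 = \Pr[\hat Y = 0 \mid Y = 0] = \frac{\E[(1 - \eta(X_S))^2]}{1 - p}.
\]

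Next I would substitute $\E[\eta(X_S)^2] = \var(\eta(X_S)) + p^2$ and $\E[(1 - \eta(X_S))^2] = \var(\eta(X_S)) + (1 - p)^2$, using $\E[\eta(X_S)] = p$. The deterministic parts cancel against the $-1$, leaving the clean identity
\[
\epsilon_0 + \epsilon_1 - 1 = \var(\eta(X_S)) \left( \frac{1}{p} + \frac{1}{1 - p} \right) = \frac{\var(\Pr[Y = 1 \mid X_S])}{p(1 - p)}.
\]
Since $0 < p < 1$ for a non-degenerate label, the denominator is positive, so $\epsilon_0 + \epsilon_1 > 1$ if and only if $\var(\Pr[Y = 1 \mid X_S]) > 0$.

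Finally I would close the equivalence using binariness of $Y$: $\var(\Pr[Y = 1 \mid X_S]) = 0$ forces $\Pr[Y = 1 \mid X_S]$ to be almost surely constant, hence equal to its mean $p = \Pr[Y = 1]$, which is exactly $X_S \indep Y$; conversely independence makes the conditional probability constant, giving zero variance. Therefore $\var(\Pr[Y = 1 \mid X_S]) > 0 \iff X_S \notindep Y$, which is the claim. The only real obstacle is bookkeeping rather than insight: one must correctly exploit $\hat Y \indep Y \mid X_S$ to get the squared conditional means, and flag the standing non-degeneracy assumption $0 < p < 1$ (needed both to divide by $p(1-p)$ and for $\epsilon_0, \epsilon_1$ to be well-defined); the remaining algebra is routine.
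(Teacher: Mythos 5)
Your proof is correct, and it rests on the same core mechanism as the paper's: exploit $\hat Y \indep Y \mid X_S$ together with $P_{\hat Y \mid X_S} = P_{Y \mid X_S}$ to reduce everything to whether $\Pr[Y=1\mid X_S]$ is almost surely constant, which for binary $Y$ is exactly independence. The packaging differs in a way worth noting. The paper (Lemma~\ref{lemma:epsilon_0_plus_epsilon_1_equals_1_app}, App.~A.1) proves the statement in the form ``$\epsilon_0+\epsilon_1=1$ iff $X_S\indep Y$'' via two separate implications: the forward direction deduces $\hat Y \indep Y$, passes to $\operatorname{Cov}(\hat Y,Y)=\E\bigl[(\E[Y\mid X_S]-\E[Y])^2\bigr]=0$, and concludes; the reverse direction is immediate. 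You instead derive the single closed-form identity $\epsilon_0+\epsilon_1-1=\var\bigl(\Pr[Y=1\mid X_S]\bigr)/\bigl(p(1-p)\bigr)$, from which both directions drop out at once. (The two computations are really the same object: $\operatorname{Cov}(\hat Y,Y)=\var(\Pr[Y=1\mid X_S])$.) What your version buys is the unconditional inequality $\epsilon_0+\epsilon_1\ge 1$, which is exactly what is needed to pass from the appendix's ``$=1$ iff independent'' formulation to the main text's ``$>1$ iff informative'' formulation; the paper leaves that step implicit. Your handling of the degenerate case $p\in\{0,1\}$ (where $\epsilon_0$ or $\epsilon_1$ is ill-defined) matches the paper's, which also sets it aside as trivial. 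No gaps.
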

Together with \Cref{eq:pr_Y_given_X2},
it follows that \emph{any} dependence between $X_S$ and $Y$ allows us to fully learn the relationship between $X_U$ and $Y$, affirmatively answering our question from \cref{sec:inv_compl}: \textit{Can the student outperform the teacher?}
While a stronger relationship between $X_S$ and $Y$ is still helpful, it only improves the (unlabeled) \emph{sample complexity} of learning $P_{Y|X_U}$ and not \emph{consistency} (\cref{thm:consistency} below), mirroring related results in the literature on learning from noisy labels~\citep{natarajan2013learning,blanchard2016classification}. In particular, a weak relationship corresponds to $\epsilon_0 + \epsilon_1 \approx 1$, increasing the variance of the bias-correction in~\Cref{eq:pr_Y_given_X2}. With a bit more work, one can formalize this intuition to show that our approach has a relative statistical efficiency of $\epsilon_0 + \epsilon_1 - 1 \in [0, 1]$, compared to using true labels $Y$.

    \paragraph{Proof of~\cref{eq:pr_Y_given_X2}:} The key observation behind the bias-correction (\cref{eq:pr_Y_given_X2}) is that, due to complementarity ($X_S \indep X_U | Y$) and the fact that the pseudo-label $\hat Y$ depends only on $X_S$, $\hat Y$ is conditionally independent of $X_U$ given the true label $Y$ ($\hat Y \indep X_U | Y$); formally:
    \begin{align*}
        \Pr[\hat Y = 1|X_U]
        & = \Pr[\hat Y = 1|Y = 0, X_U] \Pr[Y = 0|X_U] \\
        & \qquad
          + \Pr[\hat Y = 1|Y = 1, X_U] \Pr[Y = 1|X_U]
          & \text{(Law of Total Probability)} \\
        & = \Pr[\hat Y = 1|Y = 0] \Pr[Y = 0|X_U] \\
        & \qquad
          + \Pr[\hat Y = 1|Y = 1] \Pr[Y = 1|X_U]
          & \text{(Complementarity)} \\
        & = (\epsilon_0 + \epsilon_1 - 1) \Pr[Y = 1|X_U] + 1 - \epsilon_0.
            & \text{(Definitions of $\epsilon_0$ and $\epsilon_1$)}
    \end{align*}
    Here, complementarity allowed us to approximate the unknown $\Pr[\hat Y = 1|Y = 0, X_U]$ by its average $\Pr[\hat Y = 1|Y = 0] = \E_{X_U}[\Pr[\hat Y = 1|Y = 0, X_U]]$, which depends only on the known distribution $P_{X_S,Y}$.
    By informativeness, Lemma~\ref{lemma:epsilon_0_plus_epsilon_1_equals_1} allows us to divide by $\epsilon_0 + \epsilon_1 - 1$, giving \cref{eq:pr_Y_given_X2}.

    \paragraph{Proof of~\cref{eq:pr_Y_given_X1_and_X2}:}
    While the exact proof of \cref{eq:pr_Y_given_X1_and_X2} is a bit more algebraically involved, the key idea is simply that complementarity allows us to decompose  $\Pr[Y|X_S,X_U]$ into separately-estimatable terms $\Pr[Y|X_S]$ and $\Pr[Y|X_U]$: for any $y \in \calY$,
    \begin{align*}
        \Pr[Y = y|X_S,X_U]
        & \propto_{X_S,X_U} \Pr[X_S,X_U|Y = y] \Pr[Y = y]
            & \text{(Bayes' Rule)} \\
        & = \Pr[X_S|Y = y] \Pr[X_U|Y = 1] \Pr[Y = y]
            & \text{(Complementarity)} \\
        & \propto_{X_S,X_U} \frac{\Pr[Y = y|X_S] \Pr[Y = 1|X_U]}{\Pr[Y = 1]},
            & \text{(Bayes' Rule)}
    \end{align*}
    where, $\propto_{X_S,X_U}$ denotes proportionality with a constant depending only on $X_S$ and $X_U$, not on $y$. Directly estimating these constants involves estimating the density of $(X_S, X_U)$, which may be intractable without further assumptions.
    However, in the binary case, since $1 - \Pr[Y = 1|X_S,X_U] = \Pr[Y = 0|X_S,X_U]$, these proportionality constants conveniently cancel out when the above relationship is written in $\logit$-space, as in \cref{eq:pr_Y_given_X1_and_X2}. In the multi-class case, \cref{app:multiclass} shows how to use the constraint $\sum_{y \in \calY} \Pr[Y = y|X_S,X_U] = 1$ to avoid computing the proportionality constants.
\end{proof}

\subsection{A provably consistent algorithm for unsupervised test-domain adaptation}
\label{sec:consistency}

Having learned $P_{Y|X_S}$ from the training domain(s), \Cref{thm:OOV} implies we can learn $P_{Y|X_S,X_U}$ in the test domain by learning $P_{X_S,X_U}$---the latter only requiring \emph{unlabeled} test-domain data.
\begin{algorithm2e}[tb]\footnotesize
    \DontPrintSemicolon
    \KwInput{Calibrated stable classifier $f_S(x_S)\! =\! \Pr[Y\! =\! 1|X_S\! =\! x_S]$,
             unlabelled data $\{(X_{S,i}, X_{U,i})\}_{i = 1}^n$}
    \KwOutput{Joint classifier $\hat{f}(x_S, x_U)$ estimating $\Pr[Y = 1|X_S = x_S,X_U = x_U]$}\vspace{0.5mm}
    Compute soft pseudo-labels~(PLs) $\{\hat Y_i\}_{i=1}^n$ with $\hat Y_i = f_S(X_{S,i})$ \\
    Compute soft class-1 count $n_1 = \sum_{i = 1}^n \hat Y_i$ \\
    \looseness-1 Estimate PL accuracies $\left( \hat\epsilon_{0}, \hat\epsilon_{1} \right)\! =\! \left( \frac{1}{n - n_1} \sum_{i = 1}^n (1\! -\! \hat Y_i) (1\! -\! f_S(X_{S,i})), \frac{1}{n_1} \sum_{i = 1}^n \hat Y_i f_S(X_{S,i}) \right)$
    \label{line:hat_epsilon}\hspace{-1mm}\tcp*{\hspace{-2mm} Eq.\eqref{def:epsilon_0_epsilon_1_single_environment}}
    Fit unstable classifier $\tilde{f}_{U}(x_U)$ to pseudo-labelled data $\{(X_{U,i}, \hat Y_i)\}_{i = 1}^n$ \label{line:hat_eta_U}
    \tcp*{$\approx \Pr[\hat Y\! =\! 1 | X_U\! =\! x_U]$}
    Bias-correct
    $\hat{f}_{U}(x_U) \mapsto \max \left\{ 0, \min \left\{ 1, \frac{\tilde{f}_{U}(x_U) + \hat\epsilon_{0} - 1}{\hat\epsilon_{0} + \hat\epsilon_{1} - 1} \right\} \right\}$\label{line:unbiased_unstable} \tcp*{Eq.\eqref{eq:pr_Y_given_X2}, $\approx \Pr[Y\! =\! 1 | X_U\! =\! x_U]$}
    \Return{
    $\hat{f}(x_S, x_U)\! \mapsto\! C_{\frac{n_1}{n}}( f_S(x_S), \hat{f}_U(x_U))$}\label{line:combination} \tcp*{Eq.\eqref{eq:pr_Y_given_X1_and_X2}/\eqref{eq:combination_function}, $\approx \Pr[Y\! =\! 1|X_S\! =\! x_S,X_U\! =\! x_U]$}
    \label{line:hat_eta_S_U}
    \caption{Bias-corrected adaptation procedure. Multi-class version given by Algorithm~\ref{alg:unsupervised_domain_adaptation_multiclass}.}
    \label{alg:unsupervised_domain_adaptation}
\end{algorithm2e}
This motivates our \cref{alg:unsupervised_domain_adaptation} for test-domain adaptation, which is a finite-sample version of the bias-correction and combination equations (\cref{eq:pr_Y_given_X2,eq:pr_Y_given_X1_and_X2}) in \Cref{thm:OOV}.
\cref{alg:unsupervised_domain_adaptation} comes with the following guarantee:
\begin{theorem}[Consistency Guarantee, Informal]
     Assume
     \begin{enumerate*}[label=(\roman*)]
        \item $X_S$ is stable,
        \item $X_S$ and $X_U$ are complementary, and
        \item $X_S$ is informative of $Y$ in the test domain.
    \end{enumerate*}
    As $n \to \infty$, if $\tilde f_U \to \Pr[\hat Y = 1|X_U]$ then $\hat f \to \Pr[Y = 1|X_S, X_U]$.
    \label{thm:consistency}
\end{theorem}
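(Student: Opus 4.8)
The plan is to show that every empirical quantity computed in \cref{alg:unsupervised_domain_adaptation} converges, as $n \to \infty$, to the corresponding population quantity appearing in \cref{thm:OOV}, and then to propagate these convergences through the (continuous) bias-correction and combination maps. Since $\hat f$ is obtained from $f_S$, $\hat f_U$, and $n_1/n$ via the combination function $C$, and $\hat f_U$ is in turn obtained from $\tilde f_U$, $\hat\epsilon_0$, $\hat\epsilon_1$ via the bias-correction map, it suffices to work pointwise: for fixed $(x_S, x_U)$, I would (a) establish consistency of the scalar estimates $n_1/n$, $\hat\epsilon_0$, $\hat\epsilon_1$; (b) combine these with the assumed convergence $\tilde f_U(x_U) \to \Pr[\hat Y = 1|X_U = x_U]$; and (c) check that the two maps are continuous at the limiting arguments, so that the continuous mapping theorem applies.

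First I would handle the scalar estimates. Because the stable classifier is calibrated, $\hat Y_i = f_S(X_{S,i}) = \Pr[Y = 1|X_{S,i}]$, so $n_1/n = \tfrac1n\sum_i f_S(X_{S,i})$ converges by the law of large numbers to $\E[f_S(X_S)] = \E[\Pr[Y=1|X_S]] = \Pr[Y=1] = p$. For the pseudo-label accuracies I would rewrite the estimators using $\hat Y_i = f_S(X_{S,i})$, e.g.\ $\hat\epsilon_1 = \big(\sum_i f_S(X_{S,i})^2\big)\big/\big(\sum_i f_S(X_{S,i})\big) \to \E[f_S(X_S)^2]/\E[f_S(X_S)]$, and then use the calibration/tower-rule identity $\E[f_S(X_S)^2] = \E[f_S(X_S)\,\Pr[Y=1|X_S]] = \E[f_S(X_S)\,\indicator\{Y=1\}] = \Pr[Y=1]\,\epsilon_1$ to identify the limit as $\epsilon_1$ (here $\epsilon_1 = \E[f_S(X_S)\mid Y=1]$ uses that the pseudo-label is conditionally independent of $Y$ given $X_S$); the argument for $\hat\epsilon_0 \to \epsilon_0$ is symmetric. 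This is also where I would note that the algorithm's soft pseudo-labels and the stochastic pseudo-labels of \cref{thm:OOV} share the same conditional mean $\E[f_S(X_S)\mid X_U]$, so $\tilde f_U$ targets the same $\Pr[\hat Y = 1|X_U]$ under either convention.

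Next I would invoke informativeness. By \cref{lemma:epsilon_0_plus_epsilon_1_equals_1}, assumption (iii) gives $\epsilon_0 + \epsilon_1 - 1 > 0$, so the population denominator of the bias-correction is a strictly positive constant; combined with $\hat\epsilon_0 + \hat\epsilon_1 \to \epsilon_0 + \epsilon_1$, this guarantees that for all large $n$ the empirical denominator is bounded away from zero, so the map $(\tilde f_U, \hat\epsilon_0, \hat\epsilon_1) \mapsto (\tilde f_U + \hat\epsilon_0 - 1)/(\hat\epsilon_0 + \hat\epsilon_1 - 1)$ is continuous at the limit point. The continuous mapping theorem then yields convergence to the right-hand side of \cref{eq:pr_Y_given_X2}, namely $\Pr[Y=1|X_U]$; since this limit already lies in $[0,1]$, the clipping $\max\{0,\min\{1,\cdot\}\}$ leaves it unchanged, giving $\hat f_U \to \Pr[Y=1|X_U]$. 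Finally, since $f_S(x_S) = \Pr[Y=1|X_S]$ exactly and $n_1/n \to p$, applying the continuous mapping theorem once more to $C_{n_1/n}(f_S(x_S), \hat f_U(x_U)) = \sigma(\logit(f_S(x_S)) + \logit(\hat f_U(x_U)) - \logit(n_1/n))$ gives $\hat f \to C_p(\Pr[Y=1|X_S], \Pr[Y=1|X_U])$, which equals $\Pr[Y=1|X_S,X_U]$ by \cref{eq:pr_Y_given_X1_and_X2}.

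I expect the main obstacle to be the "continuity of the maps" step, i.e.\ controlling the vanishing denominator and the singularities of $\logit$. The bias-correction is continuous only away from $\epsilon_0 + \epsilon_1 = 1$, which informativeness rules out at the population level but which must be enforced uniformly for large $n$; similarly, $C_p$ involves $\logit$ of $f_S$, $\hat f_U$, and $p$, which diverges as any argument approaches $0$ or $1$, so one either assumes the relevant conditional probabilities lie in a compact subinterval of $(0,1)$ or argues that $C_p$ extends continuously to the boundary. The remaining bookkeeping is to pin down the mode of convergence: reading "$\to$" as almost-sure (or in-probability) convergence, the scalar estimates converge jointly by the law of large numbers, so the continuous mapping theorem applies directly to the vector of estimates, with Slutsky's theorem available if one prefers to combine the functional estimate $\tilde f_U$ with the scalar estimates; the formal appendix version would simply fix this mode consistently across all steps.
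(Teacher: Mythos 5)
Your main line of argument coincides with the paper's: the appendix proof likewise establishes $\hat\epsilon_0 \to \epsilon_0$, $\hat\epsilon_1 \to \epsilon_1$, and $n_1/n \to \Pr[Y=1]$ via the strong law of large numbers, verifies continuity of the map $(a,b,c) \mapsto \logit\left(\frac{a+b-1}{b+c-1}\right)$ at the limit point using informativeness (through \cref{lemma:epsilon_0_plus_epsilon_1_equals_1}), and then propagates the assumed convergence of $\tilde f_U$ through the continuous mapping theorem and the continuity of $\sigma$, in both the in-probability and almost-sure modes. Your identification of the limits of $\hat\epsilon_0, \hat\epsilon_1$ via the calibration/tower-rule identity matches \cref{eq:pseudolabel_has_same_marginals} and the computation in \cref{app:proof_of_OOV_theorem}.

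The one genuine gap is the boundary case that you flag as ``the main obstacle'' but leave unresolved, and neither of your two proposed fixes works as stated. Assuming the conditional probabilities lie in a compact subinterval of $(0,1)$ strengthens the hypotheses beyond what the theorem asserts, and $C_p$ does \emph{not} extend continuously to the boundary: if $\Pr[Y=1|X_S=x_S]=1$ while $\Pr[Y=1|X_U=x_U]=0$, the sum $\logit(f_S(x_S))+\logit(\hat f_U(x_U))$ is of the form $\infty-\infty$ and has no well-defined limit. The paper closes this hole with a measure-theoretic argument (\cref{lemma:consistency_infinity_case}): the event that $X_S$ and $X_U$ carry perfect but \emph{contradictory} information about $Y$ has probability zero under $P_{X_S,X_U}$, since otherwise conditioning on that event would force a conditional expectation of $Y$ to equal both $0$ and $1$. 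Off that null set, at most one of the two logits diverges at any given $(x_S,x_U)$, and one checks directly that $\hat f$ then converges to the correct degenerate value. You need this (or an equivalent) step for the continuity argument to hold $P_{X_S,X_U}$-almost everywhere rather than only where both conditionals are interior. A smaller omission: the formal version of the theorem also asserts risk consistency of the induced hard classifier, which the paper obtains from the pointwise convergence via a separate lemma (\cref{lemma:convergence_of_regression_functions_to_risk}); your proposal covers only the regression-function convergence, which does suffice for the informal statement as quoted.
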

In words, as the amount of unlabeled data from the test domain increases, if the unstable classifier
on Line~\ref{line:hat_eta_U} of \Cref{alg:unsupervised_domain_adaptation} learns to predict the pseudo-label $\hat Y$, then the joint
classifier output by \Cref{alg:unsupervised_domain_adaptation} learns to predict the true label $Y$.
Convergence in~\Cref{thm:consistency} occurs $P_{X_S,X_U}$-a.e., both weakly (in prob.) and strongly (a.s.), depending on the convergence of $\tilde{f}_U$.
Formal statements and proofs are in Appendix~\ref{app:consistency}. 

\section{Algorithm: Stable Feature Boosting (SFB)}
\label{sec:algorithm}
Using theoretical insights from \cref{sec:theoretical_analysis},
we now propose Stable Feature Boosting (SFB): an algorithm for safely harnessing unstable features without test-domain labels.
We first describe learning a stable predictor and extracting complementary unstable features from the training domains.
We then describe how to use these with \cref{alg:unsupervised_domain_adaptation}, adapting our use of the unstable features to the test domain.

\textbf{Training domains: Learning stable and complementary features.}
Using the notation of \cref{eq:boosting:2}, our goal on the training domains is to learn stable and unstable features $\Phi_S$ and $\Phi_U$, a stable predictor $f_S$, and domain-specific unstable predictors $f^e_U$ such that:

\begin{enumerate}[topsep=0em]
    \item $f_S$ is stable, informative, and calibrated (i.e., $f_S(x_S)\! =\! \Pr[Y\! =\! 1|X_S\! =\! x_S]$).
    \item In domain $e$, $f^e_U$ boosts $f_s$'s performance with complementary $\Phi_U(X^e)\! \indep\! \Phi_S(X^e) | Y^e$.
\end{enumerate}
To achieve these learning goals, we propose the following objective:
\begin{align}
\begin{split}
   \min_{\Phi_S, \Phi_U, h_S, h^e_U}
   & \sum_{e \in \Etr} R^e(h_S \circ \Phi_S) + R^e(C(h_S \circ \Phi_S, h_U^e \circ \Phi_U)) \\
   & + \lambda_S \cdot P_{\text{Stability}}(\Phi_S, h_S, R^e) + \lambda_C\cdot P_{\text{CondIndep}}(\Phi_S(X^e), \Phi_U(X^e), Y^e)
   \label{eq:objective}
\end{split}
\end{align}
The first term encourages good stable predictions $f_S(X) = h_S(\Phi_S(X))$ while the second encourages improved domain-specific joint predictions $f^e(X^e) = C(h_S(\Phi_S(X^e)), h^e_U(\Phi_U(X^e)))$ via a domain-specific use $h^e_U$ of the unstable features $\Phi_U(X^e)$.
For binary $Y$, the combination function $C$ takes the simplified form of~\Cref{eq:combination_function}. Otherwise, $C$ takes the more general form of~\Cref{eq:multiclass_combination_function}.
$P_{\text{Stability}}$ is a penalty encouraging stability
while $P_{\text{CondIndep}}$ is a penalty encouraging complementarity or conditional independence, i.e., $\Phi_U(X^e)\! \indep\! \Phi_S(X^e) | Y^e$.
Several approaches exist for enforcing stability~\citep{arjovsky2020invariant,krueger21rex,shi2021gradient,puli2021out, eastwood2022probable,veitch2021counterfactual,makar2022causally,zheng2022causally} (e.g., IRM~\cite{arjovsky2020invariant}) and conditional independence (e.g., conditional HSIC~\citep{gretton2005measuring}).
$\lambda_S\! \in\! [0, \infty)$ and $\lambda_C\! \in\! [0, \infty)$ are regularization hyperparameters.
While another hyperparameter $\gamma \in [0, 1]$ could control the relative weighting of stable and joint risks, i.e., $\gamma R^e(h_S \circ \Phi_S)$ and $(1 - \gamma)R^e(C(h_S \circ \Phi_S, h_U^e \circ \Phi_U))$, we found this unnecessary in practice. Finally, note that, in principle, $h^e_U$ could take any form and we could learn completely separate $\Phi_S, \Phi_U$. In practice, we simply take $h^e_U$ to be a linear classifier and split the output of a shared $\Phi(X)=(\Phi_S(X), \Phi_U(X))$.

\textbf{Post-hoc calibration.} As noted in \Cref{sec:consistency}, the stable predictor $f_S$ must be properly calibrated to (i) form unbiased unstable predictions (Line~\ref{line:unbiased_unstable} of \Cref{alg:unsupervised_domain_adaptation}) and (ii) correctly combine the stable and unstable predictions (Line~\ref{line:combination} of \Cref{alg:unsupervised_domain_adaptation}). Thus, after optimizing the objective~\eqref{eq:objective}, we apply a post-processing step (e.g., temperature scaling~\citep{guo2017calibration}) to calibrate $f_S$.

\textbf{Test-domain adaptation without labels.} Given a stable predictor $f_S = h_S \circ \Phi_S$ and complementary features $\Phi_U(X)$, we now adapt the unstable classifier $h^e_U$ in the test domain to safely harness (or make optimal use of) $\Phi_U(X)$. To do so, we use the bias-corrected adaptation algorithm of~\cref{alg:unsupervised_domain_adaptation} (or \cref{alg:unsupervised_domain_adaptation_multiclass} for the multi-class case) which takes as input the stable classifier $h_S$\footnote{Note: while Sections~\ref{sec:inv_compl} and \ref{sec:algorithm} use $h$ for the classifier and $f = h \circ \Phi$ for the classifier-representation composition, Section~\ref{sec:theoretical_analysis} and \Cref{alg:unsupervised_domain_adaptation} use $f$ for the classifier, since no representation $\Phi$ is being learned.} and unlabelled test-domain data $\{\Phi_S(x_i),\Phi_U(x_i)\}_{i=1}^{n_e}$, outputting a joint classifier adapted to the test domain.

\section{Experiments}
\label{sec:experiments}
We now evaluate the performance of our algorithm on synthetic and real-world datasets requiring out-of-distribution generalization. App.~\ref{sec:app:datasets} contains full details on these datasets and a depiction of their samples (see \cref{fig:app:datasets}).
In the experiments below, SFB uses IRM~\citep{arjovsky2020invariant} for $P_{\text{Stability}}$ and the conditional-independence proxy of \citet[\S3.1]{jiang2022invariant} for $P_{\text{CondIndep}}$, with App.~\ref{sec:app:further_exps:cmnist:stability_penalties} giving results with other stability penalties.
App.~\ref{sec:app:further_exps} contains further results, including ablation studies~(\ref{sec:app:further_exps:cmnist:ablations}) and results on additional datasets~(\ref{sec:app:further_exps:cam}). In particular, App.~\ref{sec:app:further_exps:cam} contains results on the \texttt{Camelyon17} medical dataset~\citep{bandi2018camelyon17} from the WILDS package~\citep{wilds2021}, where we find that all methods perform similarly \textit{when properly tuned} (see discussion in App.~\ref{sec:app:further_exps:cam}).
Code is available at: \href{https://github.com/cianeastwood/sfb}{https://github.com/cianeastwood/sfb}.

\textbf{Synthetic data.}
We consider two synthetic datasets: anti-causal~(AC) data
and cause-effect data with direct $X_S$-$X_U$ dependence (CE-DD).
AC data satisfies the structural equations\\
\begin{minipage}{.6\textwidth}
    \begin{align*}
        Y &\gets \text{Rad}(0.5); \\
        X_S &\gets Y \cdot \text{Rad}(0.75); \\
        X_U &\gets Y \cdot \text{Rad}(\beta_e),\\
    \end{align*}%
\end{minipage}%
\begin{minipage}{.4\textwidth}
    \centering
    \tikzset{minimum size=0.9cm}
    \begin{tikzpicture}
        \node[shape=circle,draw=black,dashed] (beta_e) at (3, 0) {$\beta_e$};
        \node[shape=circle,draw=black] (XS) at (0,-.75) {$X_S$};
        \node[shape=circle,draw=black] (Y) at (1,0) {$Y$};
        \node[shape=circle,draw=black] (XU) at (2,-.75) {$X_U$};
        \path [->,line width=0.25mm](Y) edge (XU);
        \path [->,line width=0.25mm](Y) edge (XS);
        \path [->,line width=0.25mm](beta_e) edge (XU);
    \end{tikzpicture}
\end{minipage}
where the input $X=(X_S, X_U)$ and $\text{Rad}(\beta)$ denotes a Rademacher random variable that is $-1$ with probability $1 - \beta$ and $+1$ with probability $\beta$. Following \cite[\S6.1]{jiang2022invariant}, we create two training domains with $\beta_e \in \{0.95,0.7\}$, one validation domain with $\beta_e = 0.6$ and one test domain with $\beta_e = 0.1$. CE-DD data is generated according to the structural equations
\begin{minipage}{.6\textwidth}
    \begin{align*}
        X_S &\gets \text{Bern}(0.5); \\
        Y &\gets \text{XOR}(X_S, \text{Bern}(0.75)); \\
        X_U &\gets \text{XOR}(\text{XOR}(Y, \text{Bern}(\beta_e)), X_S),\\
    \end{align*}%
\end{minipage}%
\begin{minipage}{.4\textwidth}
    \centering
    \tikzset{minimum size=0.9cm}
    \begin{tikzpicture}
        \node[shape=circle,draw=black,dashed] (beta_e) at (3.5, 0) {$\beta_e$};
        \node[shape=circle,draw=black] (XS) at (-.25,-.75) {$X_S$};
        \node[shape=circle,draw=black] (Y) at (1,0) {$Y$};
        \node[shape=circle,draw=black] (XU) at (2.25,-.75) {$X_U$};
        \path [->,line width=0.25mm](XS) edge (XU);
        \path [->,line width=0.25mm](XS) edge (Y);
        \path [->,line width=0.25mm](Y) edge (XU);
        \path [->,line width=0.25mm](beta_e) edge (XU);
    \end{tikzpicture}
\end{minipage}
where 
$\text{Bern}(\beta)$ denotes a Bernoulli random variable that is $1$ with probability $\beta$ and $0$ with probability $1 - \beta$. Note that $X_S \notindep X_U | Y$, since $X_S$ directly influences $X_U$. Following \cite[App.~B]{jiang2022invariant}, we create two training domains with $\beta_e \in \{0.95, 0.8\}$, one validation domain with $\beta_e = 0.2$, and one test domain with $\beta_e = 0.1$. For both datasets, the idea is that, during training, prediction based on the stable $X_S$ results in lower accuracy (75\%) than prediction based on the unstable $X_U$. Thus, models optimizing for prediction accuracy only---and not stability---will use $X_U$ and ultimately end up with only 10\% in the test domain. Importantly, while the stable predictor achieves 75\% accuracy in the test domain, this can be improved to $90\%$ if $X_U$ is used correctly. Following \cite{jiang2022invariant}, we use a simple 3-layer network for both datasets and choose hyperparameters using the validation-domain performance: see \Cref{sec:app:exp_details:synthetic} for further implementation details.

On the AC dataset, \cref{tab:syn-cam-pacs-results} shows that ERM performs poorly as it misuses $X_U$, while IRM, ACTIR, and SFB-no-adpt.\ do well by using only $X_S$. Critically, only SFB (with adaptation) is able to harness $X_U$ in the test domain \textit{without labels}, leading to a near-optimal performance boost.

On the CE-DD dataset, \cref{tab:syn-cam-pacs-results} again shows that ERM performs poorly while IRM and SFB-no-adpt.\ do well by using only the stable $X_S$. However, we now see that ACTIR performs poorly since its assumption of anti-causal structure no longer holds. This highlights another key advantage of SFB over ACTIR: any stability penalty can be used, including those with weaker assumptions than ACTIR's anti-causal structure (e.g., IRM). Perhaps more surprisingly, SFB (with adaptation) performs well despite the complementarity assumption $X_S \indep X_U | Y$ being violated. One explanation for this is that complementarity is only weakly violated in the test domain. Another is that complementarity is not \textit{necessary} for SFB, with some weaker, yet-to-be-determined condition(s) sufficing. In \cref{app:perf_no_compl}, we provide a more detailed explanation and discussion of this observation.

\begin{table}[tb]
\parbox{.75\linewidth}{
    \centering
    \caption{\small \texttt{Synthetic} \& \texttt{PACS} test-domain accuracies over 100 \& 5 seeds each.
    }
    \resizebox{0.96\linewidth}{!}{
    \begin{tabular}{@{}lcccccc@{}}
        \toprule
      \textbf{} & \multicolumn{2}{c}{\texttt{Synthetic}}  &  \multicolumn{4}{c}{\texttt{PACS}}\\
        
      \textbf{Algorithm}    & AC  & CE-DD &  P & A & C & S\\
        \midrule
        ERM                 & $9.9 \pm 0.1$         & $11.6 \pm 0.7$ & $93.0\pm0.7$ & $79.3\pm0.5$ & $74.3 \pm 0.7$ & $65.4 \pm 1.5$  \\
        ERM + PL             & $9.9 \pm 0.1$         & $11.6 \pm 0.7$ & $93.7\pm0.4$ & $79.6\pm1.5$ & $74.1 \pm 1.2$ & $63.1 \pm 3.1$  \\
        IRM~\citep{arjovsky2020invariant}                 & $74.9\pm 0.1$         & $69.6 \pm 1.3$ & $93.3\pm0.3$ & $78.7\pm0.7$ & $75.4 \pm 1.5$ & $65.6 \pm 2.5$  \\
        IRM + PL               & $74.9\pm 0.1$         & $69.6 \pm 1.3$ & $94.1\pm0.7$ & $78.9\pm2.9$ & $75.1 \pm 4.6$ & $62.9 \pm 4.9$  \\
        ACTIR~\citep{jiang2022invariant}\hspace{-1mm}               & $74.8\pm 0.4$         & $43.5 \pm 2.6$  & $94.8\pm0.1$ & $\bm{82.5\pm0.4}$ & $\bm{76.6 \pm 0.6}$ & $62.1 \pm 1.3$ \\
        SFB no adpt.\ \hspace{-3.5mm}       & $74.7\pm 1.2$         & $74.9 \pm 3.6$ & $93.7 \pm 0.6$ & $78.1 \pm 1.1$ & $73.7 \pm 0.6$ & $69.7 \pm 2.3$ \\
        SFB       & $\bm{89.2 \pm 2.9}$   & $\bm{88.6 \pm 1.4}$ & $\bm{95.8 \pm 0.6}$ & $80.4 \pm 1.3$ & $\bm{76.6 \pm 0.6}$ & $\bm{71.8 \pm 2.0}$ \\
        \bottomrule
    \end{tabular}}
    \label{tab:syn-cam-pacs-results}
}
\hfill
\parbox{.225\linewidth}{
   \centering
   \caption{\small\hspace{-1mm} \texttt{CMNIST} test accuracies over 10 seeds.}
   \resizebox{\linewidth}{!}{
   \begin{tabular}{@{}lc@{}}
    \toprule
    \textbf{Algorithm} & \textbf{Test Acc.} \\
    \midrule
    ERM   &  $27.9 \pm 1.5$   \\
    IRM~\citep{arjovsky2020invariant} & $69.7 \pm 0.9$    \\
    SFB no adpt.\hspace{-3mm} &  $70.6 \pm 1.8$  \\
    SFB &  $\bm{88.1 \pm 1.8}$  \\
    \midrule
    Oracle no adpt.\hspace{-3mm} & $72.1 \pm 0.7$  \\
    Oracle & $89.9 \pm 0.1 $ \\
    \bottomrule
    \end{tabular}
    }
    \label{tab:cMNIST}
}
\end{table}%
\begin{figure}\vspace{-2mm}
\centering
\includegraphics[width=\linewidth]{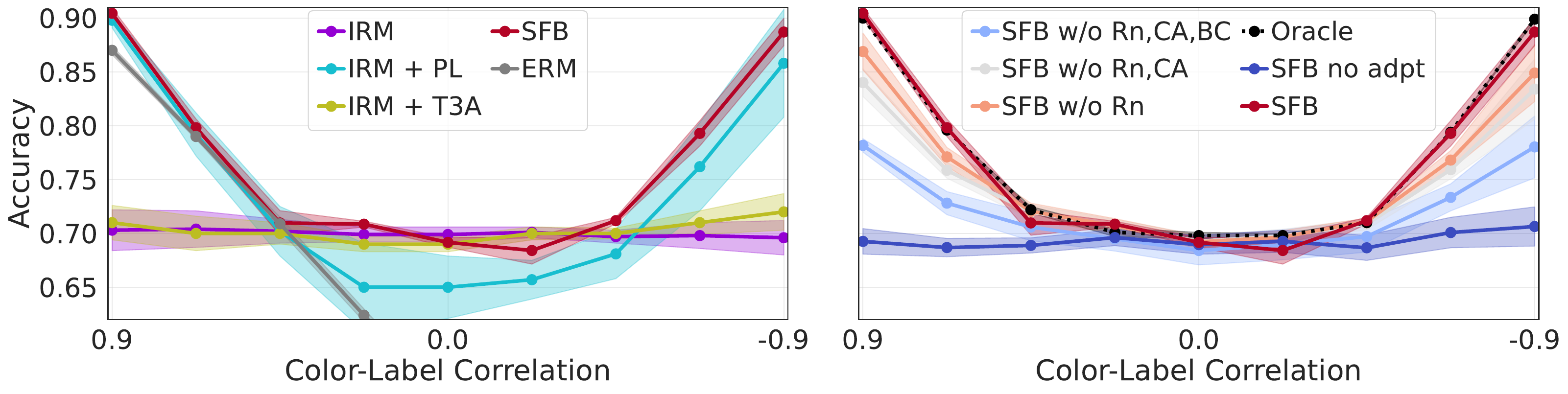}\vspace{-2mm}
\caption{\small \texttt{CMNIST} accuracies (y-axis) over test domains of decreasing color-label correlation (x-axis). Empirical versions of \cref{fig:motiv-example:ideal-results}.
\textit{Left:} SFB vs.\ baseline methods. \textit{Right:} Ablations showing SFB without (w/o) bias correction~(BC), calibration~(CA) and multiple pseudo-labeling rounds~(Rn). \looseness-1 Numerical results in \cref{tab:cmnist_full_adaptive} of \cref{sec:app:further_exps:cmnist:full_results}.
}\label{fig:results_mnist}
\end{figure}

\textbf{ColorMNIST.} 
We now consider the \texttt{ColorMNIST} dataset~\citep{arjovsky2020invariant}, described in~\cref{sec:intro} and \cref{fig:motiv-example:dataset}. We follow the experimental setup of \citet[\S6.1]{eastwood2022probable}; see \Cref{sec:app:exp_details:cmnist} for details.
\cref{tab:cMNIST} shows that: (i) SFB learns a stable predictor (``no adpt.'') with performance comparable to other stable/invariant methods like IRM~\citep{arjovsky2020invariant};
and (ii) only SFB (with adaptation) is capable of harnessing the spurious color feature in the test domain \textit{without labels}, leading to a near-optimal boost in performance. Note that ``Oracle no adpt.'' refers to an ERM model trained on grayscale images, while ``Oracle'' refers to an ERM model trained on labeled test-domain data. \cref{tab:cmnist_extended} of \cref{sec:app:further_exps:cmnist:full_results} compares to additional baseline methods, including V-REx~\citep{krueger21rex}, EQRM~\citep{eastwood2022probable}, Fishr~\citep{rame2022fishr} and more.
\cref{fig:results_mnist} gives more insight by showing performance across test domains of varying color-label correlation. On the left, we see that SFB outperforms ERM and IRM, as well as additional adaptive baseline methods in IRM + pseudo-labeling~(PL, \cite{lee2013pseudo}) and IRM + T3A~\citep{iwasawa2021testtime} (see \cref{app:exp_details:adaptive_baselines} for details).
On the right, ablations show that: (i) bias-correction~(BC), post-hoc calibration~(CA), and multiple rounds of pseudo-labeling~(Rn) improve adaptation performance; and (ii) without labels, SFB harnesses the spurious color feature near-optimally in test domains of varying color-label correlation---the original goal we set out to achieve in \cref{fig:motiv-example:ideal-results}. Further results and ablations are provided in \cref{sec:app:further_exps:cmnist}.
\textbf{PACS.}
\cref{tab:syn-cam-pacs-results} shows that SFB's stable (``no adpt.'') performance is comparable to that of the other stable/invariant methods (IRM, ACTIR). One exception is the sketch domain (S)---the most severe shift based on performance drop---where SFB's stable predictor performs best. Another is on domains A and C, where ACTIR performs better than SFB's stable predictor. Most notable, however, is: (i) the consistent performance boost that SFB gets from unsupervised adaptation; and (ii) SFB performing best or joint-best on 3 of the 4 domains. These results suggest SFB can be useful on real-world datasets where it is unclear if complementarity holds. In \Cref{app:perf_no_compl}, we discuss why this may be the case.

\section{Conclusion \& Future Work}
\label{sec:discussion}

This work demonstrated, both theoretically and practically, how to adapt our usage of spurious features to new test domains using only a stable, complementary training signal. By using invariant predictions to safely harness complementary spurious features, our proposed Stable Feature Boosting algorithm can provide significant performance gains compared to only using invariant/stable features or using unadapted spurious features---without requiring any true labels in the test domain.

\textbf{Stable and calibrated predictors.}
Perhaps the greatest challenge in applying SFB in practice is the need for a stable and calibrated predictor. While stable features may be directly observable in some cases (e.g., using prior knowledge of causal relationships between the domain, features, and label, as in \cref{prop:possible_DAGs}), they often need to be extracted from high-dimensional observations (e.g., images). 
Several methods for stable-feature extraction have recently been proposed~\citep{arjovsky2020invariant,krueger21rex,shi2021gradient,wald2021calibration,eastwood2022probable}, with future improvements likely to benefit SFB.
Calibrating complex predictors like deep neural networks is also an active area of research~\citep{flach2016classifier,guo2017calibration,wang2021rethinking,silva2023classifier}, with future improvements likely to benefit SFB.

\textbf{Weakening the complementarity condition.}
SFB also assumes that stable and unstable features are complementarity, i.e., conditionally independent given the label. This assumption is implicit in the causal generative models assumed by prior work~\citep{rojas2018invariant,von2019semi, jiang2022invariant}, and future work may look to weaken it. 
However, our experimental results suggest that SFB may be robust to violations of complementarity in practice: on our synthetic data where complementarity does not hold (\texttt{CE-DD}) and real data where we have no reason to believe it holds~(\texttt{PACS}), SFB still outperformed baseline methods. We discuss potential reasons for this in \cref{app:perf_no_compl} and hope that future work can identify weaker sufficient conditions.

\textbf{Exploiting newly-available test-domain features without labels.} While we focused on domain generalization~(DG) and the goal of (re)learning how to use the same spurious features (e.g., color) in a new way, our solution to the ``marginal problem'' in \cref{sec:theoretical_analysis:marginal_problem} can be used to exploit a completely new set of (complementary) features in the test domain that weren't available in the training domains. For example, given a stable predictor of diabetes based on causal features (e.g., age, genetics), SFB could exploit new unlabeled data containing previously-unseen effect features (e.g., glucose levels). We hope future work can explore such uses of SFB.

\begin{ack}
    The authors thank Chris Williams and Ian Mason for providing feedback on an earlier draft, as well as the MPI T\"ubingen causality group for helpful discussions and comments.
    This work was supported by the T\"ubingen AI Center (FKZ: 01IS18039B) and by the Deutsche Forschungsgemeinschaft (DFG, German Research Foundation) under Germany’s Excellence Strategy – EXC number 2064/1 – Project number 390727645.
    The authors declare no competing interests.
\end{ack}

\bibliography{bib}

\begin{thebibliography}{}

\bibitem[Arjovsky et~al., 2020]{arjovsky2020invariant}
Arjovsky, M., Bottou, L., Gulrajani, I., and Lopez-Paz, D. (2020).
\newblock Invariant risk minimization.
\newblock arXiv:1907.02893.

\bibitem[Ba and Caruana, 2014]{ba2014deep}
Ba, J. and Caruana, R. (2014).
\newblock Do deep nets really need to be deep?
\newblock {\em Advances in Neural Information Processing Systems}, 27.

\bibitem[Bandi et~al., 2018]{bandi2018camelyon17}
Bandi, P., Geessink, O., Manson, Q., Van~Dijk, M., Balkenhol, M., Hermsen, M.,
  Bejnordi, B.~E., Lee, B., Paeng, K., Zhong, A., et~al. (2018).
\newblock From detection of individual metastases to classification of lymph
  node status at the patient level: the camelyon17 challenge.
\newblock {\em IEEE Transactions on Medical Imaging}, 38(2):550--560.

\bibitem[Beery et~al., 2018]{beery2018recognition}
Beery, S., Van~Horn, G., and Perona, P. (2018).
\newblock Recognition in terra incognita.
\newblock In {\em Proceedings of the European Conference on Computer Vision},
  pages 456--473.

\bibitem[Ben-David et~al., 2010]{ben2010theory}
Ben-David, S., Blitzer, J., Crammer, K., Kulesza, A., Pereira, F., and Vaughan,
  J.~W. (2010).
\newblock A theory of learning from different domains.
\newblock {\em Machine Learning}, 79(1):151--175.

\bibitem[Bickel et~al., 2009]{bickel2009discriminative}
Bickel, S., Br{\"u}ckner, M., and Scheffer, T. (2009).
\newblock Discriminative learning under covariate shift.
\newblock {\em Journal of Machine Learning Research}, 10(9).

\bibitem[Blanchard et~al., 2016]{blanchard2016classification}
Blanchard, G., Flaska, M., Handy, G., Pozzi, S., and Scott, C. (2016).
\newblock Classification with asymmetric label noise: Consistency and maximal
  denoising.
\newblock {\em Electronic Journal of Statistics}, 10:2780--2824.

\bibitem[Blanchard et~al., 2011]{blanchard2011generalizing}
Blanchard, G., Lee, G., and Scott, C. (2011).
\newblock Generalizing from several related classification tasks to a new
  unlabeled sample.
\newblock In {\em Advances in Neural Information Processing Systems},
  volume~24.

\bibitem[Blitzer et~al., 2007]{blitzer2007learning}
Blitzer, J., Crammer, K., Kulesza, A., Pereira, F., and Wortman, J. (2007).
\newblock Learning bounds for domain adaptation.
\newblock {\em Advances in Neural Information Processing Systems}, 20.

\bibitem[Blum and Mitchell, 1998]{blum1998combining}
Blum, A. and Mitchell, T. (1998).
\newblock Combining labeled and unlabeled data with co-training.
\newblock In {\em Proceedings of the eleventh annual conference on
  Computational learning theory}, pages 92--100.

\bibitem[Buciluǎ et~al., 2006]{bucilua2006model}
Buciluǎ, C., Caruana, R., and Niculescu-Mizil, A. (2006).
\newblock Model compression.
\newblock In {\em Proceedings of the 12th ACM SIGKDD international conference
  on Knowledge discovery and data mining}, pages 535--541.

\bibitem[Bui et~al., 2021]{bui2021exploiting}
Bui, M.-H., Tran, T., Tran, A., and Phung, D. (2021).
\newblock Exploiting domain-specific features to enhance domain generalization.
\newblock In {\em Advances in Neural Information Processing Systems},
  volume~34.

\bibitem[Eastwood et~al., 2021]{eastwood2021unitlevel}
Eastwood, C., Mason, I., and Williams, C. (2021).
\newblock Unit-level surprise in neural networks.
\newblock In {\em I (Still) Can't Believe It's Not Better! NeurIPS 2021
  Workshop}.

\bibitem[Eastwood et~al., 2022a]{eastwood2022source}
Eastwood, C., Mason, I., Williams, C., and Sch{\"o}lkopf, B. (2022a).
\newblock Source-free adaptation to measurement shift via bottom-up feature
  restoration.
\newblock In {\em International Conference on Learning Representations}.

\bibitem[Eastwood et~al., 2022b]{eastwood2022probable}
Eastwood, C., Robey, A., Singh, S., von K\"ugelgen, J., Hassani, H., Pappas,
  G.~J., and Sch\"olkopf, B. (2022b).
\newblock Probable domain generalization via quantile risk minimization.
\newblock In {\em Advances in Neural Information Processing Systems}.

\bibitem[Fei-Fei et~al., 2006]{fei2006one}
Fei-Fei, L., Fergus, R., and Perona, P. (2006).
\newblock One-shot learning of object categories.
\newblock {\em IEEE Transactions on Pattern Analysis and Machine Intelligence},
  28(4):594--611.

\bibitem[Finn et~al., 2017]{finn2017model}
Finn, C., Abbeel, P., and Levine, S. (2017).
\newblock Model-agnostic meta-learning for fast adaptation of deep networks.
\newblock In {\em International Conference on Machine Learning}, pages
  1126--1135.

\bibitem[Flach, 2016]{flach2016classifier}
Flach, P.~A. (2016).
\newblock Classifier calibration.
\newblock In {\em Encyclopedia of machine learning and data mining}. Springer
  US.

\bibitem[Fr{\'e}chet, 1951]{frechet1951tableaux}
Fr{\'e}chet, M. (1951).
\newblock Sur les tableaux de corr{\'e}lation dont les marges sont donn{\'e}es.
\newblock {\em Ann. Univ. Lyon, 3\^{} e serie, Sciences, Sect. A}, 14:53--77.

\bibitem[Galstyan and Cohen, 2008]{galstyan2008empirical}
Galstyan, A. and Cohen, P.~R. (2008).
\newblock Empirical comparison of “hard” and “soft” label propagation
  for relational classification.
\newblock In {\em Inductive Logic Programming: 17th International Conference,
  ILP 2007, Corvallis, OR, USA, June 19-21, 2007, Revised Selected Papers 17},
  pages 98--111. Springer.

\bibitem[Geirhos et~al., 2020]{geirhos2020shorcut}
Geirhos, R., Jacobsen, J.-H., Michaelis, C., Zemel, R., Brendel, W., Bethge,
  M., and Wichmann, F.~A. (2020).
\newblock Shortcut learning in deep neural networks.
\newblock {\em Nature Machine Intelligence}, 2:665–673.

\bibitem[Gretton et~al., 2005]{gretton2005measuring}
Gretton, A., Bousquet, O., Smola, A., and Sch{\"o}lkopf, B. (2005).
\newblock Measuring statistical dependence with {H}ilbert-{S}chmidt norms.
\newblock In {\em Algorithmic Learning Theory: 16th International Conference,
  ALT 2005, Singapore, October 8-11, 2005. Proceedings 16}, pages 63--77.
  Springer.

\bibitem[Gretton et~al., 2009]{gretton2009covariate}
Gretton, A., Smola, A., Huang, J., Schmittfull, M., Borgwardt, K., and
  Sch{\"o}lkopf, B. (2009).
\newblock Covariate shift by kernel mean matching.
\newblock {\em Dataset shift in machine learning}, 3(4):5.

\bibitem[Gulrajani and Lopez-Paz, 2020]{gulrajani2020search}
Gulrajani, I. and Lopez-Paz, D. (2020).
\newblock In search of lost domain generalization.
\newblock {\em arXiv preprint arXiv:2007.01434}.

\bibitem[Guo et~al., 2017]{guo2017calibration}
Guo, C., Pleiss, G., Sun, Y., and Weinberger, K.~Q. (2017).
\newblock On calibration of modern neural networks.
\newblock In {\em International Conference on Machine Learning}, pages
  1321--1330.

\bibitem[Hendrycks and Dietterich, 2019]{hendrycks2019benchmarking}
Hendrycks, D. and Dietterich, T. (2019).
\newblock Benchmarking neural network robustness to common corruptions and
  perturbations.
\newblock In {\em International Conference on Learning Representations}.

\bibitem[Hinton et~al., 2015]{hinton2015distilling}
Hinton, G., Vinyals, O., and Dean, J. (2015).
\newblock Distilling the knowledge in a neural network.
\newblock {\em arXiv preprint arXiv:1503.02531}.

\bibitem[Hoeffding, 1940]{hoeffding1940masstabinvariante}
Hoeffding, W. (1940).
\newblock Masstabinvariante korrelationstheorie.
\newblock {\em Schriften des Mathematischen Instituts und Instituts fur
  Angewandte Mathematik der Universitat Berlin}, 5:181--233.

\bibitem[Hoeffding, 1941]{hoeffding1941masstabinvariante}
Hoeffding, W. (1941).
\newblock Masstabinvariante korrelationsmasse f{\"u}r diskontinuierliche
  verteilungen.
\newblock {\em Archiv f{\"u}r mathematische Wirtschafts-und Sozialforschung},
  7:49--70.

\bibitem[Iwasawa and Matsuo, 2021]{iwasawa2021testtime}
Iwasawa, Y. and Matsuo, Y. (2021).
\newblock Test-time classifier adjustment module for model-agnostic domain
  generalization.
\newblock In {\em Advances in Neural Information Processing Systems}.

\bibitem[Jiang and Veitch, 2022]{jiang2022invariant}
Jiang, Y. and Veitch, V. (2022).
\newblock Invariant and transportable representations for anti-causal domain
  shifts.
\newblock In Oh, A.~H., Agarwal, A., Belgrave, D., and Cho, K., editors, {\em
  Advances in Neural Information Processing Systems}.

\bibitem[Kirichenko et~al., 2022]{kirichenko2022last}
Kirichenko, P., Izmailov, P., and Wilson, A.~G. (2022).
\newblock Last layer re-training is sufficient for robustness to spurious
  correlations.
\newblock In {\em Advances in Neural Information Processing Systems}.

\bibitem[Koh et~al., 2021]{wilds2021}
Koh, P.~W., Sagawa, S., Marklund, H., Xie, S.~M., Zhang, M., Balsubramani, A.,
  Hu, W., Yasunaga, M., Phillips, R.~L., Gao, I., Lee, T., David, E., Stavness,
  I., Guo, W., Earnshaw, B.~A., Haque, I.~S., Beery, S., Leskovec, J., Kundaje,
  A., Pierson, E., Levine, S., Finn, C., and Liang, P. (2021).
\newblock {WILDS}: A benchmark of in-the-wild distribution shifts.
\newblock In {\em International Conference on Machine Learning}.

\bibitem[Krogel and Scheffer, 2004]{krogel2004multi}
Krogel, M.-A. and Scheffer, T. (2004).
\newblock Multi-relational learning, text mining, and semi-supervised learning
  for functional genomics.
\newblock {\em Machine Learning}, 57:61--81.

\bibitem[Krueger et~al., 2021]{krueger21rex}
Krueger, D., Caballero, E., Jacobsen, J.-H., Zhang, A., Binas, J., Zhang, D.,
  Priol, R.~L., and Courville, A. (2021).
\newblock Out-of-distribution generalization via risk extrapolation ({REx}).
\newblock In {\em International Conference on Machine Learning}, volume 139,
  pages 5815--5826.

\bibitem[Lee et~al., 2013]{lee2013pseudo}
Lee, D.-H. et~al. (2013).
\newblock Pseudo-label: The simple and efficient semi-supervised learning
  method for deep neural networks.
\newblock In {\em Workshop on Challenges in Representation Learning, ICML},
  volume~3.

\bibitem[Li et~al., 2017a]{li2017deeper}
Li, D., Yang, Y., Song, Y.-Z., and Hospedales, T.~M. (2017a).
\newblock Deeper, broader and artier domain generalization.
\newblock In {\em Proceedings of the IEEE International Conference on Computer
  Vision (ICCV)}.

\bibitem[Li et~al., 2017b]{li2017learning}
Li, Y., Yang, J., Song, Y., Cao, L., Luo, J., and Li, L.-J. (2017b).
\newblock Learning from noisy labels with distillation.
\newblock In {\em Proceedings of the IEEE international conference on computer
  vision}, pages 1910--1918.

\bibitem[Liang et~al., 2020]{liang2020shot}
Liang, J., Hu, D., and Feng, J. (2020).
\newblock Do we really need to access the source data? {Source} hypothesis
  transfer for unsupervised domain adaptation.
\newblock In {\em International Conference on Machine Learning (ICML)}, pages
  6028--6039.

\bibitem[Makar et~al., 2022]{makar2022causally}
Makar, M., Packer, B., Moldovan, D., Blalock, D., Halpern, Y., and D’Amour,
  A. (2022).
\newblock Causally motivated shortcut removal using auxiliary labels.
\newblock In {\em International Conference on Artificial Intelligence and
  Statistics}, pages 739--766. PMLR.

\bibitem[Mansour et~al., 2008]{mansour2008domain}
Mansour, Y., Mohri, M., and Rostamizadeh, A. (2008).
\newblock Domain adaptation with multiple sources.
\newblock {\em Advances in neural information processing systems}, 21.

\bibitem[Muandet et~al., 2013]{muandet2013domain}
Muandet, K., Balduzzi, D., and Sch{\"o}lkopf, B. (2013).
\newblock Domain generalization via invariant feature representation.
\newblock In {\em International Conference on Machine Learning}, pages 10--18.

\bibitem[Nagarajan et~al., 2021]{nagarajan2021understanding}
Nagarajan, V., Andreassen, A., and Neyshabur, B. (2021).
\newblock Understanding the failure modes of out-of-distribution
  generalization.
\newblock In {\em International Conference on Learning Representations}.

\bibitem[Natarajan et~al., 2013]{natarajan2013learning}
Natarajan, N., Dhillon, I.~S., Ravikumar, P.~K., and Tewari, A. (2013).
\newblock Learning with noisy labels.
\newblock {\em Advances in neural information processing systems}, 26.

\bibitem[Peters et~al., 2016]{peters2016causal}
Peters, J., B{\"u}hlmann, P., and Meinshausen, N. (2016).
\newblock Causal inference by using invariant prediction: identification and
  confidence intervals.
\newblock {\em Journal of the Royal Statistical Society. Series B (Statistical
  Methodology)}, pages 947--1012.

\bibitem[Pezeshki et~al., 2021]{pezeshki2021gradient}
Pezeshki, M., Kaba, O., Bengio, Y., Courville, A.~C., Precup, D., and Lajoie,
  G. (2021).
\newblock Gradient starvation: A learning proclivity in neural networks.
\newblock {\em Advances in Neural Information Processing Systems},
  34:1256--1272.

\bibitem[Puli et~al., 2022]{puli2021out}
Puli, A.~M., Zhang, L.~H., Oermann, E.~K., and Ranganath, R. (2022).
\newblock Out-of-distribution generalization in the presence of
  nuisance-induced spurious correlations.
\newblock In {\em International Conference on Learning Representations}.

\bibitem[Rame et~al., 2022]{rame2022fishr}
Rame, A., Dancette, C., and Cord, M. (2022).
\newblock Fishr: Invariant gradient variances for out-of-distribution
  generalization.
\newblock In {\em International Conference on Machine Learning}, pages
  18347--18377.

\bibitem[Rojas-Carulla et~al., 2018]{rojas2018invariant}
Rojas-Carulla, M., Sch{\"o}lkopf, B., Turner, R., and Peters, J. (2018).
\newblock Invariant models for causal transfer learning.
\newblock {\em The Journal of Machine Learning Research}, 19(1):1309--1342.

\bibitem[Rosenfeld et~al., 2022]{rosenfeld2022domain}
Rosenfeld, E., Ravikumar, P., and Risteski, A. (2022).
\newblock Domain-adjusted regression or: {ERM} may already learn features
  sufficient for out-of-distribution generalization.
\newblock {\em arXiv preprint arXiv:2202.06856}.

\bibitem[Rothenh{\"a}usler et~al., 2021]{rothenhausler2021anchor}
Rothenh{\"a}usler, D., Meinshausen, N., B{\"u}hlmann, P., and Peters, J.
  (2021).
\newblock Anchor regression: Heterogeneous data meet causality.
\newblock {\em Journal of the Royal Statistical Society: Series B (Statistical
  Methodology)}, 83(2):215--246.

\bibitem[Rusak et~al., 2022]{rusak2022if}
Rusak, E., Schneider, S., Pachitariu, G., Eck, L., Gehler, P.~V., Bringmann,
  O., Brendel, W., and Bethge, M. (2022).
\newblock If your data distribution shifts, use self-learning.
\newblock {\em Transactions on Machine Learning Research}.

\bibitem[Sagawa et~al., 2019]{sagawa2019distributionally}
Sagawa, S., Koh, P.~W., Hashimoto, T.~B., and Liang, P. (2019).
\newblock Distributionally robust neural networks.
\newblock In {\em International Conference on Learning Representations}.

\bibitem[Schapire, 1990]{schapire1990strength}
Schapire, R.~E. (1990).
\newblock The strength of weak learnability.
\newblock {\em Machine Learning}, 5:197--227.

\bibitem[Sch{\"o}lkopf, 2022]{scholkopf2022causality}
Sch{\"o}lkopf, B. (2022).
\newblock Causality for machine learning.
\newblock In {\em Probabilistic and Causal Inference: The Works of {J}udea
  {P}earl}, pages 765--804. Association for Computing Machinery.

\bibitem[Scott et~al., 2013]{scott2013classification}
Scott, C., Blanchard, G., and Handy, G. (2013).
\newblock Classification with asymmetric label noise: Consistency and maximal
  denoising.
\newblock In {\em Conference on learning theory}, pages 489--511. PMLR.

\bibitem[Shi et~al., 2022a]{shi2022gradient}
Shi, Y., Seely, J., Torr, P., N, S., Hannun, A., Usunier, N., and Synnaeve, G.
  (2022a).
\newblock Gradient matching for domain generalization.
\newblock In {\em International Conference on Learning Representations}.

\bibitem[Shi et~al., 2022b]{shi2021gradient}
Shi, Y., Seely, J., Torr, P., Siddharth, N., Hannun, A., Usunier, N., and
  Synnaeve, G. (2022b).
\newblock Gradient matching for domain generalization.
\newblock In {\em International Conference on Learning Representations}.

\bibitem[Silva~Filho et~al., 2023]{silva2023classifier}
Silva~Filho, T., Song, H., Perello-Nieto, M., Santos-Rodriguez, R., Kull, M.,
  and Flach, P. (2023).
\newblock Classifier calibration: a survey on how to assess and improve
  predicted class probabilities.
\newblock {\em Machine Learning}, pages 1--50.

\bibitem[Song et~al., 2022]{song2022learning}
Song, H., Kim, M., Park, D., Shin, Y., and Lee, J.-G. (2022).
\newblock Learning from noisy labels with deep neural networks: A survey.
\newblock {\em IEEE Transactions on Neural Networks and Learning Systems}.

\bibitem[Sugiyama and Kawanabe, 2012]{sugiyama2012machine}
Sugiyama, M. and Kawanabe, M. (2012).
\newblock {\em Machine learning in non-stationary environments: Introduction to
  covariate shift adaptation}.
\newblock MIT press.

\bibitem[Sugiyama et~al., 2007]{sugiyama2007covariate}
Sugiyama, M., Krauledat, M., and M{\"u}ller, K.-R. (2007).
\newblock Covariate shift adaptation by importance weighted cross validation.
\newblock {\em Journal of Machine Learning Research}, 8(5).

\bibitem[Sun et~al., 2022]{Sun2022beyond}
Sun, Q., Murphy, K., Ebrahimi, S., and D'Amour, A. (2022).
\newblock Beyond invariance: Test-time label-shift adaptation for distributions
  with" spurious" correlations.
\newblock {\em arXiv preprint arXiv:2211.15646}.

\bibitem[Tanaka et~al., 2018]{tanaka2018joint}
Tanaka, D., Ikami, D., Yamasaki, T., and Aizawa, K. (2018).
\newblock Joint optimization framework for learning with noisy labels.
\newblock In {\em Proceedings of the IEEE Conference on Computer Vision and
  Pattern Recognition}, pages 5552--5560.

\bibitem[Vapnik, 1991]{vapnik1991principles}
Vapnik, V. (1991).
\newblock Principles of risk minimization for learning theory.
\newblock {\em Advances in Neural Information Processing Systems}, 4.

\bibitem[Vapnik, 1998]{Vapnik98}
Vapnik, V.~N. (1998).
\newblock {\em Statistical Learning Theory}.
\newblock Wiley, New York, NY.

\bibitem[Veitch et~al., 2021]{veitch2021counterfactual}
Veitch, V., D'Amour, A., Yadlowsky, S., and Eisenstein, J. (2021).
\newblock Counterfactual invariance to spurious correlations: Why and how to
  pass stress tests.
\newblock In {\em Advances in Neural Information Processing Systems}.

\bibitem[von K{\"u}gelgen et~al., 2019]{von2019semi}
von K{\"u}gelgen, J., Mey, A., and Loog, M. (2019).
\newblock Semi-generative modelling: Covariate-shift adaptation with cause and
  effect features.
\newblock In {\em The 22nd International Conference on Artificial Intelligence
  and Statistics}, pages 1361--1369. PMLR.

\bibitem[von K{\"u}gelgen et~al., 2020]{kugelgen2020semi}
von K{\"u}gelgen, J., Mey, A., Loog, M., and Sch{\"o}lkopf, B. (2020).
\newblock Semi-supervised learning, causality, and the conditional cluster
  assumption.
\newblock In {\em Conference on Uncertainty in Artificial Intelligence}, pages
  1--10. PMLR.

\bibitem[Wald et~al., 2021]{wald2021calibration}
Wald, Y., Feder, A., Greenfeld, D., and Shalit, U. (2021).
\newblock On calibration and out-of-domain generalization.
\newblock {\em Advances in neural information processing systems},
  34:2215--2227.

\bibitem[Wang et~al., 2021a]{wang2021tent}
Wang, D., Shelhamer, E., Liu, S., Olshausen, B., and Darrell, T. (2021a).
\newblock Tent: Fully test-time adaptation by entropy minimization.
\newblock In {\em International Conference on Learning Representations}.

\bibitem[Wang et~al., 2021b]{wang2021rethinking}
Wang, D.-B., Feng, L., and Zhang, M.-L. (2021b).
\newblock Rethinking calibration of deep neural networks: Do not be afraid of
  overconfidence.
\newblock {\em Advances in Neural Information Processing Systems},
  34:11809--11820.

\bibitem[Zech et~al., 2018]{zech2018variable}
Zech, J.~R., Badgeley, M.~A., Liu, M., Costa, A.~B., Titano, J.~J., and
  Oermann, E.~K. (2018).
\newblock Variable generalization performance of a deep learning model to
  detect pneumonia in chest radiographs: a cross-sectional study.
\newblock {\em PLoS Medicine}, 15(11).

\bibitem[Zhang et~al., 2022]{zhang2022rich}
Zhang, J., Lopez-Paz, D., and Bottou, L. (2022).
\newblock Rich feature construction for the optimization-generalization
  dilemma.
\newblock In {\em International Conference on Machine Learning}.

\bibitem[Zhao et~al., 2019]{zhao2019learning}
Zhao, H., Des~Combes, R.~T., Zhang, K., and Gordon, G. (2019).
\newblock On learning invariant representations for domain adaptation.
\newblock In {\em International Conference on Machine Learning}, pages
  7523--7532. PMLR.

\bibitem[Zhao et~al., 2018]{zhao2018adversarial}
Zhao, H., Zhang, S., Wu, G., Moura, J.~M., Costeira, J.~P., and Gordon, G.~J.
  (2018).
\newblock Adversarial multiple source domain adaptation.
\newblock {\em Advances in Neural Information Processing Systems}, 31.

\bibitem[Zheng and Makar, 2022]{zheng2022causally}
Zheng, J. and Makar, M. (2022).
\newblock Causally motivated multi-shortcut identification \& removal.
\newblock {\em Advances in Neural Information Processing Systems}.

\end{thebibliography}
\bibliographystyle{apalike}

\newpage
\appendix

\addcontentsline{toc}{section}{Appendices}%
\part{Appendices} %
\changelinkcolor{black}{}
\parttoc%
\newpage
\changelinkcolor{red}{}

\section{Proof and Further Discussion of Theorem~\ref{thm:OOV}}

\subsection{Proof of Theorem~\ref{thm:OOV}}
\label{app:proof_of_OOV_theorem}

In this section, we prove our main results regarding the marginal generalization problem presented in Section~\ref{sec:theoretical_analysis}, namely \Cref{thm:OOV}. For the reader's convenience, we restate \Cref{thm:OOV} here:

\begin{customthm}{\ref{thm:OOV}}[Marginal generalization with for binary labels and complementary features]
    Consider three random variables $X_S$, $X_U$, and $Y$, where
    \begin{enumerate}[noitemsep]
        \item $Y$ is binary ($\{0,1\}$-valued),
        \item $X_S$ and $X_U$ are complementary features for $Y$ (i.e., $X_S \indep X_U | Y$), and
        \item $X_S$ is informative of $Y$ ($X_S \notindep Y$).
    \end{enumerate}
    Then, the joint distribution of $(X_S, X_U, Y)$ can be written in terms of the joint distributions of $(X_S, Y)$ and $(X_S, X_U)$. Specifically, if $\hat Y|X_S \sim \operatorname{Bernoulli}(\Pr[Y = 1|X_S])$ is pseudo-label and
    \begin{equation}
        \epsilon_0 := \Pr[\hat Y = 0|Y = 0]
        \quad \text{ and } \quad
        \epsilon_1 := \Pr[\hat Y = 1|Y = 1]
        \label{def:epsilon_0_epsilon_1_single_environment_app}
    \end{equation}
    are the conditional probabilities that $\hat Y$ and $Y$ agree, given $Y = 0$ and $Y = 1$, respectively, then,
    \begin{enumerate}[noitemsep]
        \item $\epsilon_0 + \epsilon_1 > 1$,
        \item $\displaystyle \Pr[Y = 1|X_U] = \frac{\Pr[\hat Y = 1|X_U] + \epsilon_0 - 1}{\epsilon_0 + \epsilon_1 - 1}$, and
        \item $\displaystyle \Pr[Y = 1|X_S,X_U] = \sigma \left( \logit(\Pr[Y = 1|X_S]) + \logit(\Pr[Y = 1|X_U]) - \logit(\Pr[Y = 1]) \right)$.
    \end{enumerate}
    \label{thm:OOV_app}
\end{customthm}

Before proving \Cref{thm:OOV}, we provide some examples demonstrating that the complementarity and informativeness assumptions in \Cref{thm:OOV} cannot be dropped.

\begin{example}
    Suppose $X_S$ and $X_U$ have independent Bernoulli$(1/2)$ distributions. Then, $X_S$ is informative of both of the binary variables $Y_1 = X_S X_U$ and $Y_2 = X_S (1 - X_U)$ and both have identical conditional distributions given $X_S$, but $Y_1$ and $Y_2$ have different conditional distributions given $X_U$:
    \[\Pr[Y_1 = 1|X_U = 0] = 0 \neq 1/2 = \Pr[Y_2 = 1|X_U = 0].\]
    Thus, the complementarity condition cannot be omitted.
    
    On the other hand, $X_S$ and $X_U$ are complementary for both $Y_3 = X_U$ and an independent $Y_4 \sim \operatorname{Bernoulli}(1/2)$ and both $Y_3$ and $Y_4$ both have identical conditional distributions given $X_S$, but $Y_1$ and $Y_2$ have different conditional distributions given $X_U$:
    \[\Pr[Y_3 = 1|X_U = 1] = 1/2 \neq 1 = \Pr[Y_4 = 1|X_U = 1].\]
    Thus, the informativeness condition cannot be omitted.
    \label{example:informativeness_and_complementarity_necessary}
\end{example}

Before proving \Cref{thm:OOV}, we prove Lemma~\ref{lemma:epsilon_0_plus_epsilon_1_equals_1_app}, which allows us to safely divide by the quantity $\epsilon_0 + \epsilon_1 - 1$ in the formula for $\Pr[Y = 1|X_U]$, under the condition that $X_S$ is informative of $Y$.

\begin{customlemma}{\ref{lemma:epsilon_0_plus_epsilon_1_equals_1}}
    In the setting of \Cref{thm:OOV}, let $\epsilon_0$ and $\epsilon_1$ be the class-wise pseudo-label accuracies defined in as in Eq.~\eqref{def:epsilon_0_epsilon_1_single_environment_app}. Then, $\epsilon_0 + \epsilon_1 = 1$ if and only if $X_S$ and $Y$ are independent.
    \label{lemma:epsilon_0_plus_epsilon_1_equals_1_app}
\end{customlemma}
Note that the entire result also holds, with almost identical proof, in the multi-environment setting of Sections~\ref{sec:inv_compl} and \ref{sec:algorithm}, conditioned on a particular environment $E$.

\begin{proof}
    We first prove the forward implication. Suppose $\epsilon_0 + \epsilon_1 = 1$. If $\Pr[Y = 1] \in \{0, 1\}$, then $X_S$ and $Y$ are trivially independent, so we may assume $\Pr[Y = 1] \in (0, 1)$. Then,
    \begin{align*}
        \E[\hat Y]
        & = \epsilon_1 \Pr[Y = 1] + (1 - \epsilon_0) (1 - \Pr[Y = 1])
          & \text{(Law of Total Expectation)} \\
        & = (\epsilon_0 + \epsilon_1 - 1) \Pr[Y = 1] + 1 - \epsilon_0 \\
        & = 1 - \epsilon_0
          & \text{($\epsilon_0 + \epsilon_1 = 1$)} \\
        & = \E[\hat Y|Y = 0].
          & \text{(Definition of $\epsilon_0$)} \\
    \end{align*}
    Since $Y$ is binary and $\Pr[Y = 1] \in (0, 1)$, it follows that $\E[\hat Y] = \E[\hat Y|Y = 0] = \E[\hat Y|Y = 1]$; i.e., $\E[\hat Y|Y] \indep Y$. Since $\hat Y$ is binary, its distribution is specified entirely by its mean, and so $\hat Y \indep Y$. It follows that the covariance between $\hat Y$ and $Y$ is $0$:
    \begin{align*}
        0
        & = \E[(Y - \E[Y]) (\hat Y - \E[\hat Y])] \\
        & = \E[\E[(Y - \E[Y]) (\hat Y - \E[\hat Y])|X_S]]
            & \text{(Law of Total Expectation)} \\
        & = \E[\E[Y - \E[Y]|X_S] \E[\hat Y - \E[\hat Y]|X_S]]
            & \text{($Y \indep \hat Y | X_S$)} \\
        & = \E[(\E[Y - \E[Y]|X_S])^2],
    \end{align*}
    where the final equality holds because $\hat Y$ and $Y$ have identical conditional distributions given $X_S$.
    Since the $\calL_2$ norm of a random variable is $0$ if and only if the variable is $0$ almost surely, it follows that, $P_{X_S}$-almost surely,
    \[0 = \E[Y - \E[Y]|X_S]
        = \E[Y|X_S] - \E[Y],\]
    so that $\E[Y|X_S] \indep X_S$. Since $Y$ is binary, its distribution is specified entirely by its mean, and so $Y \indep X_S$, proving the forward implication.
    
    To prove the reverse implication, suppose $X_S$ and $Y$ are independent. Then $\hat Y$ and $Y$ are also independent. Hence,
    \[\epsilon_1
        = \E[\hat Y|Y = 1]
        = \E[\hat Y|Y = 0]
        = 1 - \epsilon_0,\]
    so that $\epsilon_0 + \epsilon_1 = 1$.
\end{proof}

We now use Lemma~\ref{lemma:epsilon_0_plus_epsilon_1_equals_1_app} to prove \Cref{thm:OOV}:

\begin{proof}
    To begin, note that $\hat Y$ has the same conditional distribution given $X_S$ as $Y$ (i.e., $P_{\hat Y|X_S} = P_{Y|X_S}$ and that $\hat Y$ is conditionally independent of $Y$ given $X_S$ ($\hat Y \indep Y | X_S$).
    Then, since
    \begin{equation}
        \Pr[\hat Y = 1] = \E[\Pr[Y = 1|X_S]] = \Pr[Y = 1],
        \label{eq:pseudolabel_has_same_marginals}
    \end{equation}
    we have
    \begin{align*}
        \epsilon_1
          = \Pr[\hat Y = 1|Y = 1]
        & = \frac{\Pr \left[ Y = 1, \hat Y = 1 \right]}{\Pr[Y = 1]}
            & \text{(Definition of $\epsilon_1$)} \\
        & = \frac{\Pr \left[ Y = 1, \hat Y = 1 \right]}{\Pr[\hat Y = 1]}
          & \text{(Eq.~\eqref{eq:pseudolabel_has_same_marginals})} \\
        & = \frac{\E_{X_S}[\Pr \left[ Y = 1, \hat Y = 1 | X_S \right]]}{\E_{X_S}[\Pr[\hat Y = 1|X_S]]}
            & \text{(Law of Total Expectation)} \\
        & = \frac{\E_{X_S}[\Pr[Y = 1|X_S] \Pr[\hat Y = 1|X_S]]}{\E_{X_S}[\Pr[\hat Y = 1|X_S]]}
          & \text{($\hat Y \indep Y | X_S$)} \\
        & = \frac{\E_{X_S}\left[ \left( \Pr[Y = 1|X_S] \right)^2 \right]}{\E_{X_S}[\Pr[Y = 1|X_S]]}
          & \text{($P_{\hat Y|X_S} = P_{Y|X_S}$)}
    \end{align*}
    entirely in terms of the conditional distribution $P_{Y|X_S}$ and the marginal distribution $P_{X_S}$. Similarly, $\epsilon_0$ can be written as
    $\epsilon_0 = \frac{\E_{X_S}\left[ \left( \Pr[Y = 0|X_S] \right)^2 \right]}{\E_{X_S}[\Pr[Y = 0|X_S]]}$. Meanwhile, by the law of total expectation, and the assumption that $X_S$ (and hence $\hat Y$) is conditionally independent of $X_U$ given $Y$, the conditional distribution $P_{\hat Y|X_U}$ of $\hat Y$ given $X_U$ can be written as
    \begin{align*}
        & \Pr[\hat Y = 1|X_U] \\
        & = \Pr[\hat Y = 1|Y = 0, X_U] \Pr[Y = 0|X_U]
          + \Pr[\hat Y = 1|Y = 1, X_U] \Pr[Y = 1|X_U] \\
        & = \Pr[\hat Y = 1|Y = 0] \Pr[Y = 0|X_U]
          + \Pr[\hat Y = 1|Y = 1] \Pr[Y = 1|X_U] \\
        & = (1 - \epsilon_0) (1 - \Pr[Y = 1|X_U])
          + \epsilon_1 \Pr[Y = 1|X_U] \\
        & = (\epsilon_0 + \epsilon_1 - 1) \Pr[Y = 1|X_U] + 1 - \epsilon_0.
    \end{align*}
    By Lemma~\ref{lemma:epsilon_0_plus_epsilon_1_equals_1_app}, the assumption $X_S \notindep Y$ implies $\epsilon_0 + \epsilon_1 \neq 1$. Hence, re-arranging the above equality gives us the conditional distribution $P_{Y|X_U}$ of $Y$ given $X_U$ purely in terms of the conditional $P_{Y|X_S}$ and $P_{X_S,X_U}$:
    \[\Pr[Y = 1|X_U = X_U]
      = \frac{\Pr[\hat Y = 1|X_U = X_U] + \epsilon_0 - 1}{\epsilon_0 + \epsilon_1 - 1}.\]
    It remains now to write the conditional distribution $P_{Y|X_S,X_U}$ in terms of the conditional distributions $P_{Y|X_S}$ and $P_{Y|X_U}$ and the marginal $P_Y$. Note that
    \begin{align*}
        \frac{\Pr[Y = 1|X_S, X_U]}{\Pr[Y = 0|X_S, X_U]}
        & = \frac{\Pr[X_S, X_U|Y = 1] \Pr[Y = 1]}{\Pr[X_S, X_U|Y = 0] \Pr[Y = 0]}
          & \text{(Bayes' Rule)} \\
        & = \frac{\Pr[X_S|Y = 1] \Pr[X_U|Y = 1] \Pr[Y = 1]}{\Pr[X_S|Y = 0] \Pr[X_U|Y = 0] \Pr[Y = 0]}
          & \text{(Complementarity)} \\
        & = \frac{\Pr[Y = 1|X_S] \Pr[Y = 1|X_U] \Pr[Y = 0]}{\Pr[Y = 0|X_S] \Pr[Y = 0|X_U] \Pr[Y = 1]}.
          & \text{(Bayes' Rule)}
    \end{align*}
    It follows that the $\logit$ of $\Pr[Y = 1|X_S, X_U]$ can be written as the sum of a term depending only on $X_S$, a term depending only on $X_U$, and a constant term:
    \begin{align*}
        \logit \left( \Pr[Y = 1|X_S, X_U] \right)
        & = \log \frac{\Pr[Y = 1|X_S, X_U]}{1 - \Pr[Y = 1|X_S, X_U]} \\
        & = \log \frac{\Pr[Y = 1|X_S, X_U]}{\Pr[Y = 0|X_S, X_U]} \\
        & = \log \frac{\Pr[Y = 1|X_S]}{\Pr[Y = 0|X_S]} + \log \frac{\Pr[Y = 1|X_U]}{\Pr[Y = 0|X_U]} - \log \frac{\Pr[Y = 1]}{\Pr[Y = 0]} \\
        & = \logit \left( \Pr[Y = 1|X_S] \right) + \logit \left( \Pr[Y = 1|X_U] \right) - \logit \left( \Pr[Y = 1] \right).
    \end{align*}
    Since the sigmoid $\sigma$ is the inverse of $\logit$,
    \[\Pr[Y = 1|X_S, X_U]
        = \sigma \left( \logit \left( \Pr[Y = 1|X_S] \right) + \logit \left( \Pr[Y = 1|X_U] \right) - \logit \left( \Pr[Y = 1] \right) \right),\]
    which, by Eq.~\eqref{eq:pr_Y_given_X2}, can be written in terms of the conditional distribution $P_{Y|X_S}$ and the joint distribution $P_{X_S,X_U}$.
\end{proof}

\subsection{Further discussion of Theorem~\ref{thm:OOV}}
\label{app:discussion_of_OOV_thm}

\textbf{Connections to learning from noisy labels.}
\Cref{thm:OOV} leverages two theoretical insights about the special structure of pseudo-labels that complement results in the literature on learning from noisy labels.
First, \citet{blanchard2016classification} showed that learning from noisy labels is possible if and only if the total noise level is below the critical threshold $\epsilon_0 + \epsilon_1 > 1$; in the case of learning from pseudo-labels, we show (see Lemma~\ref{lemma:epsilon_0_plus_epsilon_1_equals_1_app} in Appendix~\ref{app:proof_of_OOV_theorem}) that this is satisfied if and only if $X_S$ is informative of $Y$ (i.e., $Y \notindep X_S$).
Second, methods for learning under label noise commonly assume knowledge of $\epsilon_0$ and $\epsilon_1$~\citep{natarajan2013learning}, which may be unrealistic in applications where we have absolutely no true (i.e., test-domain) labels; however, for pseudo-labels sampled from a known conditional probability distribution $P_{Y|X_S}$, one can express these noise levels in terms of $P_{Y|X_S}$ and $P_{X_S}$ and thereby estimate them \emph{without any true labels}, as on \cref{line:hat_epsilon} of \cref{alg:unsupervised_domain_adaptation}.

\textbf{Possible applications of \Cref{thm:OOV} beyond domain adaptation}
The reason we wrote \Cref{thm:OOV} in the more general setting of the marginal problem rather than in the specific context of domain adaptation is that we envision possible applications to a number of problems besides domain adaptation. For example, suppose that, after learning a calibrated machine learning model $M_1$ using a feature $X_S$, we observe an additional feature $X_U$. In the case that $X_S$ and $X_U$ are complementary, \Cref{thm:OOV} justifies using the student-teacher paradigm~\citep{bucilua2006model,ba2014deep,hinton2015distilling} to train a model for predicting $Y$ from $X_U$ (or from $(X_S,X_U)$ jointly) based on predictions from $M_1$. This could be useful if we don't have access to labeled pairs $(X_U, Y)$, or if retraining a model using $X_S$ would require substantial computational resources or access to sensitive or private data. Exploring such approaches could be a fruitful direction for future work

\section{Proof of Theorem~\ref{thm:consistency}}
\label{app:consistency}

This appendix provides a proof of \Cref{thm:consistency}, which provides conditions under which our proposed domain adaptation procedure (Alg.~\ref{alg:unsupervised_domain_adaptation}) is consistent.

We state a formal version of \Cref{thm:consistency}:

\begin{customthm}{\ref{thm:consistency}}[Consistency of the bias-corrected classifier]
    Assume
    \begin{enumerate}
        \item $X_S$ is stable,
        \item $X_S$ and $X_U$ are complementary, and
        \item $X_S$ is informative of $Y$ (i.e., $X_S \notindep Y$).
    \end{enumerate}
    Let $\hat\eta_n : \calX_S \times X_U \to [0, 1]$ given by
    \[\hat\eta_n(x_S, x_U) = \sigma \left( f_S(x_S) + \logit \left( \frac{\hat \eta_{U,n}(x_U) + \hat\epsilon_{0,n} - 1}{\hat\epsilon_{0,n} + \hat\epsilon_{1,n} - 1} \right) - \beta_1 \right),
    \quad \text{ for all } (x_S, x_U) \in \calX_S \times \calX_U,\]
    denote the bias-corrected regression function estimate proposed in Alg.~\ref{alg:unsupervised_domain_adaptation}, and let $\hat h_n : \calX_S \times \calX_U \to \{0, 1\}$ given by
    \[\hat h_n(x_S, x_U) = 1\{\hat\eta(x_S, x_U) > 1/2\},
      \quad \text{ for all } (x_S, x_U) \in \calX_S \times \calX_U,\]
    denote the corresponding hard classifier.
    Let $\eta_U : \calX_U \to [0, 1]$, given by $\eta_U(x_U) = \Pr[Y = 1|X_U = x_U, E = 1]$ for all $x_U \in \calX_U$, denote the true regression function over $X_U$, and let $\hat \eta_{U,n}$ denote its estimate as assumed in Line~\ref{line:hat_eta_U} of Alg.~\ref{alg:unsupervised_domain_adaptation}.
    Then, as $n \to \infty$,
    \begin{enumerate}[label=(\alph*)]
        \item if, for $P_{X_U}$-almost all $x_U \in \calX_U$, $\hat\eta_{U,n}(x_U)) \to \eta_U(x_U)$ in probability, then $\hat\eta_n$ and $\hat h_n$ are weakly consistent (i.e., $\hat\eta_n(x_S,x_U) \to \eta(x_S,x_U)$ $P_{X_S,X_U}$-almost surely and $R(\hat h_n) \to R(h^*)$ in probability).
        \item if, for $P_{X_U}$-almost all $x_U \in \calX_U$, $\hat\eta_{U,n}(x_U)) \to \eta_U(x_U)$ almost surely, then $\hat\eta_n$ and $\hat h_n$ are strongly consistent (i.e., $\hat\eta_n(x_S,x_U) \to \eta(x_S,x_U)$ $P_{X_S,X_U}$-almost surely and $R(\hat h_n) \to R(h^*)$ a.s.).
    \end{enumerate}
    \label{thm:consistency_app}
\end{customthm}

Before proving \Cref{thm:consistency}, we provide a few technical lemmas. The first shows that almost-everywhere convergence of regression functions implies convergence of the corresponding classifiers in classification risk:

\begin{lemma}
    Consider a sequence of regression functions $\eta,\eta_1,\eta_2,... : \calX \to [0,1]$. Let $h,h_1,h_2,... : \calX \to \{0, 1\}$ denote the corresponding classifiers
    \[h(x) = 1\{\eta(x) > 1/2\}
      \quad \text{ and } \quad
      h_i(x) = 1\{\eta_i(x) > 1/2\},
      \quad \text{ for all } i \in \bbN, x \in \calX.\]
    \begin{enumerate}[label=(\alph*)]
        \item If $\eta_n(x) \to \eta(x)$ for $P_X$-almost all $x \in \calX$ in probability, then $R(h_n) \to R(h^*)$ in probability.
        \item If $\eta_n(x) \to \eta(x)$ for $P_X$-almost all $x \in \calX$ almost surely as $n \to \infty$, then $R(h_n) \to R(h)$ almost surely.
    \end{enumerate}
    \label{lemma:convergence_of_regression_functions_to_risk}
\end{lemma}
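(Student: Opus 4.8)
The plan is to reduce both parts to a single deterministic inequality bounding the excess classification risk by an $L^1(P_X)$ distance between regression functions, and then dispatch the two modes of stochastic convergence with standard limit theorems. Since $\eta$ is the true regression function, $h$ is the Bayes classifier, so $R(h_n) \ge R(h)$ for every $n$, and a pointwise computation of the conditional risks gives, for each $x$,
\[
    R(h_n \mid X = x) - R(h \mid X = x) = |2\eta(x) - 1| \, 1\{h_n(x) \neq h(x)\}.
\]
On the event $\{h_n(x) \neq h(x)\}$ the values $\eta_n(x)$ and $\eta(x)$ lie on opposite sides of $1/2$, whence $|\eta(x) - 1/2| \le |\eta_n(x) - \eta(x)|$. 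Integrating over $x$ yields the plug-in bound
\[
    0 \le R(h_n) - R(h) \le 2\, \E_X\!\left[ |\eta_n(X) - \eta(X)| \right].
\]
It therefore suffices to control the (sample-dependent, $[0,1]$-valued) random variable $Z_n := \E_X[|\eta_n(X) - \eta(X)|]$ in the appropriate mode.

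For part (a), I would show $\E[Z_n] \to 0$, which forces $Z_n \to 0$ in probability by Markov's inequality and hence $R(h_n) \to R(h^*)$ in probability via the plug-in bound. By Tonelli, $\E[Z_n] = \E_X[\E|\eta_n(X) - \eta(X)|]$. For $P_X$-almost every $x$ the hypothesis gives $\eta_n(x) \to \eta(x)$ in probability, and since $|\eta_n(x) - \eta(x)| \le 1$ is bounded, this upgrades to $L^1$ convergence $\E|\eta_n(x) - \eta(x)| \to 0$ (split the expectation on $\{|\eta_n(x)-\eta(x)| \le \delta\}$ and its complement). Applying bounded convergence in the outer $x$-integral then gives $\E[Z_n] \to 0$.

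For part (b), the crucial step is a quantifier swap. The hypothesis states that for $P_X$-almost every $x$ the event $\{\eta_n(x) \to \eta(x)\}$ has probability one, whereas I need that, almost surely over the sample, $\eta_n(\cdot) \to \eta(\cdot)$ holds $P_X$-almost everywhere. Viewing the non-convergence set as a measurable subset of $\calX \times \Omega$ and applying Tonelli to its indicator, both statements are equivalent to this product set having $(P_X \otimes \Prob)$-measure zero. Fixing a sample realization in the resulting full-measure set, $\eta_n(\cdot) \to \eta(\cdot)$ holds $P_X$-a.e. along that realization, so bounded convergence (dominating by $1$) gives $Z_n \to 0$ for that realization; hence $Z_n \to 0$ almost surely, and the plug-in bound yields $R(h_n) \to R(h)$ almost surely.

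The main obstacle I anticipate is the measure-theoretic bookkeeping underlying the Tonelli-based quantifier swap in part (b): it requires joint measurability of the map $(x,\omega) \mapsto \eta_n(x,\omega)$ so that the non-convergence set is product-measurable, after which the argument is routine. By contrast, the pointwise excess-risk identity and the bounded-convergence steps are entirely standard, so I expect the bulk of the care to go into stating the measurability hypotheses cleanly rather than into any genuinely hard estimate.
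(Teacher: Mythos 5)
Your proposal is correct and follows essentially the same route as the paper's proof: both rest on the observation that $h_n(x)\neq h(x)$ forces $|\eta_n(x)-\eta(x)|\ge|\eta(x)-1/2|$, and both then finish with Tonelli/Fubini, dominated convergence, and Markov's inequality. The only substantive difference is cosmetic --- you pass to the plug-in bound $R(h_n)-R(h)\le 2\,\E_X\left[|\eta_n(X)-\eta(X)|\right]$ and control that single $L^1$ quantity, whereas the paper keeps the weighted indicator $|2\eta(X)-1|\,1\{|\eta_n(X)-\eta(X)|\ge|\eta(X)-1/2|\}$ and uses a $\delta$-splitting for part (a); your explicit Tonelli-based quantifier swap (with the joint-measurability caveat) in part (b) is in fact a more careful treatment of a step the paper performs implicitly.
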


\begin{proof}
    Note that, since $h_n(x) \neq h(x)$ implies $|\eta_n(x) - \eta(x)| \geq |\eta(x) - 1/2|$,
    \begin{equation}
        1\{h_n(x) \neq h(x)\} \leq 1\{|\eta_n(x) - \eta(x)| \geq |\eta(x) - 1/2|\}.
        \label{ineq:distance_from_half}
    \end{equation}
    We utilize this observation to prove both (a) and (b).
    
    \paragraph{Proof of (a)}
    Let $\delta > 0$. By Inequality~\eqref{ineq:distance_from_half} and partitioning $\calX$ based on whether $|2\eta(X) - 1| \leq \delta/2$,
    \begin{align*}
        & \E_X \left[ |2\eta(X) - 1| 1\{h_n(X) \neq h(X)\} \right] \\
        & \leq \E_X \left[ |2\eta(X) - 1| 1\{|\eta_n(X) - \eta(X)| \geq |\eta(X) - 1/2|\} \right] \\
        & = \E_X \left[ |2\eta(X) - 1| 1\{|\eta_n(X) - \eta(X)| \geq |\eta(X) - 1/2|\} 1\{|2\eta(X) - 1| > \delta/2\} \right] \\
        & \qquad + \E_X \left[ |2\eta(X) - 1| 1\{|\eta_n(X) - \eta(X)| \geq |\eta(X) - 1/2|\} 1\{|2\eta(X) - 1| \leq \delta/2\} \right] \\
        & \leq \E_X \left[ 1\{|\eta_n(X) - \eta(X)| > \delta/2\} \right] + \delta/2.
    \end{align*}
    Hence,
    \begin{align*}
        & \lim_{n \to \infty} \Pr_{\eta_n} \left[ \E_X \left[ |2\eta(X) - 1| 1\{h_n(X) \neq h(X)\} \right] > \delta \right] \\
        & \leq \lim_{n \to \infty} \Pr_{\eta_n} \left[ \E_X \left[ 1\{|\eta_n(X) - \eta(X)| > \delta/2\} \right] > \delta/2 \right] \\
        & \leq \lim_{n \to \infty} \frac{2}{\delta} \E_{\eta_n} \left[ \E_X \left[ 1\{|\eta_n(X) - \eta(X)| > \delta/2\} \right] \right]
          & \text{(Markov's Inequality)} \\
        & = \lim_{n \to \infty} \frac{2}{\delta} \E_X \left[ \E_{\eta_n} \left[ 1\{|\eta_n(X) - \eta(X)| > \delta/2\} \right] \right]
          & \text{(Fubini's Theorem)} \\
        & = \frac{2}{\delta} \E_X \left[ \lim_{n \to \infty} \Pr_{\eta_n} \left[ |\eta_n(X) - \eta(X)| > \delta/2 \right] \right]
          & \text{(Dominated Convergence Theorem)} \\
        & = 0.
          & \text{($\eta_n(X) \to \eta(X)$, $P_X$-a.s., in probability)}
    \end{align*}

    \paragraph{Proof of (b)}
    For any $x \in \calX$ with $\eta(x) \neq 1/2$, if $\eta_n(x) \to \eta(x)$ then $1\{|\eta_n(x) - \eta(x)| \geq |\eta(x) - 1/2|\} \to 0$. Hence, by Inequality~\eqref{ineq:distance_from_half}, the dominated convergence theorem (with $|2\eta(x) - 1| 1\{|\eta_n(x) - \eta(x)| \geq |\eta(x) - 1/2|\} \leq 1$), and the assumption that $\eta_n(x) \to \eta(x)$ for $P_X$-almost all $x \in \calX$ almost surely,
    \begin{align*}
        & \lim_{n \to \infty} \E_X \left[ |2\eta(X) - 1| 1\{h_n(X) \neq h(X)\} \right] \\
        & \leq \lim_{n \to \infty} \E_X \left[ |2\eta(X) - 1| 1\{|\eta_n(X) - \eta(X)| \geq |\eta(X) - 1/2|\} \right] \\
        & = \E_X \left[ \lim_{n \to \infty} |2\eta(X) - 1| 1\{|\eta_n(x) - \eta(x)| \geq |\eta(x) - 1/2|\} \right] \\
        & = 0, \quad \text{ almost surely.}
    \end{align*}
\end{proof}

Our next lemma concerns an edge case in which the features $X_S$ and $X_U$ provide perfect but contradictory information about $Y$, leading to Equation~\eqref{eq:pr_Y_given_X1_and_X2} being ill-defined. We show that this can happen only with probability $0$ over $(X_S, X_U) \sim P_{X_S,X_U}$ can thus be safely ignored:

\begin{lemma}
    Consider two predictors $X_S$ and $X_Y$ of a binary label $Y$. Then,
    \[\Pr_{X_S, X_U} \left[ \E[Y|X_S] = 1 \text{ and } \E[Y|X_U] = 0 \right]
      = \Pr_{X_S, X_U} \left[ \E[Y|X_S] = 0 \text{ and } \E[Y|X_U] = 1 \right]
      = 0.\]
    \label{lemma:consistency_infinity_case}
\end{lemma}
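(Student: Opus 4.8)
The plan is to translate the statement into the language of regression functions and then exploit the nonnegativity of $Y$ and $1-Y$ to decouple the two features. Write $\eta_S(X_S) := \E[Y|X_S]$ and $\eta_U(X_U) := \E[Y|X_U]$ (these equal $\Pr[Y=1|X_S]$ and $\Pr[Y=1|X_U]$ since $Y$ is binary), and let $A := \{\eta_S(X_S) = 1\}$ and $B := \{\eta_U(X_U) = 0\}$ denote the two events in the first probability. Observe that $\mathbf{1}_A$ is $\sigma(X_S)$-measurable and $\mathbf{1}_B$ is $\sigma(X_U)$-measurable, but $A \cap B$ is measurable with respect to neither $\sigma$-algebra alone. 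This mixed measurability is the only real obstacle: it blocks a naive single-conditioning argument on the non-product event $A \cap B$. The trick I would use to get around it is to bound the relevant expectation by discarding one of the two indicators via nonnegativity, thereby reducing to a genuine single-feature conditioning.

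Concretely, first I would control $\E[(1-Y)\mathbf{1}_{A \cap B}]$. Using $\mathbf{1}_{A\cap B} = \mathbf{1}_A\mathbf{1}_B \le \mathbf{1}_A$ together with $1-Y \ge 0$, and then the tower property (conditioning on $X_S$, since $\mathbf{1}_A$ is $\sigma(X_S)$-measurable),
\[0 \le \E[(1-Y)\mathbf{1}_{A \cap B}] \le \E[(1-Y)\mathbf{1}_A] = \E\!\left[\mathbf{1}_A \left(1 - \eta_S(X_S)\right)\right] = 0,\]
where the last equality holds because $\eta_S(X_S) = 1$ on $A$. Hence $(1-Y)\mathbf{1}_{A \cap B} = 0$ almost surely. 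Symmetrically, discarding $\mathbf{1}_A$ instead and conditioning on $X_U$,
\[0 \le \E[Y\,\mathbf{1}_{A \cap B}] \le \E[Y\,\mathbf{1}_B] = \E\!\left[\mathbf{1}_B\, \eta_U(X_U)\right] = 0,\]
since $\eta_U(X_U) = 0$ on $B$, so $Y\,\mathbf{1}_{A \cap B} = 0$ almost surely as well. Intuitively, the first bound says that $Y=1$ a.s. on $A\cap B$ while the second says $Y=0$ a.s. there, which forces the event to be null.

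Adding the two displays and using $\mathbf{1}_{A\cap B} = Y\,\mathbf{1}_{A\cap B} + (1-Y)\,\mathbf{1}_{A\cap B}$ then gives
\[\Pr[A \cap B] = \E[\mathbf{1}_{A \cap B}] = \E[Y\,\mathbf{1}_{A \cap B}] + \E[(1-Y)\,\mathbf{1}_{A \cap B}] = 0,\]
which is the first claimed equality. The second equality follows by running the identical argument after interchanging the roles of $X_S$ and $X_U$ (equivalently, replacing $Y$ by $1-Y$): setting $A' := \{\eta_S(X_S)=0\}$ and $B' := \{\eta_U(X_U)=1\}$, the same two one-indicator bounds give $\E[Y\,\mathbf{1}_{A'\cap B'}] = 0$ and $\E[(1-Y)\,\mathbf{1}_{A'\cap B'}] = 0$, whence $\Pr[A'\cap B']=0$. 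I expect the write-up to be short; the one step worth stating carefully is the decoupling, i.e.\ that one bounds the expectation of the mixed-measurability product by dropping an indicator under nonnegativity rather than attempting to condition on $A\cap B$ directly.
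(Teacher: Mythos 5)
Your proof is correct and takes essentially the same route as the paper's: both arguments rest on the observation that $\E[Y|X_S]=1$ forces $Y=1$ almost surely on the event while $\E[Y|X_U]=0$ forces $Y=0$ almost surely there, which is impossible on a set of positive probability. Your write-up is in fact somewhat more careful than the paper's one-line contradiction ($1=\E[Y|A]=0$), since you explicitly justify, via nonnegativity of $Y$ and $1-Y$ and the tower property, why the mixed-measurability event $A\cap B$ inherits both almost-sure conclusions.
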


\begin{proof}
    Suppose, for sake of contradiction, that the event
    \[A := \{ (x_S, x_U) : \E[Y|X_S = x_S] = 1 \text{ and } \E[Y|X_U = x_U] = 0 \}\]
    has positive probability. Then, the conditional expectation $\E[Y|A]$ is well-defined, giving the contradiction
    \[1 = \E_{X_S}[\E[Y|E,X_S]] = \E[Y|A] = \E_{X_U}[\E[Y|E,X_U]] = 0.\]
    The case $\E[Y|X_S] = 0 \text{ and } \E[Y|X_U] = 1$ is similar.
\end{proof}

We now utilize Lemmas~\ref{lemma:convergence_of_regression_functions_to_risk} and \ref{lemma:consistency_infinity_case} to prove \Cref{thm:consistency}.

\begin{proof}
    By Lemma~\ref{lemma:convergence_of_regression_functions_to_risk}, it suffices to prove that $\hat\eta(x_S, x_U) \to \eta(x_S, x_U)$, for $P_{X_S,X_U}$-almost all $(x_S,x_U) \in \calX_S \times \calX_U$, in probability (to prove (a)) and almost surely (to prove (b)).
    
    \paragraph{Finite case} We first consider the case when both $\Pr[Y|X_S = x_S], \Pr[Y|X_U = x_U] \in (0, 1)$, so that $f_S(x_S)$ and $\logit \left( \frac{\tilde\eta(x_U) + \epsilon_0 - 1}{\epsilon_0 + \epsilon_1 - 1} \right)$ are both finite. Since
    \begin{align*}
        & \hat\eta_{S,U}(x_S, x_U) - \eta_{S,U}(x_S, x_U) \\
        & = \sigma \left( f_S(x_S) + \logit \left( \frac{\hat\eta_{U,1}(x_U) + \hat\epsilon_0 - 1}{\hat\epsilon_0 + \hat\epsilon_1 - 1} \right) - \hat\beta_{1,n} \right) - \sigma \left( f_S(x_S) + \logit \left( \frac{\tilde\eta(x_U) + \epsilon_0 - 1}{\epsilon_0 + \epsilon_1 - 1} \right) - \beta_1 \right),
    \end{align*}
    where the sigmoid $\sigma : \R \to [0, 1]$ is continuous, by the continuous mapping theorem and the assumption that $\hat\eta_{U,1}(x_U) \to \tilde\eta(x_U)$, to prove both of these, it suffices to show:
    \begin{enumerate}[label=(\roman*)]
        \item $\hat\epsilon_0 \to \epsilon_0$ and $\hat\epsilon_1 \to \epsilon_1$ almost surely as $n \to \infty$.
        \item $\hat\beta_{1,n} \to \beta_1 \in (-\infty, \infty)$ almost surely as $n \to \infty$.
        \item The mapping $(a, b, c) \mapsto \logit \left( \frac{a + b - 1}{b + c - 1} \right)$ is continuous at $(\tilde\eta(x_U),\epsilon_0,\epsilon_1)$.
    \end{enumerate}
    We now prove each of these in turn.
    \paragraph{Proof of (i)}
    Since $\hat Y_i \indep Y_i | X_S$ and $0 < \Pr[\hat Y = 1]$, by the strong law of large numbers and the continuous mapping theorem,
    \[\hat\epsilon_1
      = \frac{1}{n_1} \sum_{i = 1}^n \hat Y_i \sigma(f_S(X_i))
      = \frac{\frac{1}{n} \sum_{i = 1}^n \hat Y_i \sigma(f_S(X_i))}{\frac{1}{n} \sum_{i = 1}^n \hat Y_i}
      \to \frac{\E[\sigma(f_S(X)) 1\{\hat Y = 1\}]}{\Pr[\hat Y = 1]}
      = \E[\sigma(f_S(X))|\hat Y = 1]
      = \epsilon_1,\]
    almost surely as $n \to \infty$. Similarly, since $\Pr[\hat Y = 0] = 1 - \Pr[\hat Y = 1] > 0$, $\hat\epsilon_0 \to \epsilon_0$ almost surely.

    \paragraph{Proof of (ii)} Recall that
    \[\hat\beta_{1,n} = \logit \left( \frac{1}{n} \sum_{i = 1}^n \hat Y_i \right).\]
    By the strong law of large numbers, $\frac{1}{n} \sum_{i = 1}^n \hat Y_i \to \Pr[\hat Y = 1|E = 1] = \Pr[Y = 1|E = 1]$. Since we assumed $\Pr[Y = 1|E = 1] \in (0, 1)$, it follows that
    the mapping $a \mapsto \logit(a)$ is continuous at $a = \Pr[Y = 1|E = 1]$. Hence, by the continuous mapping theorem, $\hat\beta_{1,n} \to \logit \left( \Pr[Y = 1|E = 1] \right) = \beta_1$ almost surely.

    \paragraph{Proof of (iii)} Since the $\logit$ function is continuous on the open interval $(0, 1)$ and we assumed $\epsilon_0 + \epsilon_1 > 1$, it suffices to show that $0 < \tilde\eta(x_U) + \epsilon_0 - 1 < \epsilon_0 + \epsilon_1 - 1$. Since, according to \Cref{thm:OOV},
    \[\tilde\eta(x_U) = (\epsilon_0 + \epsilon_1 - 1) \eta^*(x_U)) + 1 - \epsilon_0,\]
    this holds as long as $0 < \eta^*(x_U) < 1$, as we assumed for $P_{X_U}$-almost all $x_U \in \calX_U$.

    \paragraph{Infinite case} We now address the case where either $\Pr[Y|X_S = x_S] \in \{0,1\}$ or $\Pr[Y|X_U = x_U] \in \{0,1\}$. By Lemma~\ref{lemma:consistency_infinity_case}, only one of these can happen at once, $P_{X_S,X_U}$-almost surely. Hence, since $\lim_{n \to \infty} \hat\beta_{1,n}$ is also finite almost surely, if $\Pr[Y|X_S = x_S] \in \{0,1\}$, then $\hat\eta(x_S, x_U) = \sigma(\logit(\Pr[Y|X_S = x_S])) = \eta(x_S, x_U)$, while, if $\Pr[Y|X_U = x_U] \in \{0,1\}$, then $\hat\eta(x_S, x_U) \to \sigma \left( \logit(\Pr[Y|X_U = x_U]) \right) = \eta(x_S, x_U)$, in probability or almost surely, as appropriate.
\end{proof}

\section{Multiclass Case}
\label{app:multiclass}

In the main paper, to simplify notation, we presented our unsupervised test-domain adaptation method in the case of binary labels $Y$. However, in many cases, including several of our experiments in Section~\ref{sec:experiments}, the label $Y$ can take more than $2$ distinct values. Hence, in this section, we show how to generalize our method to the multiclass setting and then present the exact procedure (Alg.~\ref{alg:unsupervised_domain_adaptation_multiclass}) used in our multiclass experiments in Section~\ref{sec:experiments}.

Suppose we have $K \geq 2$ classes. We ``one-hot encode'' these classes, so that $Y$ takes values in the set
\[\calY = \{(1,0,...,0), (0,1,0,...,0), ..., (0,...,0,1)\} \subseteq \{0,1\}^K.\]
Let $\epsilon \in [0, 1]^{\calY \times \calY}$ with
\[\epsilon_{y,y'} = \Pr[\hat Y = y|Y = y']\]
denote the class-conditional confusion matrix of the pseudo-labels. Then, we have
\begin{align*}
    \E[\hat Y|X_U]
    & = \sum_{y \in \calY} \E[\hat Y|Y = y,X_U] \Pr[Y = y|X_U]
      & \text{(Law of Total Expectation)} \\
    & = \sum_{y \in \calY} \E[\hat Y|Y = y] \Pr[Y = y|X_U]
      & \text{(Complementary)} \\
    & = \epsilon \E[Y|X_U].
      & \text{(Definition of $\epsilon$)}
\end{align*}
When $\epsilon$ is non-singular, this has the unique solution $\E[Y|X_U] = \epsilon^{-1} \E[\hat Y|X_U]$,
giving a multiclass equivalent of Eq.~\eqref{eq:pr_Y_given_X2} in \Cref{thm:OOV}. In practice, however, it is numerically more stable to estimate $\E[Y|X_U]$ by the least-squares solution
\[\argmin_{p \in \Delta^\calY} \left\| \epsilon p - \E[\hat Y|X_U] \right\|_2,\]
which is what we will do in Algorithm~\ref{alg:unsupervised_domain_adaptation_multiclass}.
To estimate $\epsilon$ without observing the label $Y$ in the test domain, note that
\begin{align*}
    \epsilon_{y,y'}
      = \Pr[\hat Y = y|Y = y']
    & = \frac{\Pr[\hat Y = y,Y = y']}{\Pr[Y = y']} \\
    & = \frac{\E \left[ \Pr[\hat Y = y,Y = y'|X_S] \right]}{\E \left[ \Pr[Y = y'|X_S] \right]} \\
    & = \frac{\E \left[ \Pr[\hat Y = y|X_S] \Pr[Y = y'|X_S] \right]}{\E \left[ \Pr[Y = y'|X_S] \right]} \\
    & = \frac{\E \left[ f_{1,y}(X_S) f_{1,y'}(X_S) \right]}{\E \left[ f_{1,y'}(X_S) \right]}.
\end{align*}
This suggests the estimate
\[\hat\epsilon_{y,y'}
  = \frac{\sum_{i = 1}^n \hat f_{S,y}(X_{S,i}) \hat f_{S,y'}(X_{S,i})}{\sum_{i = 1}^n \hat f_{S,y'}(X_{S,i})}
  = \sum_{i = 1}^n \hat f_{S,y}(X_{S,i}) \frac{\hat f_{S,y'}(X_{S,i})}{\sum_{i = 1}^n \hat f_{S,y'}(X_{S,i})}\]
of each $\epsilon_{y,y'}$, or, in matrix notation,
\[\hat\epsilon
  = f_S^\intercal(X_S) \operatorname{Normalize}(f_S(X_S)),\]
where $\operatorname{Normalize}(X)$ scales each column of $X$ to sum to $1$. This gives us a multiclass equivalent of Line~\ref{line:hat_epsilon} in Alg.~\ref{alg:unsupervised_domain_adaptation}.

The multiclass versions of Eq.~\eqref{eq:pr_Y_given_X1_and_X2} and Line~\ref{line:hat_eta_S_U} of Alg.~\ref{alg:unsupervised_domain_adaptation} are slightly less straightforward. Specifically, whereas, in the binary case, we used the fact that $\Pr[X_S,X_U|Y \neq 1] = \Pr[X_S,X_U|Y = 0] = \Pr[X_S|Y = 0] \Pr[X_U|Y = 0] = \Pr[X_S|Y \neq 1] \Pr[X_U|Y \neq 1]$ (by complementarity), in the multiclass case, we do not have $\Pr[X_S,X_U|Y \neq 1] = \Pr[X_S|Y \neq 1] \Pr[X_U|Y \neq 1]$. However, following similar reasoning as in the proof of \Cref{thm:OOV}, we have
\begin{align*}
        \frac{\Pr[Y = y|X_S,X_U,E]}{\Pr[Y \neq y|X_S,X_U,E]}
        & = \frac{\Pr[Y = y|X_S,X_U,E]}{\sum_{y' \neq y} \Pr[Y = y'|X_S,X_U,E]} \\
        & = \frac{\Pr[X_S,X_U|Y = y,E] \Pr[Y = y|E]}{\sum_{y' \neq y} \Pr[Y \neq y|X_S,X_U,E] \Pr[Y = y'|E]}
          & \text{(Bayes' Rule)} \\
        & = \frac{\Pr[X_S|Y = y, E] \Pr[X_U|Y = y,E] \Pr[Y = y|E]}{\sum_{y' \neq y} \Pr[X_S|Y = y', E] \Pr[X_U|Y = y',E] \Pr[Y = y'|E]}
          & \text{($X_S \indep X_U | Y$)} \\
        & = \frac{\Pr[Y = y|X_S, E] \Pr[Y = y|X_U,E]}{\sum_{y' \neq y} \Pr[Y = y'|X_S, E] \Pr[Y = y'|X_U, E] \cdot \frac{\Pr[Y = y|E]}{\Pr[Y = y'|E]}}.
          & \text{(Bayes' Rule)} \\
    \end{align*}
    Hence,
    \begin{align*}
        \logit(\Pr[Y = y|X_S,X_U,E])
        & = \log \left( \frac{\Pr[Y = y|X_S, E] \Pr[Y = y|X_U,E]}{\sum_{y' \neq y} \Pr[Y = y'|X_S, E] \Pr[Y = y'|X_U, E] \cdot \frac{\Pr[Y = y|E]}{\Pr[Y = y'|E]}} \right) \\
        & = \log \left( \frac{Q_y}{\sum_{y' \neq y} Q_{y'}} \right)
          = \log \left( \frac{\frac{Q_y}{\|Q\|_1}}{\sum_{y' \neq y} \frac{Q_{y'}}{\|Q\|_1}} \right)
          = \logit \left( \frac{Q_y}{\|Q\|_1} \right),
    \end{align*}
    for $Q \in \R^\calY$ defined by
    \[Q_y = \frac{f_{S,y}(X_S) f_{U,y}(X_U)}{\Pr[Y = y]}
      \quad \text{ for each } y \in \calY.\]
    In particular, applying the sigmoid function to each side, we have
    \[\Pr[Y|X_S,X_U] = \frac{Q}{\|Q\|_1}.\]
    We can estimate $Q_y$ by
    \[\hat Q_y = \frac{f_{S,y}(X_S) f_{U,y}(X_U)}{\frac{1}{n} \sum_{i = 1}^n f_{S,y}(X_{S,i})}.\]
    In matrix notation, this is
    \[\hat Q = \frac{f_S(X_S) \circ f_U(X_U)}{\frac{1}{n} \sum_{i = 1}^n f_S(X_{S,i})},\]
    where $\circ$ denotes element-wise multiplication.
    It follows that, for $p \in \Delta^\calY$ (we will use $p_y = \Pr[Y = y]$), we can use the multiclass combination function $C : \Delta^\calY \times \Delta^\calY \to \Delta^\calY$ with
    \begin{equation}
        C_p(p_S, p_U) = \operatorname{Normalize} \left( \frac{p_S p_U}{p} \right),
        \label{eq:multiclass_combination_function}
    \end{equation}
    where the multiplication and division are performed element-wise and $\operatorname{Normalize}(x) = \frac{x}{\|x\|_1}$, to generalize \cref{eq:combination_function}. 
    Putting these derivations together gives us our multiclass version of Alg.~\ref{alg:unsupervised_domain_adaptation}, presented in Alg.~\ref{alg:unsupervised_domain_adaptation_multiclass}, where $\Delta^\calY = \{z \in [0, 1]^K :\sum_{y \in \calY} z_y = 1\}$ denotes the standard probability simplex over $\calY$.
    \begin{algorithm2e}[htb]
        \DontPrintSemicolon
        \KwInput{Calibrated stable classifier $f_S : \calX \to \Delta^\calY$ with $f_{S,y}(x_S) = \Pr[Y = y|X_S = x_S]$,
                 $n$ unlabeled samples $\{(X_{S,i}, X_{U,i})\}_{i = 1}^n$
                 }
        \KwOutput{Joint classifier $\hat f : \calX_S \times \calX_U \to \Delta^\calY$ estimating $\Pr[Y = y|X_S = x_S,X_U = x_U]$\vspace{0.5mm}
        }
        Compute soft pseudo-labels $\{ \hat Y_i\}_{i=1}^n$ with $\hat Y_i = f_S(X_{S,i})$ \\
        Compute soft class counts $\hat n = \sum_{i = 1}^n \hat Y_i$ \\
        Estimate class-conditional pseudo-label confusion matrix $\hat\epsilon \leftarrow f_S^\intercal(X_S) \operatorname{Normalize}(f_S^\intercal(X_S))$ \\
        Fit unstable classifier $\tilde{f}_{U}(x_U)$ to pseudo-labelled data $\{(X_{U,i}, \hat Y_i)\}_{i = 1}^n$ \label{line:hat_eta_U_multiclass} \tcp*{$\approx \Pr[\hat Y\! =\! y | X_U]$}
        Bias-correction $\hat{f}_{U}(x_U) \mapsto \argmin_{p \in \Delta^\calY} \| \epsilon p - \tilde{f}_{U}(x_U) \|_2$ \tcp*{$\approx \Pr[Y\! =\! y | X_U]$}
        \Return{$\hat{f}(x_S, x_U)\! \mapsto\! C_{\hat n/n}( f_S(x_S), \hat{f}_U(x_U))$}\label{line:hat_eta_S_U_multivariate} \tcp*{\cref{eq:multiclass_combination_function}, $\approx \Pr[Y\! =\! y|X_S,X_U]$}
        \caption{Multiclass bias-corrected adaptation procedure.}
        \label{alg:unsupervised_domain_adaptation_multiclass}
    \end{algorithm2e}

\section{Supplementary Results}
\label{app:supplementary_results}

\subsection{Trivial solution to joint-risk minimization}
In \Cref{proposition:trivial_cross_entropy_solution} below, we assume that the stable $f_S(X)$ and unstable $f_U(X)$ predictors output \textit{logits}. In contrast, throughout the rest of the paper, we assume that $f_S(X)$ and $f_U(X)$ output \textit{probabilities} in $[0,1]$.

\begin{proposition}
    Suppose $\hat Y|f_S(X) \sim \operatorname{Bernoulli}(\sigma(f_S(X)))$, such that $\hat Y \indep f_U(X) | f_S(X)$. Then,
    \[0 \in \argmin_{f_U : \calX \to \R}
      \bbE[\ell(\hat Y, \sigma(f_S(X) + f_U(X)))],\]
    where $\ell(x, y) = - x \log y - (1 - x) \log (1 - y)$ denotes the cross-entropy loss.
    \label{proposition:trivial_cross_entropy_solution}
\end{proposition}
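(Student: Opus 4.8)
The plan is to reduce the expected cross-entropy to a pointwise minimization over the predicted probability, and then to invoke the elementary fact that cross-entropy against a Bernoulli target is minimized when the predicted probability equals the target's conditional mean. Since the construction of $\hat Y$ forces this conditional mean to be exactly $\sigma(f_S(X))$ irrespective of $f_U$, the choice $f_U \equiv 0$ will attain the minimum everywhere simultaneously.

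First I would apply the tower property, conditioning on the pair $(f_S(X), f_U(X))$, both of which are functions of $X$ and hence determine the prediction $q := \sigma(f_S(X) + f_U(X))$:
\[
\bbE[\ell(\hat Y, \sigma(f_S(X) + f_U(X)))]
= \bbE\big[\, \bbE[\ell(\hat Y, q)\mid f_S(X), f_U(X)]\,\big].
\]
The crucial step is to identify the inner target mean. By the hypothesis $\hat Y \indep f_U(X)\mid f_S(X)$ together with $\hat Y\mid f_S(X)\sim\operatorname{Bernoulli}(\sigma(f_S(X)))$, one obtains, for every realization,
\[
p := \Pr[\hat Y = 1 \mid f_S(X), f_U(X)] = \Pr[\hat Y = 1 \mid f_S(X)] = \sigma(f_S(X)),
\]
so that the inner conditional expectation equals $-p\log q - (1-p)\log(1-q)$ with $p = \sigma(f_S(X))$.

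Next I would optimize this inner expression pointwise. For fixed $p \in (0,1)$ (note $\sigma$ of a finite logit always lies in the open interval, so no degeneracy arises), the map $q \mapsto -p\log q - (1-p)\log(1-q)$ on $(0,1)$ is strictly convex with derivative $-p/q + (1-p)/(1-q)$, which vanishes precisely at $q = p$; equivalently, $-p\log q - (1-p)\log(1-q) = -p\log p - (1-p)\log(1-p) + \dkl(\operatorname{Bernoulli}(p)\,\|\,\operatorname{Bernoulli}(q)) \geq -p\log p - (1-p)\log(1-p)$, with equality iff $q = p$. Setting $f_U(X) = 0$ yields $q = \sigma(f_S(X)) = p$, attaining this lower bound for every value of $(f_S(X), f_U(X))$ at once. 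Taking expectations, $f_U \equiv 0$ therefore attains the global minimum of the objective, so $0 \in \argmin_{f_U}$.

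The only genuine subtlety, and the step I would treat most carefully, is the conditional-independence manipulation establishing $p = \sigma(f_S(X))$: one must verify that conditioning additionally on $f_U(X)$ does not alter the conditional law of $\hat Y$, which is exactly what the hypothesis $\hat Y \indep f_U(X) \mid f_S(X)$ supplies. Everything else is the standard "proper scoring rule" argument, and because $f_U \equiv 0$ minimizes the integrand pointwise, no interchange of minimization and expectation is needed.
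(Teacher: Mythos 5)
Your proof is correct and follows the same skeleton as the paper's: condition via the tower property, use $\hat Y \indep f_U(X) \mid f_S(X)$ together with the Bernoulli specification to identify the conditional target mean as $\sigma(f_S(X))$, and then show the prediction $q = \sigma(f_S(X))$ is optimal. The only real divergence is the finishing move. The paper treats the resulting expected loss as a convex functional of $f_U(X)$ and verifies the first-order stationarity condition at $f_U \equiv 0$, which requires a (mild, but explicitly invoked) interchange of derivative and expectation. You instead minimize the inner conditional expectation \emph{pointwise} via the Gibbs/proper-scoring-rule inequality $-p\log q - (1-p)\log(1-q) \geq -p\log p - (1-p)\log(1-p)$ with equality iff $q = p$. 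This buys you two things: you never need to commute a derivative with an expectation, and you obtain the stronger statement that $f_U \equiv 0$ attains the minimum of the integrand almost everywhere, not merely of its expectation. You correctly flag the conditional-independence step as the one place the hypothesis does real work; note also that your argument makes transparent that $f_U \equiv 0$ is a global minimizer but not the unique one (any $f_U$ vanishing $P_X$-a.e.\ on the support, or more precisely any $f_U$ with $\sigma(f_S + f_U) = \sigma(f_S)$ a.e., also attains it), which is consistent with the proposition's claim of membership in the $\argmin$ rather than uniqueness.
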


\begin{proof}
    Suppose $\hat Y|f_S(X) \sim \operatorname{Bernoulli}(\sigma(f_S(X)))$, such that $\hat Y \indep f_U(X) | f_S(X)$. Then,
    \begin{align*}
        & -\bbE[\ell(\hat Y, \sigma(f_S(X) + f_U(X)))] \\
        & = \bbE[\bbE[ \ell(\hat Y, \sigma(f_S(X) + f_U(X))]] 
          & \text{(Law of Total Expectation)} \\
        & = \bbE[\bbE[ \hat Y \log \sigma(f_S(X) + f_U(X)) \\
          & \qquad + (1 - Y) \log (1 - \sigma(f_S(X) + f_U(X))) |f_S(X)]] \\
        & = \bbE[\bbE[\hat Y|f_S(X_S)] \bbE[\log \sigma(f_S(X) + f_U(X))|f_S(X_S)] \\
          & \qquad + \bbE[(1 - \hat Y)|f_S(X_S)] \bbE[\log(1 - \sigma(f_S(X) + f_U(X))) |f_S(X)]]
          & \text{($\hat Y \indep f_U(X) | f_S(X)$)} \\
        & = \bbE[ \sigma(f_S(X)) \log \sigma(f_S(X) + f_U(X)) \\
          & \qquad + (1 - \sigma(f_S(X))) \log(1 - \sigma(f_S(X) + f_U(X)))].
          & \text{($\hat Y|f_S(X) \sim \operatorname{Bernoulli}(\sigma(f_S(X)))$)}.
    \end{align*}
    Since the cross-entropy loss is differentiable and convex, any $f_U(X)$ satisfying $0 = \frac{d}{df_U(X)} \bbE[\ell(\hat Y, f_S(X) + f_U(X))]$ is a minimizer. Indeed, under the mild assumption that the expectation and derivative commute, for $f_U(X) = 0$,
    \begin{align*}
        \frac{d}{df_U(X)} \bbE[\ell(\hat Y, \sigma(f_S(X) + f_U(X)))]
        & = - \bbE \left[ \frac{\sigma(f_S(X))}{\sigma(f_S(X) + f_U(X))} + \frac{1 - \sigma(f_S(X))}{1 - \sigma(f_S(X) + f_U(X))} \right] \\
        & = - \bbE \left[ \frac{\sigma(f_S(X))}{\sigma(f_S(X))} + \frac{1 - \sigma(f_S(X))}{1 - \sigma(f_S(X))} \right]
          = 0.
    \end{align*}
\end{proof}

\subsection{Causal perspectives}
\label{sec:causal_DAGs}

The stability, complementarity, and informativeness assumptions in \Cref{thm:OOV} can be interpreted as constraints on the causal relationships between the variables $X_S$, $X_U$, $Y$, and $E$. We conclude this section with a result with a characterization of causal, directed acyclic graphs (DAGs) that are consistent with these assumptions. In particular, this result shows that our assumptions are satisfied in the ``anti-causal'' and ``cause-effect'' settings assumed in prior work~\citep{rojas2018invariant,von2019semi,jiang2022invariant}, as well as work assuming only covariate shift (i.e., changes in the distribution of $X$ without changes in the conditional $P_{Y|X}$).
\begin{figure}[hbt]
    \centering
    \tikzset{minimum size=1.1cm}
    \begin{tikzpicture}
        \node[shape=circle,draw=black,dashed] (E) at (0,1.5) {$E$};
        \node[shape=circle,draw=black,fill=blue!25] (XSC) at (0,-1.5) {$X_{S,C}$};
        \node[shape=circle,draw=black,fill=blue!25] (XSE) at (3,-1.5) {$X_{S,E}$};
        \node[shape=circle,draw=black,fill=yellow!50] (XU) at (1.5,0) {$X_U$};
        \node[shape=circle,draw=black] (Y) at (1.5,-1.5) {$Y$};
        \node[shape=circle,draw=black,fill=blue!25] (XSS) at (3,0) {$X_{S,S}$};
        \path [->,dashed,line width=0.25mm](E) edge (XSC);
        \path [->,dashed,line width=0.25mm](E) edge (XU);
        \path [->,line width=0.25mm](XSC) edge (Y);
        \path [<-,line width=0.25mm](XSE) edge (Y);
        \path [<-,line width=0.25mm](XU) edge (Y);
        \path [->,line width=0.25mm](XSS) edge (XSE);
        \path [->,dotted,line width=0.25mm](E) edge (XSS);
    \end{tikzpicture}
    \caption{\small \looseness-1 Causal DAGs over the environment $E$, three types of stable features (causes $X_{S,C}$, effects $X_{S,E}$, and spouses $X_{S,S}$), unstable features $X_U$, and label $Y$, under conditions 1)-6). At least one, and possibly both, of the dashed edges $E \rightarrow X_{S,C}$ and $E \rightarrow X_U$ must be included. The dotted edge $E \rightarrow X_{S,S}$ may or may not be included.
    }
    \label{fig:possible_DAGs_sets}
\end{figure}
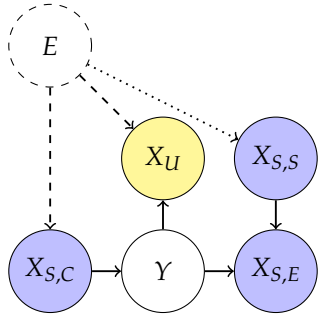
\begin{proposition}[Possible Causal DAGs]
    Consider an environment variable $E$, two covariates $X_U$ and $X_S$, and a label $Y$. Assume there are no other hidden confounders (i.e., causal sufficiency). First, assume:
    \begin{enumerate}[noitemsep,label=\arabic*)]
        \item $E$ is a root (i.e., none of $X_U$, $X_S$, and $Y$ is an ancestor of $E$).
        \item $X_S$ is informative of $Y$ (i.e., $X_S \notindep Y | E$).
        \item $X_S$ and $X_U$ are complementary predictors of $Y$; i.e., $X_S \indep X_U | (Y, E)$.
        \item $X_S$ is stable (i.e., $E \indep Y | X_S$).
    \end{enumerate}
    These are the four structural assumptions under which Theorems~\ref{thm:OOV} and \ref{thm:consistency} show that the SFB algorithm learns the conditional distribution $P_{Y|X_S,X_U}$ in the test domain.
    Additionally, suppose
    \begin{enumerate}[noitemsep,label=\arabic*)]
        \setcounter{enumi}{4}
        \item $X_U$ is unstable (i.e., $E \notindep Y | X_U$), This is the case in which empirical risk minimization~\citep[ERM;][]{vapnik1991principles} may suffer bias due to distribution shift, and hence when SFB may outperform ERM.
        \item $X_U$ contains some information about $Y$ that is not included in $X_S$ (i.e., $X_U \notindep Y | X_S$). This is information we expect invariant risk minimization~\citep[IRM;][]{arjovsky2020invariant} is unable to learn, and hence when we expect SFB to outperform IRM.
    \end{enumerate}
    Then, $X_U$ consists of causal descendants (``effects'') of $Y$, while three types of stable features are possible:
    \begin{enumerate}
        \item causal ancestors $X_{S,C}$ of $Y$,
        \item causal descendants $X_{S,E}$ of $Y$ that are not also descendants of $E$,
        \item causal spouses $X_{S,S}$ of $Y$ (i.e., causal ancestors of $X_{S,E}$).
    \end{enumerate}
    %
    %
    %
    %
    %
    %
    %
    %
    %
    %
    %
    %
    %
    %
    %
    %
    %
    %
    %
    %
    %
    %
    %
    %
    %
    \label{prop:possible_DAGs}
\end{proposition}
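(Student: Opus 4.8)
The plan is to treat the unknown causal DAG $G$ over the (possibly vector-valued) nodes $\{E, X_S, X_U, Y\}$ as the object to characterize, and to convert every (in)dependence in conditions 1)--6) into a statement about $d$-separation in $G$, assuming the joint distribution is Markov and faithful to $G$ so that conditional independence and $d$-separation coincide. Condition 1) immediately orients every edge incident to $E$ as outgoing. Stability (condition 4, $E \indep Y \mid X_S$) then rules out a direct edge $E \to Y$ and, more generally, forces $X_S$ to block every path between $E$ and $Y$; in particular every directed path from $E$ to $Y$ must pass through $X_S$, and $X_S$ must not contain a collider (common effect of $E$ and $Y$) whose conditioning would open such a path.

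The first substantive claim is that $X_U$ consists of descendants of $Y$. I would argue this by combining instability, stability, and complementarity. By condition 5) there is a path $\pi$ from $E$ to $Y$ that is active given $X_U$; by condition 4) this same $\pi$ is blocked by $X_S$. Since conditioning on $X_U$ (rather than $X_S$) keeps $\pi$ active, $\pi$ must be \emph{unblocked} by conditioning on $X_U$, which can only happen if $X_U$ sits as a collider on $\pi$: schematically $E \to \cdots \to X_U \leftarrow \cdots \leftarrow Y$, so that the $Y$-side arrowhead into $X_U$ witnesses a directed path $Y \rightsquigarrow X_U$. The alternative explanations for the $X_U$--$Y$ dependence of condition 6)---namely $X_U$ being a cause of $Y$, or $X_U$ and $Y$ sharing a common cause---are exactly what complementarity (condition 3, $X_S \indep X_U \mid (Y,E)$) forbids: were $X_U$ a parent of $Y$ alongside a stable cause in $X_S$, conditioning on $Y$ would open the collider at $Y$ and violate $X_S \indep X_U \mid (Y,E)$; similarly any $X_S$-mediated link between $X_U$ and $Y$ would survive conditioning on $(Y,E)$. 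Hence $X_U$ must be an effect of $Y$, and the active path realizing instability is the opened collider $E \to X_U \leftarrow Y$ (or a directed path $E \to X_{S,C} \to Y$), which is what forces at least one of the dashed edges $E \to X_U$ and $E \to X_{S,C}$ in \Cref{fig:possible_DAGs_sets}.

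For $X_S$ I would use a Markov-blanket argument. Informativeness (condition 2, $X_S \notindep Y \mid E$) means each retained component of $X_S$ is $d$-connected to $Y$ given $E$, i.e.\ lies in (or is reached through) the Markov blanket of $Y$; the three structural roles available in a Markov blanket are parents, children, and co-parents, giving the causal ancestors $X_{S,C}$, the causal descendants $X_{S,E}$, and the spouses $X_{S,S}$. Stability then prunes this set: a child of $Y$ that is \emph{also} a descendant of $E$ would be a collider on an $E$--$Y$ path that conditioning on $X_S$ opens, contradicting condition 4), so the retained effects $X_{S,E}$ must not descend from $E$; a spouse $X_{S,S}$ connects to $Y$ only through the opened collider $X_{S,S} \to X_{S,E} \leftarrow Y$, and the optional edge $E \to X_{S,S}$ is harmless precisely because $X_{S,S} \in X_S$ blocks the path $E \to X_{S,S} \to X_{S,E} \leftarrow Y$ at the non-collider $X_{S,S}$. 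Finally I would verify the converse by checking that each DAG of the stated form indeed satisfies conditions 1)--6) via $d$-separation, confirming that all three feature types are genuinely realizable.

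The main obstacle I anticipate is the multivariate, exhaustive case analysis hiding behind these clean statements: because $X_S$ and $X_U$ are sets of nodes with possible internal edges and multiple intermediate nodes, I must show the collider and Markov-blanket arguments component-by-component while controlling interactions (e.g.\ paths that route between an $X_U$ component and $Y$ through an $X_S$ component, or among the $X_{S,C}, X_{S,E}, X_{S,S}$ blocks). The most delicate single step is ruling $X_U$ out as a cause of $Y$: this relies essentially on complementarity and on faithfulness (to promote the absence of $d$-separation to genuine dependence), so I would state the faithfulness assumption explicitly and separately handle the degenerate cases where some conditional probabilities are deterministic.
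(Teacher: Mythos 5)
The paper states Proposition~\ref{prop:possible_DAGs} without any proof, so there is no official argument to compare yours against line by line; your plan (assume the distribution is Markov and faithful to the DAG, translate conditions 1)--6) into $d$-separation constraints, and case-analyse the possible orientations) is the natural one and almost certainly what the authors intend. Your treatment of the three admissible roles for $X_S$, the use of stability to exclude children of $Y$ that descend from $E$, and the concluding realizability check are all sound, and you are right that faithfulness must be assumed explicitly since the paper never states it.

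The genuine gap is in the necessity claim that $X_U$ consists of descendants of $Y$, which is the core of the proposition. First, the inference ``$\pi$ is active given $X_U$ and blocked given $X_S$, hence $X_U$ sits as a collider on $\pi$'' is false: the path $E \to X_{S,C} \to Y$ is active given $X_U$ (it never touches $X_U$) and blocked given $X_S$, so instability alone tells you nothing about where $X_U$ sits --- you concede this parenthetically at the end, but the conclusion $Y \rightsquigarrow X_U$ is left resting on the collider claim. Second, even when $X_U$ is a collider on an $E$--$Y$ path, the arrowhead into $X_U$ from the $Y$ side only witnesses a directed path into $X_U$ from the \emph{adjacent} node, not from $Y$: the configuration $E \to X_U \leftarrow W \to Y$ (with, say, $Y \to X_S$) satisfies all six conditions while $X_U$ is not a descendant of $Y$, and what excludes it is \emph{causal sufficiency} (no hidden confounder $W$), not complementarity; once the node set is restricted to $\{E, X_S, X_U, Y\}$ you still need complementarity to rule out an $X_S$ component playing the role of $W$, since a direct $X_S \to X_U$ edge is active under any conditioning set. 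Third, your complementarity argument against $X_U \to Y$ explicitly requires ``a stable cause in $X_S$'' to open the collider at $Y$; when $X_{S,C} = \emptyset$ you must instead derive a contradiction from stability and instability jointly (every directed $E$-to-$Y$ path must be blocked by a node of $X_S$ that is then an ancestor of $Y$, forcing $X_{S,C} \neq \emptyset$, so the only remaining way to realize instability is a collider whose descendant lies in $X_U$ and whose $Y$-side neighbour is $Y$ itself). These are all fixable with the machinery you already have on the table, but as written the central conclusion is not established.
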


Notable special cases of the DAG in Figure~\ref{fig:possible_DAGs_sets} include:
\begin{enumerate}
    \item the ``cause-effect'' settings, studied by \citet{rojas2018invariant,von2019semi,kugelgen2020semi}, where $X_S$ is a cause of $Y$, $X_U$ is an effect of $Y$, and $E$ may affect both $X_S$ and $X_U$ but may affect $Y$ only indirectly through $X_S$. Note that this generalizes the commonly used ``covariate shift'' assumption, as not only the covariate distribution $P_{X_S,X_U}$ but also the conditional distribution $P_{Y|X_U}$ can change between environments.
    \item the ``anti-causal'' setting, studied by \citet{jiang2022invariant}, where $X_S$ and $X_U$ are both effects of $Y$, but $X_S$ is unaffected by $E$.
    \item the widely studied ``covariate shift'' setting~\citep{sugiyama2007covariate,gretton2009covariate,bickel2009discriminative,sugiyama2012machine}, which corresponds (see Sections 3 and 5 of \citet{scholkopf2022causality}) to a causal factorization $P(X, Y) = P(X) P(Y|X)$ (i.e., in which the only stable components $X_S$ are causes $X_{S,C}$) of $Y$ or unconditionally independent (e.g., causal spouses $X_{S,S}$)) of $Y$.
\end{enumerate}
However, this model is more general than these special cases. Also, for sake of simplicity, we assumed causal sufficiency here; however, in the presence of unobserved confounders, other types of stable features are also possible; for example, if we consider the possibility of unobserved confounders $U$ influencing $Y$ that are independent of $E$ (i.e., invariant across domains), then our method can also utilize stable features that are descendants of $U$ (i.e., ``siblings'' of $Y$).

\section{Datasets}\label{sec:app:datasets}
In our experiments, we consider five datasets: two (synthetic) numerical datasets and three image datasets. We now describe each dataset.

\paragraph{Synthetic: Anti-causal~(AC).} We consider an anti-causal synthetic dataset based on that of \citet[\S6.1]{jiang2022invariant} where data is generated according to the following structural equations:
\begin{minipage}{.6\textwidth}
    \begin{align*}
        Y &\gets \text{Rad}(0.5); \\
        X_S &\gets Y \cdot \text{Rad}(0.75); \\
        X_U &\gets Y \cdot \text{Rad}(\beta^e),\\
    \end{align*}%
\end{minipage}%
\begin{minipage}{.4\textwidth}
    \centering
    \tikzset{minimum size=0.9cm}
    \begin{tikzpicture}
        \node[shape=circle,draw=black,dashed] (beta_e) at (3, 0) {$\beta_e$};
        \node[shape=circle,draw=black] (XS) at (0,-.75) {$X_S$};
        \node[shape=circle,draw=black] (Y) at (1,0) {$Y$};
        \node[shape=circle,draw=black] (XU) at (2,-.75) {$X_U$};
        \path [->,line width=0.25mm](Y) edge (XU);
        \path [->,line width=0.25mm](Y) edge (XS);
        \path [->,line width=0.25mm](beta_e) edge (XU);
    \end{tikzpicture}
\end{minipage}
where the input $X=(X_S, X_U)$ and $\text{Rad}(\beta)$ denotes a Rademacher random variable that is $-1$ with probability $1 - \beta$ and $+1$ with probability $\beta$. Following \citet[\S6.1]{jiang2022invariant}, we create two training domains with $\beta_e \in \{0.95,0.7\}$, one validation domain with $\beta_e = 0.6$ and one test domain with $\beta_e = 0.1$.

\paragraph{Synthetic: Cause-effect with a direct $X_S$-$X_U$ dependence~(CE-DD).} We also consider a synthetic cause-effect dataset in which there is a direct dependence between $X_S$ and $X_U$. In particular, following \citet[App.~B]{jiang2022invariant}, data is generated according to the following structural equations:
\begin{minipage}{.6\textwidth}
    \begin{align*}
        X_S &\gets \text{Bern}(0.5); \\
        Y &\gets \text{XOR}(X_S, \text{Bern}(0.75)); \\
        X_U &\gets \text{XOR}(\text{XOR}(Y, \text{Bern}(\beta_e)), X_S),\\
    \end{align*}%
\end{minipage}%
\begin{minipage}{.4\textwidth}
    \centering
    \tikzset{minimum size=0.9cm}
    \begin{tikzpicture}
        \node[shape=circle,draw=black,dashed] (beta_e) at (3.5, 0) {$\beta_e$};
        \node[shape=circle,draw=black] (XS) at (-.25,-.75) {$X_S$};
        \node[shape=circle,draw=black] (Y) at (1,0) {$Y$};
        \node[shape=circle,draw=black] (XU) at (2.25,-.75) {$X_U$};
        \path [->,line width=0.25mm](XS) edge (XU);
        \path [->,line width=0.25mm](XS) edge (Y);
        \path [->,line width=0.25mm](Y) edge (XU);
        \path [->,line width=0.25mm](beta_e) edge (XU);
    \end{tikzpicture}
\end{minipage}
where the input $X=(X_S, X_U)$ and $\text{Bern}(\beta)$ denotes a Bernoulli random variable that is $1$ with probability $\beta$ and $0$ with probability $1 - \beta$. Note that $X_S \notindep X_U | Y$, since $X_S$ directly influences $X_U$. Following \citet[App.~B]{jiang2022invariant}, we create two training domains with $\beta_e \in \{0.95, 0.8\}$, one validation domain with $\beta_e = 0.2$, and one test domain with $\beta_e = 0.1$.

\paragraph{ColorMNIST.} We next consider the \texttt{ColorMNIST} dataset~\citep{arjovsky2020invariant}. This transforms the original \texttt{MNIST} dataset into a binary classification task (digit in 0--4 or 5--9) and then: (i) flips the label with probability 0.25, meaning that, across all 3 domains, digit shape correctly determines the label with probability 0.75; and (ii) colorizes the digit such that digit color (red or green) is a more informative but spurious feature (see~\Cref{fig:app:datasets}).

\paragraph{PACS.} We next consider the \texttt{PACS} dataset~\citep{li2017deeper}---a 7-class image-classification dataset consisting of 4 domains: photos~(P), art~(A), cartoons~(C) and sketches~(S), with examples shown in \cref{fig:app:datasets}. Model performances are reported for each domain after training on the other 3 domains.

\paragraph{Camelyon17.} Finally, in the additional experiments of \cref{sec:app:further_exps:cam}, we consider the \texttt{Camelyon17}~\cite{bandi2018camelyon17} dataset from the WILDS benchmark~\citep{wilds2021}: a medical dataset with histopathology images from 5 hospitals which use different staining and imaging techniques (see \cref{fig:app:datasets}). The goal is to determine whether or not a given image contains tumor tissue, making it a binary classification task across 5 domains (3 training, 1 validation, 1 test).

\begin{figure}[ht]
    \centering
    \includegraphics[width=0.75\textwidth]{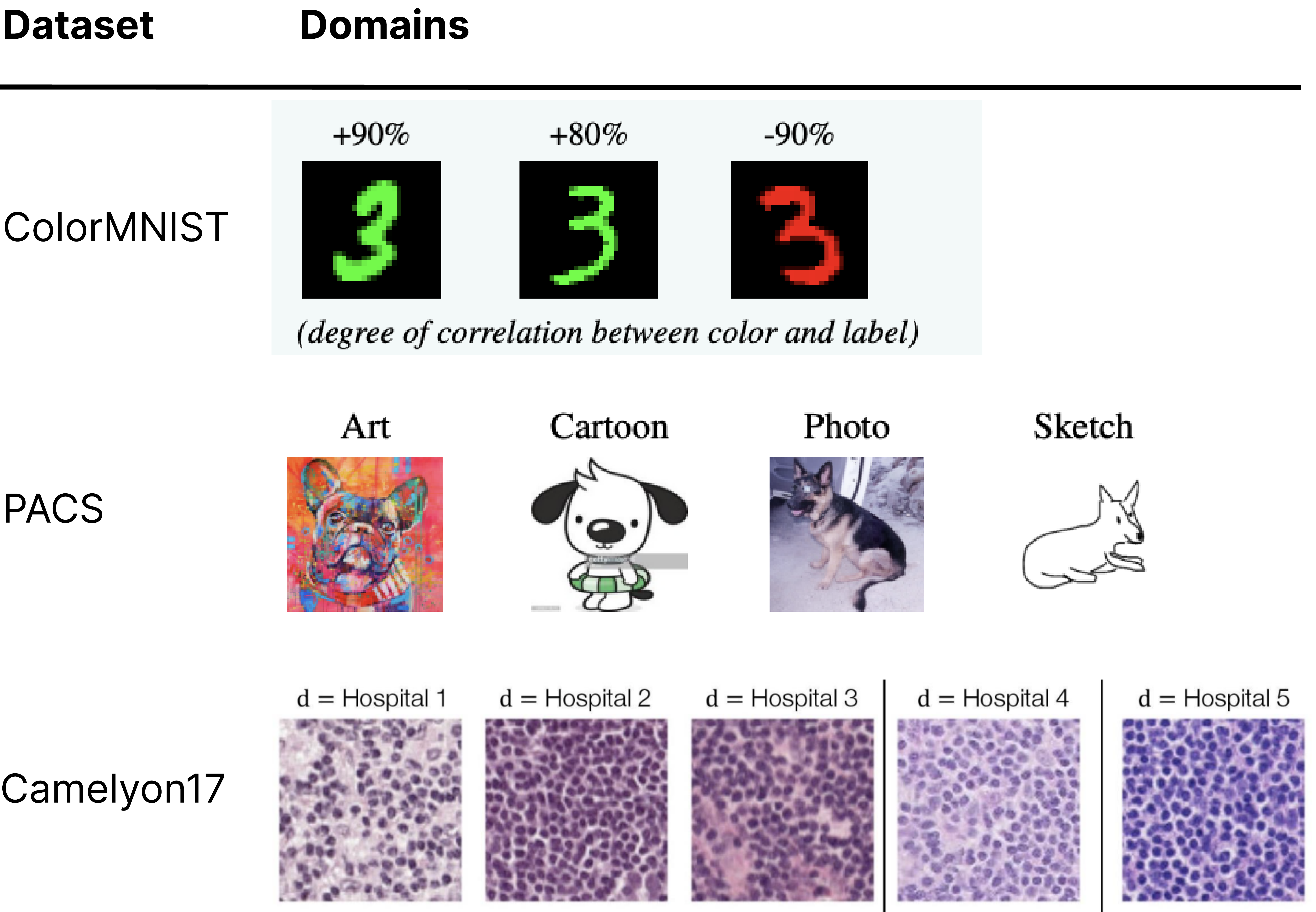}
    \caption{Examples from \texttt{ColorMNIST}~\citep{arjovsky2020invariant}, \texttt{PACS}~\citep{li2017deeper} and \texttt{Camelyon17}~\citep{bandi2018camelyon17}. Figure and examples based on \citet[Table~3]{gulrajani2020search} and \citet[Figure~4]{wilds2021}. For \texttt{ColorMNIST}, we follow the standard approach~\citep{arjovsky2020invariant} and use the first two domains for training and the third for testing. For \texttt{PACS}~\citep{li2017deeper}, we follow the standard approach~\citep{li2017deeper,gulrajani2020search} and use each domain in turn for testing, using the remaining three domains for training. For \texttt{Camelyon17}~\citep{bandi2018camelyon17}, we follow WILDS~\citep{wilds2021} and use the first three domains for training, the fourth for validation, and the fifth for testing.}
    \label{fig:app:datasets}
\end{figure}

\section{Further Experiments}\label{sec:app:further_exps}
This appendix provides further experiments which supplement those in the main text. In particular, it provides: (i) an ablation on the \texttt{ColorMNIST} dataset showing the effects of bias correction, post-hoc calibration and multiple rounds of pseudo-labelling on SFB's performance~(\ref{sec:app:further_exps:cmnist:ablations}); (ii) the performance of SFB on the \texttt{ColorMNIST} dataset when using different stability penalties~(\ref{sec:app:further_exps:cmnist:stability_penalties}); and (iii) results on a real-world medical dataset, \texttt{Camelyon17}~\citep{bandi2018camelyon17}, where we find that all methods perform similarly \textit{when properly tuned}~(\ref{sec:app:further_exps:cam}).

\subsection{ColorMNIST}\label{sec:app:further_exps:cmnist}
We now provide ablations on the \texttt{ColorMNIST} dataset to illustrate the effectiveness of the different components of SFB. In particular, we focus on bias correction and calibration, while also showing how multiple rounds of pseudo-labeling can improve performance in practice.

\subsubsection{Ablations}\label{sec:app:further_exps:cmnist:ablations}
\paragraph{Bias correction.} To adapt the unstable classifier in the test domain, SFB employs the bias-corrected adaptation algorithm of~\cref{alg:unsupervised_domain_adaptation} (or \cref{alg:unsupervised_domain_adaptation_multiclass} for the multi-class case) which corrects for biases caused by possible disagreements between the stable-predictor pseudo-labels $\hat{Y}$ and the true label $Y$. In this (sub)section, we investigate the performance of SFB with and without bias correction~(BC).

\paragraph{Calibration.} As discussed in \cref{sec:consistency}, correctly combining the stable and unstable predictions post-adaptation requires them to be properly calibrated. In particular, it requires the stable predictor $f_S$ to be calibrated with respect to the true labels $Y$ and the unstable predictor $f_U$ to be calibrated with respect to the pseudo-labels $\hat{Y}$. In this (sub)section, we investigate the performance of SFB with and without post-hoc calibration (in particular, simple temperature scaling~\citep{guo2017calibration}). More specifically, we investigate the effect of calibrating the stable predictor~(CS) and calibrating the unstable predictor~(CU).

\paragraph{Multiple rounds of pseudo-labeling.} While SFB learns the optimal unstable classifier $h^e_U$ in the test domain \textit{given enough unlabelled data}, \Cref{sec:theoretical_analysis:marginal_problem} discussed how more accurate pseudo-labels $\hat{Y}$ improve the sample efficiency of SFB. In particular, in a restricted-sample setting, more accurate pseudo-labels result in an unstable classifier $h^e_U$ which better harnesses $X_U$ in the test domain. With this in mind, note that, after adapting, we expect the joint predictions of SFB to be more accurate than its stable-only predictions. This raises the question: can we use these improved predictions to form more accurate pseudo-labels, and, in turn, an unstable classifier $h^e_U$ that leads to even better performance? Furthermore, can we repeat this process, using multiple rounds of pseudo-labelling to refine our pseudo-labels and ultimately $h^e_U$? While this multi-round approach loses the asymptotic guarantees of \cref{sec:consistency}, we found it to work quite well in practice. In this (sub)section, we thus investigate the performance of SFB with and without multiple rounds of pseudo-labeling~(PL rounds).

\begin{table}[h]
    \centering
    \caption{SFB ablations on \texttt{CMNIST}. Means and standard errors are over 3 random seeds. \textit{BC:} bias correction. \textit{CS:} post-hoc calibration of the stable classifier. \textit{CU:} post-hoc calibration of the unstable classifier. \textit{PL Rounds:} Number of pseudo-labeling rounds used. \textit{GT adpt:} ``ground-truth'' adaptation using true labels in the test domain.}
    \label{tab:ablations}
\begin{tabular}{l  c  c  c  c  c}
    \toprule
    Algorithm & Bias & \multicolumn{2}{c@{}}{Calibration}  & PL Rounds & Test Acc.	\\
     & Correction & Stable & Unstable  & & 	\\
    \midrule
    SFB no adpt.\ & & & & 1 & $70.6 \pm 1.8$ \\
    \midrule
    SFB & & & & 1 & $78.0 \pm 2.9$ \\
     \quad+BC & \checkmark & & & 1 & $83.4 \pm 2.8$ \\
     \quad+CS & & \checkmark & & 1 & $80.6 \pm 3.4$ \\
     \quad+CU & & & \checkmark& 1 & $76.6 \pm 2.4$ \\
     \quad+BC+CS+CU & \checkmark & \checkmark  & \checkmark& 1 & $84.4 \pm 2.2$ \\
     \quad+BC+CS & \checkmark & \checkmark & & 1 & $84.9 \pm 2.6$ \\
    \quad+BC+CS & \checkmark & \checkmark  & & 2 & $87.4 \pm 1.9$\\
    \quad+BC+CS & \checkmark & \checkmark  & & 3 & $88.1 \pm 1.8$\\
    \quad+BC+CS & \checkmark & \checkmark  & & 4 & $88.6 \pm 1.3$\\
    \quad+BC+CS & \checkmark & \checkmark  & & 5 & $88.7 \pm 1.3$\\
    \midrule
    SFB GT adpt.\ & \checkmark &  \checkmark & & 1 & $89.0 \pm 0.3$\\
\end{tabular}
\end{table}

\paragraph{Results.} \Cref{tab:ablations} reports the ablations of SFB on \texttt{ColorMNIST}. Here we see that: (i) bias correction significantly boosts performance (+BC); (ii) calibrating the stable predictor also boosts performance without~(+CS) and with~(+BC+CS) bias correction, with the latter leading to the best performance; (iii) calibrating the unstable predictor (with respect to the pseudo-labels) slightly hurts performance without~(+CU) and with~(+BC+CS+CU) bias correction and stable-predictor calibration; (iv) multiple rounds of pseudo-labeling boosts performance, while also reducing the performance variation across random seeds; (v) using bias correction, stable-predictor calibration and 5 rounds of pseudo-labeling results in near-optimal adaptation performance, as indicated by the similar performance of SFB when using true labels $Y$ to adapt $h^e_U$ (denoted ``SFB GT adpt.'' in \cref{tab:ablations}).

\subsubsection{Different stability penalties}\label{sec:app:further_exps:cmnist:stability_penalties}
In our experiments of \cref{sec:experiments}, we used IRM for the stability term of our SFB method, given in \cref{eq:objective}. However, as discussed in \cref{sec:algorithm}, many other approaches exist for enforcing stability~\citep{krueger21rex,shi2021gradient,puli2021out, eastwood2022probable,veitch2021counterfactual,makar2022causally,zheng2022causally}, and, in principle, any of these could be used. To illustrate this point, we now evaluate the performance of SFB when using different stability penalties, namely IRM~\cite{arjovsky2020invariant}, VREx~\cite{krueger21rex}, EQRM~\citep{eastwood2022probable} and CLOvE~\cite{wald2021calibration}. For all penalties, we use SFB with bias correction, post-hoc calibration of the stable predictor, and 5 rounds of pseudo-labeling (see the ablation study of~\Cref{sec:app:further_exps:cmnist:ablations}).

\begin{table}[h]
       \centering
       \caption{\texttt{CMNIST} test-domain accuracies for SFB with different stability penalties. Shown are the mean and standard error over 10 seeds. 
       }
       \begin{tabular}{@{}lcc@{}}
        \toprule
        \textbf{Algorithm} & \textbf{Without Adaptation} & \textbf{With Adaptation} \\
        \midrule
        SFB w.~IRM & $70.6 \pm 1.8$   & $88.7 \pm 1.3$  \\
        SFB w.~VREx & $72.5 \pm 1.0$  & $88.7 \pm 1.5$  \\
        SFB w.~EQRM & $69.0 \pm 2.8$  & $88.2 \pm 2.5$ \\
        SFB w.~CLOvE & $67.0 \pm 3.7 $  & $77.0 \pm 6.6$ \\
        \bottomrule
    \end{tabular}
    \label{tab:stability_reg}
    \end{table}

\subsubsection{Full results}\label{sec:app:further_exps:cmnist:full_results}
We now provide extended/full results of those provided in the main text. In particular, \cref{tab:cmnist_extended} represents an extended version of \cref{tab:cMNIST} in the main text, comparing against many more baseline methods. In addition, \cref{tab:cmnist_full_adaptive} provides the full numerical results for all adaptive baseline methods (described in \cref{app:exp_details:adaptive_baselines}), which correspond to the plots of \cref{fig:results_mnist} in the main text.

\begin{table}[h]
    \centering
    \caption{\texttt{CMNIST} test-domain accuracies. Mean and standard error are over 10 seeds. Extended/full version of \cref{tab:cMNIST} in the main text.}
    \label{tab:cmnist_extended}
    \begin{tabular}{@{}lc@{}}
    \toprule
    \textbf{Algorithm} & \textbf{Test Acc.} \\
    \midrule
    ERM   &  $27.9 \pm 1.5$   \\
    GroupDRO~\citep{sagawa2019distributionally} & $29.0 \pm 1.1$    \\
    IRM~\citep{arjovsky2020invariant} & $69.7 \pm 0.9$    \\
    SD~\citep{pezeshki2021gradient} & $70.3 \pm 0.6$   \\
    IGA~\citep{shi2022gradient} & $57.7 \pm 3.3$   \\
    Fishr~\citep{rame2022fishr} & $70.1 \pm 0.7$ \\
    V-REx~\citep{krueger21rex} & $71.6 \pm 0.5$   \\
    EQRM~\citep{eastwood2022probable} & $71.4 \pm 0.4$  \\

    SFB no adpt.\hspace{-3mm} &  $70.6 \pm 1.8$  \\
    SFB &  $\bm{88.1 \pm 1.8}$  \\
    \midrule
    Oracle no adpt.\hspace{-3mm} & $72.1 \pm 0.7$  \\
    Oracle & $89.9 \pm 0.1 $ \\
    \bottomrule
    \end{tabular}
\end{table}

\begin{table}[ht]
    \centering
    \resizebox{\textwidth}{!}{
    \begin{tabular}{@{}l|ccccccccccc@{}}  
    \toprule
    \multirow{2}{*}{{Algorithm}} & \multicolumn{11}{c}{ Domain (Color-Label Correlation)} \\ 
    
    \cmidrule(lr){2-12}
         & \multicolumn{1}{c}{1.0} & \multicolumn{1}{c}{0.9} & \multicolumn{1}{c}{0.8} & \multicolumn{1}{c}{0.7} & \multicolumn{1}{c}{0.6} & \multicolumn{1}{c}{0.5} & \multicolumn{1}{c}{-0.6} & \multicolumn{1}{c}{-0.7} & \multicolumn{1}{c}{-0.8} & \multicolumn{1}{c}{-0.9} & \multicolumn{1}{c}{-1.0}\\
        \midrule
        ERM & $97.5 \pm 0.3$ &  $88.5 \pm 0.4$ &  $79.7 \pm 0.4$ &  $70.6 \pm 0.5$ &  $61.4 \pm 0.7$ &  $52.5 \pm 0.4$ &  $43.5 \pm 0.7$ &  $34.7 \pm 0.7$ &  $25.1 \pm 0.5$ &  $16.4 \pm 0.4$ &  $7.6 \pm 0.6$ \\
        ERM+T3A & $98.1 \pm 0.2$ &  $88.9 \pm 0.4$ &  $79.8 \pm 0.4$ &  $70.4 \pm 0.5$ &  $61.0 \pm 0.8$ &  $51.7 \pm 0.4$ &  $42.3 \pm 0.7$ &  $33.0 \pm 0.6$ &  $23.1 \pm 0.4$ &  $13.8 \pm 0.5$ &  $4.5 \pm 0.5$ \\
        ERM+PL (last) & $97.6 \pm 0.2$ &  $88.6 \pm 0.3$ &  $79.7 \pm 0.4$ &  $70.6 \pm 0.5$ &  $61.4 \pm 0.8$ &  $52.5 \pm 0.4$ &  $43.4 \pm 0.7$ &  $34.6 \pm 0.7$ &  $25.0 \pm 0.5$ &  $16.2 \pm 0.3$ &  $7.4 \pm 0.6$ \\
        ERM+PL (all) & $\bm{100.0 \pm 0.0}$ &  $90.0 \pm 0.4$ &  $\bm{80.2 \pm 0.4}$ &  $70.1 \pm 0.4$ &  $59.9 \pm 0.8$ &  $50.1 \pm 0.4$ &  $40.0 \pm 0.5$ &  $30.1 \pm 0.6$ &  $19.6 \pm 0.3$ &  $9.9 \pm 0.3$ &  $0.0 \pm 0.1$ \\
        \midrule
         IRM & $70.6 \pm 2.1$ &  $70.3 \pm 1.9$ &  $70.4 \pm 1.7$ &  $70.2 \pm 1.1$ &  $69.9 \pm 0.7$ &  $\bm{69.9 \pm 0.7}$ &  $70.1 \pm 0.5$ &  $69.7 \pm 0.6$ &  $69.8 \pm 1.2$ &  $69.6 \pm 1.6$ &  $69.4 \pm 1.7$  \\
         IRM+T3A &  $72.3 \pm 1.7$ &  $71.2 \pm 1.6$ &  $70.7 \pm 1.6$ &  $70.2 \pm 1.0$ &  $69.8 \pm 0.7$ &  $\bm{69.9 \pm 0.7}$ &  $\bm{70.3 \pm 0.6}$ &  $70.6 \pm 0.5$ &  $71.6 \pm 1.1$ &  $72.4 \pm 1.7$ &  $73.4 \pm 1.9$ \\
         IRM+PL (last) & $70.8 \pm 2.2$ &  $70.5 \pm 1.9$ &  $70.5 \pm 1.7$ &  $70.2 \pm 1.1$ &  $69.9 \pm 0.7$ &  $\bm{69.9 \pm 0.6}$ &  $70.0 \pm 0.6$ &  $69.7 \pm 0.6$ &  $69.9 \pm 1.2$ &  $69.8 \pm 1.6$ &  $69.7 \pm 1.7$ \\ 
         IRM+PL (all) & $99.6 \pm 1.2$ &  $89.4 \pm 1.2$ &  $79.5 \pm 1.3$ &  $68.7 \pm 3.8$ &  $63.3 \pm 4.7$ &  $63.5 \pm 5.3$ &  $63.8 \pm 4.5$ &  $67.5 \pm 2.8$ &  $76.4 \pm 4.2$ &  $87.2 \pm 5.0$ &  $98.2 \pm 3.0$ \\ 
        \midrule
         SFB & $\bm{100.0 \pm 0.1}$ &  $\bm{90.5 \pm 0.5}$ &  $79.8 \pm 0.8$ &  $\bm{71.0 \pm 1.1}$ &  $\bm{70.9 \pm 0.3}$ &  $69.2 \pm 0.4$ &  $68.4 \pm 1.3$ &  $\bm{71.2 \pm 0.3}$ &  $\bm{79.3 \pm 1.2}$ &  $\bm{88.7 \pm 1.3}$ &  $\bm{98.9 \pm 1.5}$ \\ \bottomrule
    \end{tabular}
    }\vspace{2mm}
    \caption{\texttt{CMNIST} comparison with other test-time/source-free unsupervised domain adaptation methods. Means and standard errors are over 10 seeds. The largest mean per column/domain is in bold. ``last'': only last-layer updated. ``all'': all layers updated. \cref{fig:results_mnist} gives the corresponding plot.}\label{tab:cmnist_full_adaptive}
\end{table}

\subsection{Camelyon17}\label{sec:app:further_exps:cam}

We now provide results on the \texttt{Camelyon17}~\cite{bandi2018camelyon17} dataset. See \Cref{sec:app:datasets} for a description of the dataset, and \Cref{sec:app:exp_details:cam} for implementation details.

\cref{tab:syn-cam-pacs-results} shows that ERM, IRM and SFB perform similarly on \texttt{Camelyon17}. In line with \cite{gulrajani2020search}, we found that a properly-tuned ERM model can be difficult to beat on real-world datasets, particularly when the model is pre-trained on ImageNet and the dataset doesn't contain severe distribution shift. While we conducted this proper tuning for ERM, IRM, and SFB~(see \cref{sec:app:exp_details:cam}), doing so for ACTIR was non-trivial. We thus report the result from their paper~\citep[Table~1]{jiang2022invariant}, which is likely lower due to sub-optimal hyperparameters. In particular, we found that, for ERM and IRM, using a lower learning rate (1e-5 vs 1e-4) and early stopping (1 vs 25 epochs) improved performance by 20 percentage points, from around 70\%~\citep[Table~1]{jiang2022invariant} to around 90\%~(\Cref{tab:further-results:cam} below). It remains to be seen whether or not ACTIR can improve over a properly-tuned ERM model on \texttt{Camelyon17}.

While it may seem disappointing that SFB does not outperform the simpler methods of IRM and ERM on \texttt{Camelyon17}, we note that SFB can only be expected to do well when there is some gain in out-of-distribution performance from enforcing stability, e.g., when IRM outperforms ERM. The identical performances of IRM and ERM in \Cref{tab:further-results:cam} indicate that, with ImageNet pre-training and proper hyperparameter tuning, this is not the case for \texttt{Camelyon17}. Finally, despite the similar performances, we note that adapting SFB on \texttt{Camelyon17} still gives a small performance boost and reduces the variance across random seeds.

\begin{table}[h]
    \centering
    \caption{\texttt{Camelyon17} test-domain accuracies. Mean and standard errors are over 5 random seeds. $^\dagger$: Result taken from~\citep[Tab.~1]{jiang2022invariant} and is likely lower due to sub-optimal hyperparameters (they report $\approx$70\% for ERM and IRM).}
    \label{tab:further-results:cam}
    \begin{tabular}{lc}
    \toprule
      \textbf{Algorithm}    &  \textbf{Accuracy} \\
        \midrule
        ERM                 & $90.2 \pm 1.1$ \\
        IRM                 & $90.2 \pm 1.1$ \\
        ACTIR               & $77.7 \pm 1.7^\dagger$\hspace{-1.5mm} \\
        SFB no adpt.\       & $89.8 \pm 1.2$ \\
        SFB                 & $\bm{90.3 \pm 0.7}$ \\
        \bottomrule
    \end{tabular}
\end{table}

\section{Implementation Details}\label{sec:app:exp_details}
Below we provide further implementation details for the experiments of this work. Code is available at: \href{https://github.com/cianeastwood/sfb}{https://github.com/cianeastwood/sfb}.

\subsection{Adaptive baselines}\label{app:exp_details:adaptive_baselines}
For both the synthetic and \texttt{CMNIST} datasets, we compare against adaptive baseline methods by using pseudo-labeling~(PL, \cite{lee2013pseudo}) and test-time classifier adjustment~(T3A, \cite{iwasawa2021testtime}) on top of both ERM and IRM, choosing all adaptation hyperparameters using leave-one-domain-out cross-validation:
\begin{itemize}
    \item \textit{ERM/IRM + PL (last):} After training with ERM/IRM, we update the last layer using the model's own pseudo-labels~\citep{lee2013pseudo}.
    \item \textit{ERM/IRM + PL (all):} After training with ERM/IRM, we update all layers using the model's own pseudo-labels~\citep{lee2013pseudo}.
    \item \textit{ERM/IRM + T3A:} After training with ERM/IRM, we replace the classifier (final layer) with the template-based classifier of T3A~\citep{iwasawa2021testtime}. This means: (i)  computing template representations for each class using pseudo-labeled test-domain data; and (ii) classifying each example based on its distance to these templates.
\end{itemize}

\subsection{Synthetic experiments}\label{sec:app:exp_details:synthetic}
Following \citet{jiang2022invariant}, we use a simple three-layer network with 8 units in each hidden layer and the Adam optimizer, choosing hyperparameters using the validation domain.

For SFB, we sweep over $\lambda_S$ in $\{0.01, 0.1, 1, 5, 10, 20\}$ and $\lambda_C$ in $\{0.01, 0.1, 1\}$. For SFB's unsupervised adaptation, we employ the bias correction of \cref{alg:unsupervised_domain_adaptation} and calibrate the stable predictor using post-hoc temperature scaling, choosing the temperature to minimize the expected calibration error~(ECE, \cite{guo2017calibration}) on the validation domain. In addition, we use the Adam optimizer with an adaptation learning rate of $0.01$, choosing the number of adaptation steps in $[1, 20]$ (via early stopping) using the validation domain. Finally, we report the mean and standard error over 100 random seeds.

\subsection{ColorMNIST experiments}\label{sec:app:exp_details:cmnist}
\paragraph{Training details.} We follow the setup of \citet[\S6.1]{eastwood2022probable} and build on their open-source code\footnote{\href{https://github.com/cianeastwood/qrm/tree/main/CMNIST}{https://github.com/cianeastwood/qrm/tree/main/CMNIST}}. In particular, we use the original \texttt{MNIST} training set to create training and validation sets for each domain, and the original \texttt{MNIST} test set for the test sets of each domain. For all methods, we use a 2-hidden-layer MLP with 390 hidden units, the Adam optimizer, a learning rate of $0.0001$ with cosine scheduling, and dropout with $p\! =\! 0.2$. In addition, we use full batches (size $25000$), $400$ steps for ERM pre-training (which directly corresponds to the delicate penalty ``annealing'' or warm-up periods used by penalty-based methods on \texttt{ColorMNIST}~\citep{arjovsky2020invariant, krueger21rex, eastwood2022probable, zhang2022rich}), and $600$ total steps. We sweep over stability-penalty weights in $\{50, 100, 500, 1000, 5000\}$ for IRM, VREx and SFB and $\alpha$'s in $1 - \{e^{-100}, e^{-250}, e^{-500}, e^{-750}, e^{-1000}\}$ for EQRM. As the stable (shape) and unstable (color) features are conditionally independent given the label, we fix SFB's conditional-independence penalty weight $\lambda_C = 0$. As is the standard for \texttt{ColorMNIST}, we use a test-domain validation set to select the best settings (after the total number of steps), and then report the mean and standard error over 10 random seeds on a test-domain test set. As in previous works, the hyperparameter ranges of all methods are selected by peeking at test-domain performance. While far from ideal, this is quite difficult to avoid with \texttt{ColorMNIST} and highlights a core problem with hyperparameter selection in DG---as discussed by many previous works~\citep{arjovsky2020invariant, krueger21rex, gulrajani2020search, zhang2022rich, eastwood2022probable}.

\paragraph{SFB adaptation details.} For SFB's unsupervised adaptation in the test domain, we use a batch size of $2048$ and employ the bias correction of \cref{alg:unsupervised_domain_adaptation}. In addition, we calibrate the stable predictor using post-hoc temperature scaling, choosing the temperature to minimize the expected calibration error~(ECE, \cite{guo2017calibration}) across the two training domains. Again using the two training domains for hyperparameter selection, we sweep over adaptation learning rates in $\{0.1, 0.01\}$, choose the best adaptation step in $[5, 20]$ (via early stopping), and sweep over the number of pseudo-labeling rounds in $[1,5]$. Finally, we report the mean and standard error over 3 random seeds for adaptation.

\subsection{PACS experiments}\label{sec:app:exp_details:pacs}
We follow the setup of \citet[\S~6.4]{jiang2022invariant} and build on their open-source code\footnote{\label{footnote:actir-code}\href{https://github.com/ybjiaang/ACTIR}{https://github.com/ybjiaang/ACTIR}.}. This means using an ImageNet-pretrained ResNet-18, the Adam optimizer with a learning rate of $10^{-4}$, and choosing hyperparameters using leave-one-domain-out cross-validation (akin to K-fold cross-validation, except with domains). In particular, for each held-out test domain, we train 3 models---each time leaving out 1 of the 3 training domains for validation---and then select hyperparameters based on the best average performance across the held-out validation domains. Finally, we use the selected hyperparameters to retrain the model using all 3 training domains. 

For SFB, we sweep over $\lambda_S$ in $\{0.01, 0.1, 1, 5, 10, 20\}$, $\lambda_C$ in $\{0.01, 0.1, 1\}$, and learning rates in $\{10^{-4}, 50^{-4}\}$. For SFB's unsupervised adaptation, we employ the multi-class bias correction of \cref{alg:unsupervised_domain_adaptation_multiclass} and calibrate the stable predictor using post-hoc temperature scaling, choosing the temperature to minimize the expected calibration error~(ECE, \cite{guo2017calibration}) across the three training domains. In addition, we use the Adam optimizer with an adaptation learning rate of $0.01$, choosing the number of adaptation steps in $[1, 20]$ (via early stopping) using the training domains. Finally, we report the mean and standard error over 5 random seeds.

\subsection{Camelyon17 experiments}\label{sec:app:exp_details:cam}
We follow the setup of \citet[\S~6.3]{jiang2022invariant} and build on their open-source code\footnote{See~\Cref{footnote:actir-code}.}. This means using an ImageNet-pretrained ResNet-18, the Adam optimizer, and, following \cite{wilds2021}, choosing hyperparameters using the validation domain (hospital 4). In contrast to \cite{jiang2022invariant}, we use a learning rate of $10^{-5}$ for all methods, rather than $10^{-4}$, and employ early stopping using the validation domain. We found this to significantly improve all methods. E.g., the baselines of ERM and IRM improve by approximately 20 percentage points, jumping from $\approx 70$\% to $\approx 90$\%.

For SFB, we sweep over $\lambda_S$ in $\{0.01, 0.1, 1, 5, 10, 20\}$ and $\lambda_C$ in $\{0.01, 0.1, 1\}$. For SFB's unsupervised adaptation, we employ the bias correction of \cref{alg:unsupervised_domain_adaptation} and calibrate the stable predictor using post-hoc temperature scaling, choosing the temperature to minimize the expected calibration error~(ECE, \cite{guo2017calibration}) on the validation domain. In addition, we use the Adam optimizer with an adaptation learning rate of $0.01$, choosing the number of adaptation steps in $[1, 20]$ (via early stopping) using the validation domain. Finally, we report the mean and standard error over 5 random seeds.

\section{Further Related Work}\label{sec:app:related_work}

\textbf{Learning with noisy labels.} An intermediate goal in our work, namely learning a model to predict $Y$ from $X_U$ using pseudo-labels based on $X_S$, is an instance of \emph{learning with noisy labels}, a widely studied problem~\citep{scott2013classification,natarajan2013learning,blanchard2016classification,song2022learning,li2017learning,tanaka2018joint}.
Specifically, under the complementarity assumption ($X_S \indep X_U | Y$), the accuracy of the pseudo-labels on each class is independent of $X_U$, placing us in the so-called \emph{class-conditional random noise model}~\citep{scott2013classification,natarajan2013learning,blanchard2016classification}.
As we discuss in Section~\ref{sec:theoretical_analysis}, our theoretical insights about the special structure of pseudo-labels complement existing results on learning under this model. Our bias-correction (Eq.~\eqref{eq:pr_Y_given_X2}) for $P_{Y|X_U}$ is also closely related to the ``method of unbiased estimators''~\citep{natarajan2013learning}.
However, rather than correcting the loss used in ERM, our post-hoc bias correction applies to any calibrated classifier.
\looseness-1 Moreover, our ultimate goal, learning a predictor of $Y$ \emph{jointly} using $X_S$ and $X_U$, is not captured by learning with noisy labels.

\textbf{Co-training.}
\looseness-1 
Our use of stable-feature pseudo-labels to train a classifier based on a disjoint subset of (unstable) features is reminiscent of co-training~\citep{blum1998combining}.
Both methods benefit from conditional independence of the two feature subsets given the label to ensure that they provide complementary information.\footnote{See \citet[Theorem 1]{krogel2004multi,blum1998combining} for discussion of this assumption.} 
The key difference is that while co-training requires (a small number of) labeled samples from the \emph{same distribution as the test data}, our method instead uses labeled data from \emph{a different distribution} (training domains), along with the assumption of a stable feature. Additionally, while co-training iteratively refines two pre-trained classifiers symmetrically based on each other's predictions, our method only trains the unstable classifier, in a single iteration, using the stable classifier's predictions.

\paragraph{Boosting.} Our method of building a strong (albeit unstable) classifier using a weak (but stable) one is reminiscent of boosting, in which one ensembles weak classifiers to create a single strong classifier~\citep{schapire1990strength} and which inspires the name of our approach, ``stable feature boosting (SFB)''. However, whereas traditional boosting improves weak classifiers by examining how their predictions differ from true labels, our adaptation method utilizes only pseudo-labels and needs no true labels from the test domain. For example, while traditional boosting only refines functions of existing features, SFB can utilize new features that are only available in the test domain.

\paragraph{Learning theory for domain generalization.}
In addition to often assuming particular kinds of distribution shifts (e.g., covariate shift), existing error bounds for domain generalization often depend on some notion of distance between training and test domains (which does not vanish as more data is collected within domains)~\citep{blitzer2007learning,ben2010theory,zhao2018adversarial,zhao2019learning} or assume that the test domain has a particular structural relationship with the training domains (e.g., is a convex combination of training domains~\citep{mansour2008domain}). In contrast, under the structure of invariant and complementary features, we show that consistent generalization (i.e., with generalization error vanishing as more data is collected within domains) is possible in \emph{any} test domain. Additionally, whereas these prior works derive uniform convergence bounds (implying good generalization for ERM), our results demonstrate the benefit of an additional bias-correction step after training.
We also note that, in much of this literature, ``invariance'' refers to invariance of the covariate marginal distribution $P_X$ across domains; in contrast, our notion of stable features (\cref{def:stable_features}) refers to invariance of the conditional $P_{Y|X}$.

\section{Performance When Complementarity is Violated}\label{app:perf_no_compl}

\Cref{thm:OOV} justifies the bias correction of~\Cref{eq:pr_Y_given_X2} under the assumption that stable $X_S$ and unstable $X_U$ features are complementary, i.e., conditionally independent given the label $Y$. In this section, we discuss what happens if this assumption is relaxed and provide some intuition for why the bias correction appears to help even when complementarity is violated (as we observed in some of our experiments). In particular, we provide an argument that, in most cases, the bias correction should improve the accuracy of a naive classifier by making it agree more often with the Bayes-optimal classifier. While not a rigorous proof, we believe that this argument provides some insight into SFB's strong performance even when complementarity is violated.

In the absence of complementarity, the quantity $\Pr[\hat Y = 1|Y = 1, X_u = x_U]$ no longer reduces to the class-wise accuracy $\Pr[\hat Y = 1|Y = 1]$; thus we write more generally $\epsilon_1(x_U) = \Pr[\hat Y = 1|Y = 1, X_u = x_U]$, and we write $\bar{\epsilon_1} = \E_{X_U}[\epsilon_1(X_U)] = \Pr[\hat Y = 1|Y = 1]$ instead of simply $\epsilon_1$ for the accuracy on class $1$. Similarly, we write $\epsilon_0(x_U) = \Pr[\hat Y = 0|Y = 0, X_u = x_U]$, and we write $\bar{\epsilon_0} = \E_{X_U}[\epsilon_0(X_U)] = \Pr[\hat Y = 0|Y = 0]$ instead of simply $\epsilon_0$ for the accuracy on class $0$.

Let $f_*(x_U) = \Pr[Y = 1|X_U = x_U]$ denote the true regression function, and let $h_*(x_U) = 1\{f_*(x_U) > 0.5\}$ denote the Bayes-optimal classifier.
It is well known that the Bayes-optimal classifier $h_*$ has the maximum possible accuracy out of all classifiers. Thus, the sub-optimality of a classifier $h$ can be measured by the probability $S(h) = \Pr_{X_U}[h(X_U) \neq h_*(X_U)]$ that it disagrees with the Bayes-optimal classifier. Our next result expresses $S(h)$ in terms of the true regression function $f_*$, the functions $\epsilon_0$ and $\epsilon_1$, and the distribution of $X_U$, when $h$ is the bias-corrected classifier
\[h_{BC}(x_U) := 1 \left\{ \frac{\Pr[\hat Y = 1|X_U = x_U] + \bar\epsilon_0 - \bar\epsilon_1}{\bar\epsilon_0 + \bar\epsilon_1 - 1} > 0.5 \right\}\]
from \Cref{thm:OOV} or when $h$ is the ``naive'' classifier
\[h_{Naive}(x_U) := 1 \left\{ \Pr[\hat Y = 1|X_U = x_U] > 0.5 \right\}\]
that simply treats the pseudo-labels as true labels.
\begin{proposition}
    \begin{align*}
        S \left( h_{BC} \right)
            & = \Pr_{X_U} \left[ \left| f_*(X_U) - 0.5 \right| \leq \frac{\left| \epsilon_0(X_U) - \epsilon_1(X_U) - \E_{X_U}\left[\epsilon_0(X_U) - \epsilon_1(X_U) \right] \right|}{2(\epsilon_0(X_U) + \epsilon_1(X_U) - 1)} \right], \\
        & \text{ and } \\
        S \left( h_{Naive} \right)
            & = \Pr_{X_U} \left[ \left| f_*(X_U) - 0.5 \right| \leq \frac{\left| \epsilon_0(X_U) - \epsilon_1(X_U) \right|}{2(\epsilon_0(X_U) + \epsilon_1(X_U) - 1)} \right].
    \end{align*}
    \label{prop:accuracy_without_complementarity}
\end{proposition}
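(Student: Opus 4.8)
The plan is to reduce both $S(h_{Naive})$ and $S(h_{BC})$ to a computation of the $x_U$-dependent \emph{decision threshold} that each rule imposes on the true regression function $f_*(x_U) = \Pr[Y=1 \mid X_U = x_U]$. The single identity doing all the work is the law of total probability applied to the pseudo-label regression function $\eta(x_U) := \Pr[\hat Y = 1 \mid X_U = x_U]$. Conditioning on $Y$ and using $\Pr[\hat Y = 1 \mid Y=1, X_U = x_U] = \epsilon_1(x_U)$ and $\Pr[\hat Y = 1 \mid Y = 0, X_U = x_U] = 1 - \epsilon_0(x_U)$ gives
\[
\eta(x_U) = \epsilon_1(x_U) f_*(x_U) + (1-\epsilon_0(x_U))(1 - f_*(x_U)) = (1 - \epsilon_0(x_U)) + (\epsilon_0(x_U) + \epsilon_1(x_U) - 1) f_*(x_U).
\]
Crucially, no complementarity is invoked here; the whole point of this appendix is that $\epsilon_0,\epsilon_1$ are now allowed to depend on $x_U$. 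Under the pointwise informativeness assumption $\epsilon_0(x_U) + \epsilon_1(x_U) - 1 > 0$ (the local analogue of the hypothesis in \Cref{thm:OOV}), the map $f_* \mapsto \eta$ is affine and strictly increasing, so any rule that thresholds a monotone transform of $\eta$ is equivalent to thresholding $f_*$ itself at some level $t(x_U)$.

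First I would treat $h_{Naive}(x_U) = 1\{\eta(x_U) > 1/2\}$: solving $\eta(x_U) = 1/2$ for $f_*$ gives $t_{Naive}(x_U) = \frac{\epsilon_0(x_U) - 1/2}{\epsilon_0(x_U) + \epsilon_1(x_U) - 1}$, and a one-line simplification yields the signed shift $t_{Naive}(x_U) - \tfrac12 = \frac{\epsilon_0(x_U) - \epsilon_1(x_U)}{2(\epsilon_0(x_U)+\epsilon_1(x_U)-1)}$. For $h_{BC}$ I would use that, per \Cref{thm:OOV} and \eqref{eq:pr_Y_given_X2}, it thresholds the bias-corrected estimate $\frac{\eta(x_U) + \bar\epsilon_0 - 1}{\bar\epsilon_0 + \bar\epsilon_1 - 1}$ at $1/2$, where $\bar\epsilon_j = \E_{X_U}[\epsilon_j(X_U)]$ are the marginal class accuracies. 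Substituting the identity for $\eta$ and solving gives $t_{BC}(x_U) - \tfrac12 = \frac{(\epsilon_0(x_U)-\epsilon_1(x_U)) - (\bar\epsilon_0 - \bar\epsilon_1)}{2(\epsilon_0(x_U)+\epsilon_1(x_U)-1)}$. Taking absolute values reproduces exactly the two numerators in the statement, using $\bar\epsilon_0 - \bar\epsilon_1 = \E_{X_U}[\epsilon_0(X_U) - \epsilon_1(X_U)]$.

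It then remains to relate the disagreement probability to these threshold shifts. Since $h_*(x_U) = 1\{f_*(x_U) > 1/2\}$ and each rule is $1\{f_*(x_U) > t(x_U)\}$, the rule disagrees with $h_*$ precisely when $f_*(x_U)$ falls between $1/2$ and $t(x_U)$, and integrating the indicator of this event against $P_{X_U}$ produces the stated formulas, with the half-width $|t(x_U) - 1/2|$ appearing as the governing quantity. I expect the main obstacle to be exactly this last step: the genuine disagreement region is the \emph{one-sided} interval strictly between $1/2$ and $t(x_U)$, whereas the claimed expression $|f_*(X_U) - 1/2| \le |t(X_U) - 1/2|$ is symmetric about $1/2$ and is strictly larger whenever $f_*$ lies on the opposite side of $1/2$ from $t$. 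I would therefore either argue that the contributing mass concentrates on the side toward $t$ (so the symmetric description captures the operative quantity, consistent with the deliberately heuristic framing of this section), or restate the result directly in terms of the one-sided event $\{f_*(X_U) \text{ lies between } 1/2 \text{ and } t(X_U)\}$. The rigorous, self-contained core is the closed form for $t(x_U) - 1/2$ derived above, from which the subsequent informal comparison of $S(h_{BC})$ with $S(h_{Naive})$ proceeds.
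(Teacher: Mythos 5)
Your approach is the same as the paper's: its (explicitly labelled) proof sketch likewise characterizes disagreement with $h_*$ via the two inequalities $f(x)\le 0.5 < f_*(x)$ or $f_*(x)\le 0.5 < f(x)$ and then ``solves for $f_*(x)-0.5$''; your affine identity $\eta(x_U) = (1-\epsilon_0(x_U)) + (\epsilon_0(x_U)+\epsilon_1(x_U)-1)f_*(x_U)$ and the resulting threshold shifts $t(x_U)-\tfrac{1}{2}$ are exactly the algebra that sketch leaves implicit, and both reproduce the stated numerators. (You also correctly used the correction $+\bar\epsilon_0-1$ from Theorem~3.1; the $+\bar\epsilon_0-\bar\epsilon_1$ appearing in the appendix's displayed definition of $h_{BC}$ seems to be a typo, since only the former yields the claimed formula. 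The pointwise positivity $\epsilon_0(x_U)+\epsilon_1(x_U)-1>0$ you invoke is likewise implicitly assumed by the paper when it divides by that quantity.) The obstacle you flag in the last step is genuine and is not resolved by the paper either: the exact disagreement event is the one-sided interval of $f_*(x_U)$ strictly between $0.5$ and $t(x_U)$, whereas the displayed event $|f_*(X_U)-0.5|\le |t(X_U)-0.5|$ is symmetric about $0.5$ and strictly contains it---for instance it includes points with $f_*(x_U)\le 0.5 < t(x_U)$, where both classifiers output $0$ and hence agree. So the proposition as literally written over-counts (it is an upper bound on $S(h)$, with equality only under the one-sided restatement you propose or extra assumptions on where $f_*(X_U)$ falls relative to $0.5$ and $t(X_U)$); given that the surrounding discussion is avowedly heuristic and only the two numerators enter the subsequent comparison, your one-sided reformulation is the correct fix.
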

These two formulae for $S \left( h_{BC} \right)$ and $S \left( h_{Naive} \right)$ differ only in the numerator of the right-hand side; letting $Z := \epsilon_0(X_U) - \epsilon_1(X_U)$, the sub-optimality of $h_{BC}$ scales with $|Z - \E[Z]|$, whereas the sub-optimality of $h_{Naive}$ scales with $|Z|$. Intuitively, for all except very pathological random variables $Z$, $|Z - \E[Z]|$ is typically smaller than $|Z|$.
Although not a rigorous proof that the bias correction is always better than the naive classifier, this analysis provides an argument that, in most cases, the bias correction should improve on the accuracy of the naive classifier, by making it agree more often with the Bayes-optimal classifier.

We conclude by sketching the proof of Proposition~\ref{prop:accuracy_without_complementarity}:
\begin{proof}
    By construction, a thresholding classifier $h(x) = 1\{ f(x) > 0.5\}$ disagress with the Bayes-optimal classifier if and only if
    \[f(x) \leq 0.5 < f_*(x)
      \quad \text{ or } \quad
      f_*(x) \leq 0.5 < f(x).\]
    Expanding these inequalities in the cases $f(x) = \frac{\Pr[\hat Y = 1|X_U = x] + \bar\epsilon_0 - \bar\epsilon_1}{\bar\epsilon_0 + \bar\epsilon_1 - 1}$ and $f(x) = \Pr[\hat Y = 1|X_U = x]$ and solving for the quantity $f_*(x) - 0.5$ in each case gives Proposition~\ref{prop:accuracy_without_complementarity}.
\end{proof}

\end{document}